\documentclass{article}

     \PassOptionsToPackage{numbers, compress}{natbib}


\usepackage[final]{neurips_2019}



\usepackage[utf8]{inputenc} 
\usepackage[T1]{fontenc}    
\usepackage{hyperref}       
\usepackage{url}            
\usepackage{booktabs}       
\usepackage{amsfonts}       
\usepackage{nicefrac}       
\usepackage{microtype}      

\usepackage{amsmath,amsthm,amssymb}
\usepackage{subfigure} 
\usepackage{graphicx}
\usepackage{comment,color}
\usepackage{bm}
\usepackage{wrapfig}


\newcommand{\GP}{\mathcal{GP}}

\newcommand{\blue}{}

\newtheorem{defi}{Definition}[section]

\newtheorem{theorem}[defi]{Theorem}
\newtheorem{lemma}[defi]{Lemma} 
\newtheorem{proposition}[defi]{Proposition} 

\newtheorem{corollary}[defi]{Corollary}

\newtheorem{assumption}{Assumption}

\title{Convergence Guarantees\\ for Adaptive Bayesian Quadrature Methods}

%

\author{
  Motonobu Kanagawa$^{\text{\textdagger,\#}}$\thanks{Most of this work was done when MK was affiliated with University of T\"ubingen and MPI IS, Germany} ~and~Philipp Hennig$^{\text{\#}}$\\
  $^{\text{\textdagger}}$EURECOM, Sophia Antipolis, France\\
 $^{\text{\#}}$University of T\"ubingen and Max Planck Institute for Intelligent Systems, Tübingen, Germany \\
  \texttt{motonobu.kanagawa@eurecom.fr} \& \texttt{philipp.hennig@uni-tuebingen.de} \\
}

\begin{document}

\maketitle

\begin{abstract}
Adaptive Bayesian quadrature (ABQ) is a powerful approach to numerical integration that empirically compares favorably with Monte Carlo integration on problems of medium dimensionality (where non-adaptive quadrature is not competitive).
Its key ingredient is an acquisition function that changes as a function of  previously collected values of the integrand.
While this adaptivity appears to be empirically powerful, it complicates analysis. Consequently, there are no theoretical guarantees so far for this class of methods. In this work, for a broad class of adaptive Bayesian quadrature methods, we prove consistency, deriving non-tight but informative convergence rates. 
To do so we introduce a new concept we call \emph{weak adaptivity}.
Our results identify a large and flexible class of adaptive Bayesian quadrature rules as consistent, within which practitioners can develop empirically efficient methods.
\end{abstract}

\section{Introduction}
\label{sec:intro}


Numerical integration, or quadrature/cubature, is a fundamental task in many areas of science and engineering. 
This includes machine learning and statistics, where such problems arise when computing marginals and conditionals in probabilistic inference problems. 
In particular in hierarchical Bayesian inference, quadrature is generally required for the computation of the {\em marginal likelihood}, the key quantity for model selection, and for {\em prediction}, for which latent variables are to be marginalized out.

To describe the problem, let $\Omega$ be a compact metric space, \blue{$\mu$ be a finite positive Borel measure on $\Omega$ (such as the Lebesgue measure on compact $\Omega \subset \mathbb{R}^d$) that playes the role of reference measure}, $\pi: \Omega \to \mathbb{R}$ be a known density function, and $f: \Omega \to \mathbb{R}$ be an {\em integrand}, a known function such that the function value $f(x)\in\mathbb{R}$ can be obtained for any given query $x \in \Omega$. 
The task of quadrature is to numerically compute the integral (assumed to be intractable analytically)
$$
 \int f(x) \pi(x)d\mu(x).
$$
This is done by evaluating the function values $f(x_1),\dots,f(x_n)$ at design points $x_1,\dots,x_n \in \Omega$ and using them to approximate $f$ and the integral. The points $x_1,\dots,x_n$ should be ``good'' in the sense that $f(x_1),\dots,f(x_n)$ provide useful information for computing the the integral.

Monte Carlo methods are the classic alternative, where $x_1,\dots,x_n$ are randomly generated from a proposal distribution and the integral is approximated as $\sum_{i=1}^n w_i f(x_i)$, with $w_1,\dots,w_n$ being importance weights.
\blue{
Such Monte Carlo estimators achieve the convergence rate of order $n^{-1/2}$ for $n$ the number of design points, under a mild condition that $f$ is a bounded function.
This dimension-independent rate, and the mild condition about $f$, would be one of the reasons for the wide popularity and successes of Monte Carlo methods. 
However, as has been empirically known for practitioners and also theoretically investigated recently \citep{Agapiou17ImportanceSampling,Chatterjee18ImportanceSampling}, practical (i.e. Markov Chain) Monte Carlo can struggle in high dimensional integration, requiring a huge number of sample points to give a reliable estimate:\footnote{For instance, Wenliang et al.~\cite[Fig.~3]{WenSutStrGre19} used $10^{10}$ Monte Carlo samples to estimate the the normalizing constant of their model, on problems with medium dimensionality (10 to 50 dims).} the curse of dimensionality appears in the constant term in front of the rate $n^{-1/2}$ \cite[Sec.~2.5]{Liu01} \cite[Thm.~2.1 and Sec.~3.4]{Chatterjee18ImportanceSampling}.
Thus, there has been a number of attempts on developing methods that work better than Monte Carlo for high dimensional integration, such as Quasi Monte Carlo methods \cite{DicKuoSlo13}.
}

Adaptive Bayesian quadrature (ABQ) is a recent approach from machine learning that actively, sequentially and deterministiclaly selects design points to adapt to the target integrand \cite{osborne_active_2012,osborne_bayesian_2012,gunter_sampling_2014,Ace18,ChaGar19}.
It is an extension of Bayesian quadrature (BQ) \cite{Oha91,GhaZou03,BriOatGirOsbSej19,KarOatSar18}, a probabilistic numerical method for quadrature that makes use of prior knowledge about the integrand, such as smoothness and structure, via a Gaussian process (GP) prior. 
\blue{Convergence rates of BQ methods take the form $n^{-s/d}$ if the integrand $f$ is $s$-times differentiable, or of the form $\exp(- C n^{1/d})$  for some constant $C > 0$ if $f$ is infinitely smooth \cite{BriOatGirOsbSej19,KanSriFuk17}. 
While the rates can be faster than Monte Carlo, the dimension $d$ of the ambient space now appears in the rate, meaning that BQ also suffers from the curse of dimensionality.}

\blue{ABQ has been developed to improve upon such vanilla BQ methods.}
One drawback of vanilla BQ is that the Gaussian process model prevents the use of certain kinds of relevant knowledge about the integrand, such as it being {\em positive} (or non-negative), because they cannot be encoded in a Gaussian distribution.
Positive integrands are ubiquitous in machine learning and statistics, where integration tasks emerge in the marginalization and conditioning of probability density functions, which are positive by definition.  
In ABQ such prior knowledge is modelled by describing the integrand as given by a certain {\em transformation} (or warping) of a GP ---
for instance, an exponentiated GP \cite{osborne_bayesian_2012,osborne_active_2012,ChaGar19} or a squared GP \citep{gunter_sampling_2014}.
ABQ methods with such transformations have empirically been shown to improve upon both standard BQ and Monte Carlo, leading to state-of-the-art wall-clock time performance on problems of medium dimensionality.

If the transformation is nonlinear, as in the examples above, the transformed GP no longer allows an analytic expression for its posterior process, and thus approximations are used to obtain a tractable acquisition function. 
In contrast to the posterior covariance of GPs, these acquisition functions then become dependent on previous observations, making the algorithm adaptive. This twist seems to be critical for ABQ methods' superior empirical performance, but it complicates analysis. 
Thus, there has been no theoretical guarantee for their convergence, rendering them heuristics in practice.
This is problematic since integration is usually an intermediate computational step in a larger system, and thus must be reliable. This paper provides the first convergence analysis for ABQ methods.

In Sec.~\ref{sec:BQ} we review ABQ methods, and formulate a generic class of acquisition functions that cover those of \cite{gunter_sampling_2014,Ace18,Ace19_acquisition,ChaGar19}. Our convergence analysis is done for this class. 
We also derive an upper-bound on the quadrature error using a transformed integrand, which is applicable to any design points and given in terms of the GP posterior variance (Prop.~\ref{prop:quadrature-error}).
In Sec.~\ref{sec:connetion}, we establish a connection between ABQ and certain {\em weak greedy algorithms} (Thm.~\ref{theo:ABQ-weak-greedy}). This is based on a new result that the scaled GP posterior variance can be interpreted in terms of a certain projection in a Hilbert space (Lemma \ref{lemma:post-var}).
Using this connection, we derive convergence rates of ABQ methods in Sec.~\ref{sec:main-result}.
For ease of the reader, we present a high-level overview of the proof structure in Fig.~\ref{fig:diagram}.
\begin{figure}[ht]
    \centering
    \includegraphics[width=0.9\linewidth]{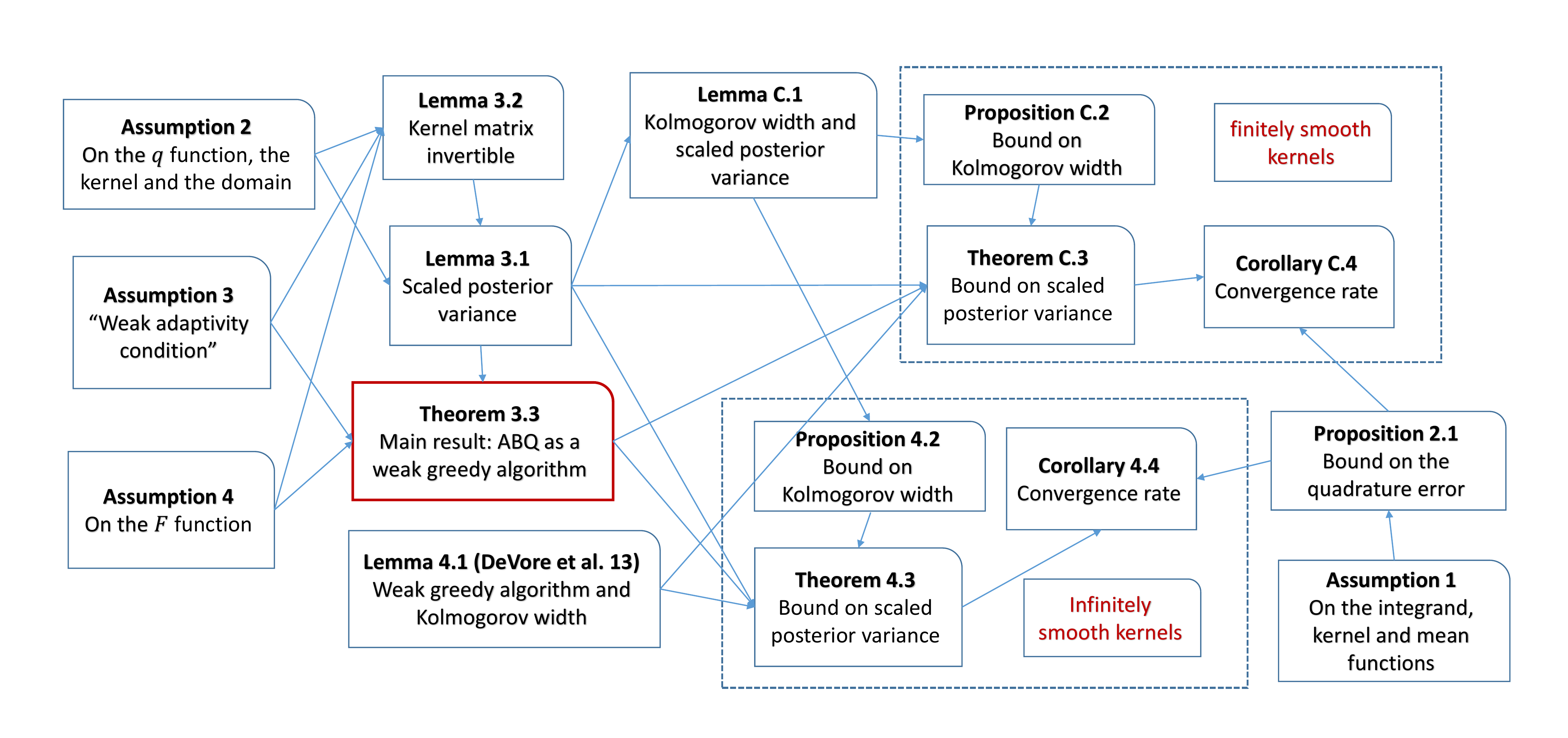}
        \vspace{-5mm}
    \caption{Relationships between the various auxiliary results and how they yield the main results.}
    \label{fig:diagram}
\end{figure}

\blue{
The key to our analysis is a relatively general notion for active exploration that we term \emph{weak adaptivity}. 
An ABQ method that satisfies weak adaptivity (and a few additional technical constraints) is consistent, and the conceptual space of weakly adaptive BQ methods is large and flexible. We hope that our results spark a practical interest in the design of empirically efficient acquisition functions, to extend the reach of quadrature to problems of higher and higher dimensionality.
}


\paragraph{Related Work.}
For standard BQ methods, and the corresponding {\em kernel quadrature} rules, convergence properties have been studied extensively \cite[e.g.~][]{BriOatGirOsb15,KanSriFuk16,Bac17,Xi_ICML2018,KarOatSar18,Chen_ICML2018,BriOatGirOsbSej19,OatCocBriGir19,KanSriFuk17}. 
Some of these works theoretically analyze methods that deterministically generate design points \cite{CheWelSmo10,BacJulObo12,huszar2012herdingbmcuai,BriOatGirOsb15,Chen_ICML2018}. These methods are, however, {\em not} adaptive, as design points are generated independently to the function values of the target integrand.

Our analysis is technically related to the work by Santin and Haasdonk \cite{SanHaa17}, which analyzed the so-called {\em P-greedy algorithm}, an algorithm to sequentially obtain design points using the GP posterior variance as an acquisition function.
Our results can be regarded as a generalization of their result so that the acquisition function can include i) a scaling and a transformation of the GP posterior variance and ii) a data-dependent term that takes care of adaptation; see \eqref{eq:adaptive-BQ} for details.



Adaptive methods have also been theoretically studied in the information-based complexity literature \cite{Nov95_nonsymmetric,Nov95_width,Nov96_power,Nov16}. 
The key result is that optimal points for quadrature can be obtained {\em without} observing actual function values, \emph{if} the hypothesis class of functions is {\em symmetric and convex} (e.g.~the unit ball in a Hilbert space): in this case adaptation does {\em not} help improve the performance. 
On the other hand, it the hypothesis class is either {\em asymmetric} or {\em nonconvex}, then adaptation may be helpful.
For instance, a class of positive functions is assymetric because only one of $f$ or $-f$ can be positive.
These results thus support the choice of acquisition functions of existing ABQ methods, where the adaptivity to function values is motivated by modeling the positivity of the integrand. 


     

\paragraph{Notation.}
$\mathbb{N}$ denotes the set of positive integers, 
$\mathbb{R}$ the real line, and $\mathbb{R}^d$ the $d$-dimensional Euclidean space for $d \in \mathbb{N}$.
$L_p(\Omega)$ for $1 \leq p < \infty$ is the Banach space of $p$-integrable functions, and $L_\infty(\Omega)$ is that of essentially bounded functions.

\section{Adaptive Bayesian Quadrature (ABQ)} 
\label{sec:BQ}
We describe here ABQ methods, and present a generic form of acquisition functions that we analyze. 
We also derive an upper-bound on the quadrature error using a transformed integrand in terms of the GP posterior variance, motivating our analysis in the later sections.
Throughout the paper we assume that the domain $\Omega$ is a compact metric space and $\mu$ is a finite positive Borel measure on $\Omega$.

\subsection{Bayesian Quadrature with Transformation} \label{sec:BQ_transform}

ABQ methods deal with an integrand $f$ that is a priori known to satisfy a certain constraint, for example $f(x)>0\;\forall x\in \Omega$.
Such a constraint is modeled by considering a certain transformation $T: \mathbb{R} \to \mathbb{R}$, and assuming that there exists a latent function $g: \Omega \to \mathbb{R}$ such that the integrand $f$ is given as the transformation of $g$, i.e.,~$f(x) = T(g(x)), x \in \Omega$.
Examples of $T$ for modeling the positivity include i) the square transformation $T(y) = \alpha + \frac{1}{2} y^2$, where $\alpha > 0$ is a small constant such that $0 < \alpha < \inf_{x \in \Omega} f(x)$, assuming that $f$ is bounded away from $0$ \cite{gunter_sampling_2014}; and ii) the exponential transformation $T (y) = \exp(y)$ \cite{osborne_bayesian_2012,osborne_active_2012,ChaGar19}.
Note that the identity map $T(y) = y$ recovers standard Bayesian quadrature (BQ) methods \cite{Oha91,GhaZou03,BriOatGirOsb15,KarOatSar18}.
%
%
%
To model the latent function $g$, a Gaussian process (GP) prior \cite{RasWil06} is placed over $g$:
\begin{equation} \label{eq:GP-prior}
    g \sim \GP(m,k)
\end{equation}
where $m: \Omega \to \mathbb{R}$ is a mean function and $k: \Omega \times \Omega$ is a covariance kernel. 
Both $m$ and $k$ should be chosen to capture as much prior knowledge or belief about $g$ (or its transformation $f$) as possible, such as smoothness and correlation structure; see e.g.~\cite[Chap.~4]{RasWil06}.

Assume that a set of points $X_n := \{x_1,\dots,x_n\} \subset \Omega$ are given, such that the kernel matrix $K_n := (k(x_i,x_j))_{i,j=1}^n \subset \mathbb{R}^{n \times n}$ is invertible.
Given the function values $f(x_1),\dots,f(x_n)$, define $g_i(x) := z_i \in \mathbb{R}$ such that $T(z_i) = f(x_i)$ for $i = 1,\dots,n$.
Treating $g(x_1),\dots,g(x_n)$ as ``observed data without noise,'' the posterior distribution of $g$ under the GP prior \eqref{eq:GP-prior} is again given as a GP
$$
g| (x_i, g(x_i))_{i=1}^n \sim \GP(m_{g,X_n}, k_{X_n}),
$$
where $m_{g,X_n}:\Omega \to \mathbb{R}$ is the posterior mean function and $k_{X_n}: \Omega \times \Omega \to \mathbb{R}$ is the posterior covariance kernel given by (see e.g.~\cite{RasWil06})
\begin{eqnarray}
m_{g, X_n}(x) &:=& m(x) + {\bm k}_n(x)^\top K_n^{-1} ({\bm g}_n - {\bm m}_n ), \label{eq:post-mean-func} \\
k_{X_n}(x,x') &:=& k(x,x') - {\bm k}_n(x)^\top K_n^{-1} {\bm k}_n(x'), \label{eq:post-cov-func}
\end{eqnarray}
where ${\bm k}_n(x) := (k(x,x_1),\dots,k(x,x_n))^\top \in \mathbb{R}^n$, ${\bm g}_n := (g(x_1),\dots,g(x_n))^\top \in \mathbb{R}^n$ and ${\bm m}_n = (m(x_1),\dots,m(x_n))^\top  \in \mathbb{R}^n$.
%
Then a quadrature estimate\footnote{The point is that, in contrast to the integral over $f$, this estimate should be analytically tractable. This depends on the choices for $T$, $k$ and $\pi$. For instance, for $T(y) = y$ or $T(y) = \alpha + \frac{1}{2}y^2$ with $k$ and $\pi$ Gaussian, the estimate can be obtained analytically \cite{gunter_sampling_2014}, while for $T(y) = \exp(y)$ one needs approximations; \cite[cf.][]{ChaGar19}.} for the integral $\int f(x)\pi(x)d\mu(x)$ is given as the integral $\int T(m_{g,X_n}(x))\pi(x)d\mu(x)$ of the transformed  posterior mean function $T(m_{g,X_n})$, \blue{or as the integral of the posterior expectation of the transformation $\int \mathbb{E}_{\acute{g}} T(\acute{g}(x)) \pi(x) d\mu(x)$, where $\acute{g} \sim \GP(m_{g,X_n},k_{X_n})$ is the posterior GP}.
The posterior covariance for $\int f(x)\pi(x)$ is given similarly; see \cite{ChaGar19,gunter_sampling_2014} for details.

\subsection{A Generic Form of Acquisition Functions}
\label{sec:generic-acquisition}

The key remaining question is how to select good design points $x_1,\dots,x_n \in \Omega$.
ABQ methods sequentially and deterministically generate $x_1,\dots,x_n$ using an {\em acquisition function}. Many of the acquisition functions can be formulated in the following generic form:
\begin{equation} \label{eq:adaptive-BQ} x_{\ell+1} \in \arg\max_{x\in\Omega} a_\ell(x), \quad \text{where}\quad a_\ell(x) = F\left( q^2(x) k_{X_\ell}(x,x) \right) b_\ell(x), \quad (\ell = 0, 1,\dots,n-1)
\end{equation}
where $k_{X_0}(x,x) := k(x,x)$, $F: [0,\infty) \to [0,\infty)$ is an increasing function such that $F(0) = 0$, $q:\Omega \to (0,\infty)$ and $b_\ell: \Omega \to \mathbb{R}$ is a function that may change at each iteration $\ell$. e.g., it may depend on the function values $f(x_1),\dots,f(x_\ell)$ of the target integrand $f$. 
\blue{Intuitively, $b_\ell(x)$ is a data-dependent term that makes the point selection adaptive to the target integrand, $q(x)$ may be seen as a proposal density in importance sampling, and $F$ determines the balance between the uncertainty sampling part $q^2(x) k_{X_\ell}(x,x)$ and the adaptation term $b_\ell(x)$.}
We analyse ABQ with this generic form \eqref{eq:adaptive-BQ}, aiming for results with wide applicability. Here are some representative choices.

{\bf Warped Sequential Active Bayesian Integration (WSABI) \cite{gunter_sampling_2014}:}
Gunter et al.~\cite{gunter_sampling_2014} employ the square transformation $f(x) = T(g(x)) = \alpha + \frac{1}{2} g^2(x)$ with two acquisition functions: 
i) WSABI-L \cite[Eq.~15]{gunter_sampling_2014}, which is based on linearization of $T$ and recovered with $F(y) = y$, $q(x) = \pi(x)$ and $b_\ell(x) = m_{g,X_\ell}^2(x)$; and
ii) WSABI-M \cite[Eq.~14]{gunter_sampling_2014}, the one based on moment matching given by  $F(y) = y$, $q(x) = \pi(x)$ and $b_\ell(x) = \frac{1}{2} k_{X_\ell}(x,x) + m_{g, X_\ell}^2(x)$.

{\bf Moment-Matched Log-Transformation (MMLT) \cite{ChaGar19}:}
Chai and Garnett \cite[3rd raw in Table 1]{ChaGar19} use the exponential transformation $f(x) = T(g(x)) = \exp(g(x))$ with the acquisition function given by $F(y) = \exp(y) - 1$ , $q(x) = 1$ and $b_\ell(x) = \exp\left(k_{X_\ell}(x,x) + 2 m_{g,X_\ell}(x) \right)$.

{\bf Variational Bayesian Monte Carlo (VBMC) \cite{Ace18,Ace19_acquisition}:}
Acerbi \cite[Eq.~2]{Ace19_acquisition} uses the identity $f(x) = T(g(x)) = g(x)$ with the acquisition function given by $F(y) = y^{\delta_1}$, $q(x) = 1$ and $b_\ell(x) = \pi_\ell^{\delta_2} (x) \exp(\delta_3 m_{g,X_\ell}(x))$, where $\pi_\ell$ is the variational posterior at the $\ell$-th iteration and $\delta_1, \delta_2, \delta_3 \geq 0$ are constants: setting $\delta_1 = \delta_2 = \delta_3 = 1$ recovers the original acquisition function \cite[Eq.~9]{Ace18}.
Acerbi \cite[Sec.~2.1]{Ace18} considers an integrand $f$ that is defined as the logarithm of a joint density, while $\pi$ is an intractable posterior that is gradually approximated by the variational posteriors $\pi_\ell$.   

For the WSABI and MMLT, the acquisition function \eqref{eq:adaptive-BQ} is obtained by a certain approximation for the posterior variance of the integral $\int f(x) \pi(x) d\mu(x) = \int T(g(x)) \pi(x) d\mu(x)$; thus this is a form of {\em uncertainty sampling}.
Such an approximation is needed because the posterior variance of the integral is not available in closed form, due to the nonlinear transformation $T$.
The resulting acquisition function includes the data-dependent term $b_\ell(x)$, which encourages exploration in regions where the value of $g(x)$ is expected to be large.  
This makes ABQ methods adaptive to the target integrand. Alas, it also complicates analysis. Thus there has been no convergence guarantee for these ABQ methods; which is what we aim to remedy in this paper.

\subsection{Bounding the Quadrature Error with Transformation} \label{sec:bound-quadrature-error}

Our first result, which may be of independent interest, is an upper-bound on the error for the quadrature estimate  $\int T(m_{g,X_n}(x))\pi(x)d\mu(x)$ based on a transformation described in Sec.~\ref{sec:BQ_transform}. 
It is applicable to any point set $X_n = \{x_1, \dots, x_n\}$, and the bound is given in terms of the posterior variance $k_{X_n}(x,x)$.
This gives us a motivation to study the behavior of this quantity for $x_1,\dots,x_n$ generated by ABQ \eqref{eq:adaptive-BQ} in the later sections.
\blue{
Note that the essentially same bound holds for the other estimator $\int \mathbb{E}_{\acute{g}} T(\acute{g}(x)) \pi(x) d\mu(x)$ with $\acute{g} \sim \GP(m_{g,X_n},k_{X_n})$, which we describe in Appendix \ref{sec:bound-quad-error-another}.
}

To state the result, we need to introduce the Reproducing Kernel Hilbert Space (RKHS) of the covariance kernel $k$ of the GP prior. See e.g.~\cite{SchSmo02,SteChr2008} for details of RKHS's, and \cite{Berlinet2004,KanHenSejSri18} for discussions of their close but subtle relation to the GP notion.
Let $\mathcal{H}_k$ be the RKHS associated with the covariance kernel $k$ of the GP prior \eqref{eq:GP-prior}, with $\left< \cdot, \cdot \right>_{\mathcal{H}_k}$ and $\| \cdot \|_{\mathcal{H}_k}$ being its inner-product and norm, respectively. 
$\mathcal{H}_k$ is a Hilbert space consisting of functions on $\Omega$, such that i) $k(\cdot, x) \in \mathcal{H}_k$ for all $x \in \Omega$, and ii) $h(x) = \left<k(\cdot,x), h\right>_{\mathcal{H}_k}$ for all $h \in \mathcal{H}_k$ and $x \in \Omega$ (the reproducing property), where $k(\cdot,x)$ denotes the function of the first argument such that $y \to k(y,x)$, with $x$ being fixed.
As a set of functions, $\mathcal{H}_k$ is given as the closure of the linear span of such functions $k(\cdot,x)$, i.e., $\mathcal{H}_k = \overline{{\rm span} \left\{ k(\cdot,x) \mid x \in \Omega \right\}}$, meaning that any $h \in \mathcal{H}_k$ can be written as $h = \sum_{i=1}^\infty \alpha_i k(\cdot,y_i)$ for some $(\alpha_i)_{i=1}^\infty \subset \mathbb{R}$ and $(y_i)_{i=1}^\infty \subset \Omega$ such that $\| h \|_{\mathcal{H}_k}^2 = \sum_{i,j=1}^\infty \alpha_i \alpha_j k(y_i,y_j) < \infty$.
We are now ready to state our assumption:
\begin{assumption} \label{as:conditions-for-b-bounds}
 $T: \mathbb{R} \to \mathbb{R}$ is continuously differentiable.
 For $f: \Omega \to \mathbb{R}$, there exists $g:\Omega \to \mathbb{R}$ such that $f(x) = T(g(x)), x \in \Omega$ and that $\tilde{g} := g - m \in \mathcal{H}_k$.
 It holds that $\| k \|_{L_\infty(\Omega)} := \sup_{x \in \Omega} k(x,x)< \infty$ and $\| m \|_{L_\infty(\Omega)} := \sup_{x \in \Omega} |m(x)| < \infty$. 
\end{assumption}
The assumption $\tilde{g} := g - m \in \mathcal{H}_k$ is common in theoretical analysis of standard BQ methods, where $T(y) = y$ and $m = 0$ \cite[see e.g.][and references therein]{BriOatGirOsb15,Xi_ICML2018,BriOatGirOsbSej19}.
This assumption may be weakened by using proof techniques developed for standard BQ in the misspecifid setting \cite{KanSriFuk16,KanSriFuk17}, but we leave it for a future work. 
The other conditions on $T$, $k$ and $m$ are weak.

\begin{proposition} \label{prop:quadrature-error} \textbf{(proof in Appendix \ref{sec:proof-bound-quadrature})}
Let $\Omega$ be a compact metric space, $X_n = \{ x_1,\dots,x_n \} \subset \Omega$ be such that the kernel matrix $K_n = (k(x_i,x_j))_{i,j=1}^n \in \mathbb{R}^{n \times n}$ is invertible, and $\pi:\Omega \to [0,\infty)$ and $q: \Omega \to [0,\infty)$ be continuous functions such that $C_{\pi/q} := \int_{\Omega} \pi(x)/q(x) d\mu(x) < \infty$.
Suppose that Assumption \ref{as:conditions-for-b-bounds} is satisfied.
Then there exists a constant $C_{\tilde{g},m,k,T}$ depending only on $\tilde{g}$, $m$, $k$ and $T$ such that 
\begin{eqnarray*}
\left| \int f(x) \pi(x) d\mu(x) - \int T\left(m_{g,X_n}(x)\right) \pi(x) d\mu(x) \right| 
\leq C_{\tilde{g},m,k,T}  C_{\pi/q} \| \tilde{g} \|_{\mathcal{H}_k} \sup_{x \in \Omega} q(x) \sqrt{k_{X_n}(x,x)}.
\end{eqnarray*} 
\end{proposition}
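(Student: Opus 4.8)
The plan is to reduce the error to the pointwise discrepancy $g(x)-m_{g,X_n}(x)$ by a first-order (mean value) argument on $T$, control that discrepancy by the classical reproducing-kernel interpolation bound — the \emph{power function} bound — and finally insert the importance weight $\pi/q$ to recover the stated form.

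First I would record uniform bounds on the two functions whose difference we compare. Since $\tilde g\in\mathcal H_k$, the reproducing property and Cauchy--Schwarz give $|\tilde g(x)|=|\langle\tilde g,k(\cdot,x)\rangle_{\mathcal H_k}|\le\|\tilde g\|_{\mathcal H_k}\sqrt{k(x,x)}\le\|\tilde g\|_{\mathcal H_k}\,\|k\|_{L_\infty(\Omega)}^{1/2}$, so $g=m+\tilde g$ obeys $\|g\|_{L_\infty(\Omega)}\le M:=\|m\|_{L_\infty(\Omega)}+\|\tilde g\|_{\mathcal H_k}\|k\|_{L_\infty(\Omega)}^{1/2}$. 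Moreover $s_n:=m_{g,X_n}-m={\bm k}_n(\cdot)^\top K_n^{-1}\tilde{\bm g}_n$, with $\tilde{\bm g}_n:=(\tilde g(x_1),\dots,\tilde g(x_n))^\top={\bm g}_n-{\bm m}_n$, is the minimum-norm interpolant of $\tilde g$ on $X_n$, i.e.\ the orthogonal projection $P_{V_n}\tilde g$ of $\tilde g$ onto $V_n:={\rm span}\{k(\cdot,x_1),\dots,k(\cdot,x_n)\}$; hence $\|s_n\|_{\mathcal H_k}\le\|\tilde g\|_{\mathcal H_k}$ and the same estimate gives $\|m_{g,X_n}\|_{L_\infty(\Omega)}\le M$. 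As $T$ is continuously differentiable, $L:=\sup_{|y|\le M}|T'(y)|<\infty$ depends only on $\tilde g,m,k,T$; this will be the constant $C_{\tilde g,m,k,T}$. For each $x\in\Omega$ the mean value theorem furnishes $\xi_x$ between $g(x)$ and $m_{g,X_n}(x)$, so $|\xi_x|\le M$, with $|T(g(x))-T(m_{g,X_n}(x))|=|T'(\xi_x)|\,|g(x)-m_{g,X_n}(x)|\le L\,|g(x)-m_{g,X_n}(x)|$.

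Second I would prove the pointwise bound $|g(x)-m_{g,X_n}(x)|=|\tilde g(x)-s_n(x)|\le\|\tilde g\|_{\mathcal H_k}\sqrt{k_{X_n}(x,x)}$. Since $\tilde g-s_n\perp V_n$ and $P_{V_n}k(\cdot,x)$ has coefficient vector $K_n^{-1}{\bm k}_n(x)$,
\[
\tilde g(x)-s_n(x)=\langle\tilde g-s_n,\,k(\cdot,x)\rangle_{\mathcal H_k}=\langle\tilde g-s_n,\,k(\cdot,x)-P_{V_n}k(\cdot,x)\rangle_{\mathcal H_k},
\]
so Cauchy--Schwarz together with $\|\tilde g-s_n\|_{\mathcal H_k}\le\|\tilde g\|_{\mathcal H_k}$ gives
\[
|\tilde g(x)-s_n(x)|\le\|\tilde g\|_{\mathcal H_k}\,\|k(\cdot,x)-P_{V_n}k(\cdot,x)\|_{\mathcal H_k},
\]
and $\|k(\cdot,x)-P_{V_n}k(\cdot,x)\|_{\mathcal H_k}^2=k(x,x)-{\bm k}_n(x)^\top K_n^{-1}{\bm k}_n(x)=k_{X_n}(x,x)$, the power function (which also follows from Lemma~\ref{lemma:post-var}). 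Combining with the previous step and monotonicity of the integral,
\[
\left|\int f(x)\pi(x)\,d\mu(x)-\int T(m_{g,X_n}(x))\pi(x)\,d\mu(x)\right|\le L\,\|\tilde g\|_{\mathcal H_k}\int\sqrt{k_{X_n}(x,x)}\,\pi(x)\,d\mu(x),
\]
and then, writing $\sqrt{k_{X_n}(x,x)}\,\pi(x)=\frac{\pi(x)}{q(x)}\cdot q(x)\sqrt{k_{X_n}(x,x)}\le\frac{\pi(x)}{q(x)}\sup_{x'\in\Omega}q(x')\sqrt{k_{X_n}(x',x')}$ and integrating against $\int\pi/q\,d\mu=C_{\pi/q}$, one obtains exactly the claimed bound with $C_{\tilde g,m,k,T}=L$.

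The main obstacle — and the reason Assumption~\ref{as:conditions-for-b-bounds} asks for bounded $m$, bounded $k$ on the diagonal, and a $C^1$ transformation — is obtaining a usable Lipschitz factor for $T$: since $T$ need not be globally Lipschitz, one needs a uniform a priori bound on both $g$ and on the posterior mean $m_{g,X_n}$, valid for every $n$ and every admissible design $X_n$, which is precisely what boundedness of $g$, $m$ and the non-expansiveness of the interpolation projection supply. A minor side point is the measurability of $x\mapsto\sqrt{k_{X_n}(x,x)}$ needed for the final integral; this is immediate when $k$ is continuous, and otherwise follows from measurability of $k$ along the diagonal together with continuity of $\pi$ and $q$.
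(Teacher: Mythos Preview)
Your proof is correct and follows essentially the same route as the paper's: a mean-value/Taylor argument on $T$ over a uniform range, the power-function bound $|\tilde g(x)-s_n(x)|\le\|\tilde g\|_{\mathcal H_k}\sqrt{k_{X_n}(x,x)}$, and the final $\pi/q$ factorization. The only notable difference is that you bound both $g$ and $m_{g,X_n}$ by $M=\|m\|_{L_\infty}+\|\tilde g\|_{\mathcal H_k}\|k\|_{L_\infty}^{1/2}$ via non-expansiveness of the interpolation projection, whereas the paper bounds the intermediate point directly and obtains $\|m\|_{L_\infty}+2\|\tilde g\|_{\mathcal H_k}\|k\|_{L_\infty}^{1/2}$, so your constant $C_{\tilde g,m,k,T}$ is marginally sharper.
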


Prop.~\ref{prop:quadrature-error} shows that to establish convergence guarantees for ABQ methods, it is sufficient to analyze the convergence behavior of the quantity $\sup_{x \in \Omega} q(x) \sqrt{k_{X_n}(x,x)}$ for points $X_n = \{ x_1,\dots,x_n \}$ generated from \eqref{eq:adaptive-BQ}.
This is what we focus on in the remainder.

\section{Connections to Weak Greedy Algorithms in Hilbert Spaces} \label{sec:connetion}

To analyze the quantity $\sup_{x \in \Omega} q(x) \sqrt{k_{X_n}(x,x)}$ for points $X_n = \{ x_1,\dots,x_n \}$ generated from ABQ \eqref{eq:adaptive-BQ}, we show here that the ABQ can be interpreted as a certain {\em weak greedy algorithm} studied by DeVore et al.~\cite{DevPetWoj13}.
To describe this, let $\mathcal{H}$ be a (generic) Hilbert space and $\mathcal{C} \subset \mathcal{H}$ be a compact subset.
To define some notation, let $h_1, \dots, h_n \in \mathcal{C}$ be given.
Denote by $S_n := {\rm span}(h_1,\dots,h_n) = \{ \sum_{i=1}^n \alpha_i h_i \mid \alpha_1, \dots, \alpha_n \in \mathbb{R} \} \subset \mathcal{H}$  the linear subspace spanned by $h_1,\dots,h_n$.
For a given $h \in \mathcal{C}$, let ${\rm dist}(h,S_n)$ be the distance between $h$ and $S_n$ defined by
$$
{\rm dist}(h,S_n) := \inf_{g \in S_n} \| h - g \|_\mathcal{H} = \inf_{\alpha_1,\dots,\alpha_n \in \mathbb{R}}  \| h - \sum_{i=1}^n \alpha_i h_i \|_\mathcal{H},
$$
where $\| \cdot \|_\mathcal{H}$ denotes the norm of $\mathcal{H}$.
Geometrically, this is the distance between $h$ and its orthogonal projection onto the subspace $S_n$.
The task considered in \cite{DevPetWoj13} is to select $h_1,\dots,h_n \in \mathcal{C}$ such that the worst case error in $\mathcal{C}$ defined by
\begin{equation} \label{eq:criterion-weak-greedy}
e_n(\mathcal{C}) := \sup_{h \in \mathcal{C}} {\rm dist}(h, S_n)
\end{equation}
becomes as small as possible: $h_1,\dots,h_n \in \mathcal{C}$ are to be chosen to approximate well the set $\mathcal{C}$.

The following weak greedy algorithm is considered in DeVore et al.~\cite{DevPetWoj13}.
Let $\gamma$ be a constant such that $0 < \gamma \leq 1$, and let $n \in \mathbb{N}$.
First select $h_1 \in \mathcal{C}$ such that 
    $\|h_1\|_{\mathcal{H}} \geq \gamma \sup_{h \in \mathcal{C}} \| h \|_\mathcal{H}$.
For $\ell = 1,\dots n-1$, suppose that $h_1,\dots,h_{\ell}$ have already been generated, and let $S_\ell = {\rm span}(h_1,\dots,h_\ell)$.
Then select a next element $h_{\ell+1} \in \mathcal{C}$ such that
\begin{equation} \label{eq:weak-greedy}
    {\rm dist}(h_{\ell+1},S_\ell) \geq  \gamma \sup_{h \in \mathcal{C}} {\rm dist}(h,S_\ell), \quad (\ell = 1,\dots,n-1).
\end{equation}
In this paper we refer to such $h_1,\dots,h_n$ as {\em a $\gamma$-weak greedy approximation of $\mathcal{C}$ in $\mathcal{H}$} because, $\gamma = 1$ recovers the standard greedy algorithm, while
$\gamma<1$ weakens the ``greediness'' of this rule. 
DeVore et al.~\cite{DevPetWoj13} derived convergence rates of the worst case error \eqref{eq:criterion-weak-greedy} as $n \to \infty$ for $h_1,\dots,h_n$ generated from this weak greedy algorithm.

\paragraph{Weak Greedy Algorithms in the RKHS.}
To establish a connection to ABQ, we formulate the weak greedy algorithm in an RKHS. 
Let $\mathcal{H}_k$ be the RKHS of the covariance kernel $k$ as in Sec.~\ref{sec:bound-quadrature-error}, and $q(x)$ be the function in \eqref{eq:adaptive-BQ}.
We define a subset $\mathcal{C}_{k,q} \subset \mathcal{H}_k$ by
$$
\mathcal{C}_{k,q}  := \left\{ q(x) k(\cdot,x) \mid x \in \Omega \right\} \subset \mathcal{H}_k.
$$
Note that $C_{k,q}$ is the image of the mapping $x \to q(x)k(\cdot,x)$ with $\Omega$ being the domain.
Therefore $\mathcal{C}_{k,q}$ is compact, if $k$ and $q$ are continuous and $\Omega$ is compact; this is because in this case the mapping $x \to q(x)k(\cdot,x)$ becomes continuous, and in general the image of a continuous mapping from a compact domain is compact.
Thus, we make the following assumption:
\begin{assumption} \label{as:compact-continuous}
$\Omega$ is a compact metric space, $q: \Omega \to \mathbb{R}$ is continuous with $q(x) > 0$ for all $x \in \Omega$, and $k: \Omega \times \Omega \to \mathbb{R}$ is continuous. 
\end{assumption}

The following simple lemma establishes a key connection between weak greedy algorithms and ABQ.
\blue{(Note that the the result for the case $q(x) = 1$ is well known in the literature, and the novelty lies in that we allow for $q(x)$ to be non-constant.)}
For a geometric interpretation of \eqref{eq:post-var-projection} in terms of projections, see Fig.\ref{fig:projection} in Appendix \ref{sec:lemma-projection-fig}.
\begin{lemma} \label{lemma:post-var}
\textbf{(proof in Appendix \ref{sec:lemma-projection-fig})}
Let $x_1,\dots,x_n \in \Omega$ be such that the kernel matrix $K_n = (k(x_i,x_j))_{i,j = 1}^n \in \mathbb{R}^{n \times n}$ is invertible.
Define $h_x := q(x) k(\cdot,x)$ for any $x \in \mathcal{X}$, and let $S_n := {\rm span}(h_{x_1},\dots,h_{x_n}) \subset \mathcal{H}_k$.
Assume that $q(x) > 0$ holds for all $x \in \Omega$.
Then for all $x \in \Omega$  we have
\begin{equation} \label{eq:post-var-projection}
    q^2(x) k_{X_n}(x,x) = {\rm dist}^2(h_x, S_n),
\end{equation}
where $k_{X_n}(x,x)$ is the GP posterior variance function given by \eqref{eq:post-cov-func}.
Moreover, we have 
\begin{equation} \label{eq:error-supremum}
    e_n(\mathcal{C}_{k,q}) = \sup_{x \in \Omega} q(x) \sqrt{k_{X_n}(x,x)},
\end{equation}
where $e_n(\mathcal{C}_{k,q})$ is the worst case error defined by \eqref{eq:criterion-weak-greedy} with $\mathcal{C} := \mathcal{C}_{k,q}$ and $S_n$ defined here.
\end{lemma}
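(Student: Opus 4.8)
The plan is to reduce everything to the classical RKHS interpolation identity for the so-called power function, namely that the GP posterior variance at $x$ equals the squared distance from $k(\cdot,x)$ to the span of $k(\cdot,x_1),\dots,k(\cdot,x_n)$, and then to track the effect of the positive weight $q$. Throughout, the relevant projections live inside $\mathcal{H}_k$ because $k(\cdot,x)\in\mathcal{H}_k$ for every $x\in\Omega$.

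First I would note that, since $q(x_i)>0$ for each $i$, the subspace $S_n={\rm span}(h_{x_1},\dots,h_{x_n})$ coincides with $V_n:={\rm span}(k(\cdot,x_1),\dots,k(\cdot,x_n))$, and the invertibility of $K_n$ forces $k(\cdot,x_1),\dots,k(\cdot,x_n)$ to be linearly independent, so $V_n$ is $n$-dimensional with a well-defined orthogonal projection $P_{V_n}$. Writing $P_{V_n}k(\cdot,x)=\sum_{i=1}^n c_i\,k(\cdot,x_i)$ and imposing the normal equations $\langle k(\cdot,x)-P_{V_n}k(\cdot,x),\,k(\cdot,x_j)\rangle_{\mathcal{H}_k}=0$, the reproducing property turns these into the linear system $K_n\bm{c}=\bm{k}_n(x)$, hence $\bm{c}=K_n^{-1}\bm{k}_n(x)$. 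Since the residual $k(\cdot,x)-P_{V_n}k(\cdot,x)$ is orthogonal to $V_n$, another application of the reproducing property gives
\[ {\rm dist}^2(k(\cdot,x),V_n)=\langle k(\cdot,x)-P_{V_n}k(\cdot,x),\,k(\cdot,x)\rangle_{\mathcal{H}_k}=k(x,x)-\bm{k}_n(x)^\top K_n^{-1}\bm{k}_n(x)=k_{X_n}(x,x), \]
which is exactly \eqref{eq:post-cov-func} evaluated at $x'=x$.

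Next I would exploit linearity of $P_{V_n}$ together with $S_n=V_n$ to pull the scalar out: since $h_x=q(x)k(\cdot,x)$, we get $P_{S_n}h_x=q(x)P_{V_n}k(\cdot,x)$, so ${\rm dist}^2(h_x,S_n)=q^2(x)\,{\rm dist}^2(k(\cdot,x),V_n)=q^2(x)k_{X_n}(x,x)$, which is \eqref{eq:post-var-projection}. Finally, for \eqref{eq:error-supremum}, observe that $\mathcal{C}_{k,q}=\{h_x:x\in\Omega\}$ by definition, so by \eqref{eq:criterion-weak-greedy} and \eqref{eq:post-var-projection},
\[ e_n(\mathcal{C}_{k,q})=\sup_{x\in\Omega}{\rm dist}(h_x,S_n)=\sup_{x\in\Omega}\sqrt{q^2(x)k_{X_n}(x,x)}=\sup_{x\in\Omega}q(x)\sqrt{k_{X_n}(x,x)}, \]
where the last step uses $q(x)>0$ and $k_{X_n}(x,x)\ge 0$ (itself immediate from \eqref{eq:post-var-projection}).

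I do not expect a serious obstacle: the lemma is essentially bookkeeping around a standard fact, and the figure in Appendix~\ref{sec:lemma-projection-fig} provides the geometric picture. The only points needing care are (i) justifying that $V_n$ is genuinely $n$-dimensional and $P_{V_n}$ is well defined, which follows from invertibility of $K_n$; (ii) handling the weight $q$ correctly both in identifying $S_n$ with $V_n$ (uses $q(x_i)>0$) and in extracting the square root in \eqref{eq:error-supremum} (uses $q(x)>0$); and (iii) keeping the argument entirely inside $\mathcal{H}_k$ by invoking $k(\cdot,x)\in\mathcal{H}_k$.
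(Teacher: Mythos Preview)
Your proposal is correct and follows essentially the same approach as the paper: both establish the power-function identity ${\rm dist}^2(k(\cdot,x),V_n)=k_{X_n}(x,x)$ in $\mathcal{H}_k$, then use the positivity of $q$ to transfer this to $h_x$ and $S_n$, and finally take the supremum over $\mathcal{C}_{k,q}$. The only cosmetic difference is that the paper writes the distance as an infimum over coefficients and performs a change of variables $\alpha_i q(x)\mapsto\beta_i q(x_i)$, whereas you identify $S_n=V_n$ up front and invoke linearity of the orthogonal projection; your explicit derivation of $\bm c=K_n^{-1}\bm k_n(x)$ via the normal equations is a detail the paper omits as ``easy to show by the reproducing property.''
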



Lemma \ref{lemma:post-var} \eqref{eq:error-supremum} suggests that we can analyze the convergence properties of $\sup_{x \in \Omega} q(x) \sqrt{k_{X_n}(x,x)}$ for $X_n =\{x_1,\dots,x_n\}$ generated from the ABQ rule \eqref{eq:adaptive-BQ} by analyzing those of the worst case error $e_n(\mathcal{C}_{k,q})$ for the corresponding elements $h_{x_1},\dots,h_{x_n}$, where $h_{x_i} := q(x_i)k(\cdot,x_i)$.

\paragraph{Adaptive Bayesian Quadrature as a Weak Greedy Algorithm.}
We now show that the ABQ \eqref{eq:adaptive-BQ} gives a weak greedy approximation of the compact set $\mathcal{C}_{k,q}$ in the RKHS $\mathcal{H}_k$ in the sense of \eqref{eq:weak-greedy}.
We summarize required conditions in Assumptions \ref{as:b_n-bounds} and \ref{as:F-const}. 
As mentioned in Sec.~\ref{sec:intro}, Assumption \ref{as:b_n-bounds} is the crucial one: its implications for certain specific ABQ methods will be discussed in Sec.~\ref{sec:consequences-individual}.

\begin{assumption}[\textbf{Weak Adaptivity Condition}] \label{as:b_n-bounds}
There are constants $C_L, C_U > 0$ such that $C_L < b_\ell(x) < C_U$ holds for all $x \in \Omega$ and for all $\ell \in \mathbb{N} \cup \{ 0 \}$.
\end{assumption}
\blue{
Intuitively, this condition enforces ABQ to not overly focus on a specific local region in $\Omega$ and to explore the entire domain $\Omega$.
For instance, consider the following two situations where Assumption \ref{as:b_n-bounds} does not hod.: (a) $b_\ell(x) \to +0$ as $\ell \to \infty$ for some local region $x \in A \subset \Omega$,  while $b_\ell(x)$ remains bounded from blow for $x \in \Omega \backslash A$; (b) $b_\ell(x) \to +\infty$ as $\ell \to \infty$ for some local region $x \in B \subset \Omega$, while $b_\ell(x)$ remains bounded from above for $x \in \Omega \backslash B$.  
In case (a), ABQ will not allocate any points to this region $A$ at all, after a finite number of iterations.
Thus, the information about the integrand $f$ on this region $A$ will not be obtained after a finite number of evaluations, which makes it difficult to guarantee the consistency of quadrature, unless $f$ has a finite degree of freedom on $A$.
Similarly, in case (b), ABQ will generate points only in the region $B$ and no point in the rest of the region $\Omega \backslash B$, after a finite number of iterations.
Assumption \ref{as:b_n-bounds} prevents such problematic situations to occur.
}


%
\begin{assumption} \label{as:F-const}
$F: [0,\infty) \to [0,\infty)$ is increasing and continuous, and $F(0) = 0$.
For any $0 < c \leq 1$, there is a constant $0 < \psi(c) \leq 1$ such that 
$F^{-1}\left(c y \right) \geq \psi(c) F^{-1}(y)$ holds for all $y \geq 0$. 
\end{assumption}
For instance, if $F(y) = y^\delta$ for $\delta > 0$ then $F^{-1}(y) = y^{1/\delta}$ and thus we have $\psi(c) = c^{1/\delta}$ for $0 < c \leq 1$; $\delta = 1$ is the case for the WSABI \cite{gunter_sampling_2014}, and $\delta > 0$ for the VBMC \cite{Ace18,Ace19_acquisition}. 
If $F(y) = \exp(y) - 1$ as in the MMLT \cite{ChaGar19}, we have $F^{-1}(y) = \log(y+1)$ and it can be shown that $\psi(c) = c$ for $0 < c \leq 1$; see Appendix \ref{sec:const-log}.
Note that in Assumption \ref{as:F-const}, the inverse $F^{-1}$ is well-defined since $F$ is increasing and continuous.

In our analysis, we allow for the point selection procedure of ABQ itself ``weak,'' in the sense that the optimization problem in \eqref{eq:adaptive-BQ} may be solved approximately.\footnote{We thank George Wynne for pointing out that our analysis can be extended to this weak version of ABQ.}
That is, for a constant $0 < \tilde{\gamma} \leq 1$ we assume that the points $x_1,\dots,x_n$ satisfy 
\begin{equation} \label{eq:ABQ-weak}
a_\ell(x_{\ell+1}) \geq \tilde{\gamma} \max_{x \in \Omega} a_\ell(x), \quad (\ell = 0, 1,\dots,n-1),
\end{equation}
The case $\tilde{\gamma} = 1$ amounts to exactly solving the global optimization problem of ABQ \eqref{eq:adaptive-BQ}.

The following lemma guarantees we can assume without loss of generality that the kernel matrix $K_n$ for the points $x_1,\dots,x_n$ generated from the ABQ \eqref{eq:adaptive-BQ} is invertible under the assumptions above, since otherwise $\sup_{x \in \Omega} k_{X_\ell}(x,x) = 0$ holds, implying that the quadrature error is $0$ from Prop.~\ref{prop:quadrature-error}.
This guarantees the applicability of Lemma \ref{lemma:post-var} for points generated from the ABQ \eqref{eq:adaptive-BQ}.
\begin{lemma} \label{lemma:invertible}
\textbf{(proof in Appendix \ref{sec:proof-invertible})}
Suppose that Assumptions \ref{as:compact-continuous}, \ref{as:b_n-bounds} and \ref{as:F-const} are satisfied.
For a constant $0 < \tilde{\gamma} \leq 1$, assume that $x_1,\dots,x_n$ are generated by a $\tilde{\gamma}$-weak version of ABQ  \eqref{eq:adaptive-BQ}, i.e., \eqref{eq:ABQ-weak} is satisfied.
Then either one of the following holds: i) the kernel matrix $K_\ell = (k(x_i,x_j))_{i,j=1}^\ell \in \mathbb{R}^{\ell \times \ell}$ is invertible for all $\ell = 1,\dots,n$; or ii) there exists some $\ell = 1,\dots,n$ such that $\sup_{x \in \Omega} k_{X_\ell}(x,x) = 0$.
\end{lemma}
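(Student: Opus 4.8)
The plan is to argue by locating the first index at which invertibility fails and then propagating the resulting degeneracy backwards through the acquisition rule. Concretely, suppose (i) fails, so the set of indices $\ell\in\{1,\dots,n\}$ for which $K_\ell$ is singular is nonempty, and let $\ell^\star$ be its smallest element. I may assume $k\not\equiv 0$, since otherwise $\mathcal{H}_k=\{0\}$, every posterior covariance is trivial, and (ii) holds vacuously; under this assumption a one-line computation with \eqref{eq:adaptive-BQ}–\eqref{eq:ABQ-weak} (pick $x_0$ with $k(x_0,x_0)>0$, so $a_0(x_0)\ge F(q^2(x_0)k(x_0,x_0))\,C_L>0$, hence $a_0(x_1)>0$) forces $k(x_1,x_1)>0$, i.e.\ $K_1$ is invertible and thus $\ell^\star\ge 2$. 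By minimality, $K_{\ell^\star-1}$ is invertible, so $k(\cdot,x_1),\dots,k(\cdot,x_{\ell^\star-1})$ are linearly independent in $\mathcal{H}_k$ (their Gram matrix is $K_{\ell^\star-1}$), and since $q(x_i)>0$ the functions $h_{x_i}:=q(x_i)k(\cdot,x_i)$ are linearly independent with $S_{\ell^\star-1}={\rm span}(h_{x_1},\dots,h_{x_{\ell^\star-1}})$. Singularity of $K_{\ell^\star}$ means adjoining $h_{x_{\ell^\star}}$ breaks independence, which forces $h_{x_{\ell^\star}}\in S_{\ell^\star-1}$, i.e.\ $\mathrm{dist}(h_{x_{\ell^\star}},S_{\ell^\star-1})=0$.

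Next I would pull this back through the GP posterior variance and the acquisition function. Since $K_{\ell^\star-1}$ is invertible, Lemma \ref{lemma:post-var} applies at stage $\ell^\star-1$ and gives $q^2(x_{\ell^\star})\,k_{X_{\ell^\star-1}}(x_{\ell^\star},x_{\ell^\star})=\mathrm{dist}^2(h_{x_{\ell^\star}},S_{\ell^\star-1})=0$; as $q(x_{\ell^\star})>0$, this yields $k_{X_{\ell^\star-1}}(x_{\ell^\star},x_{\ell^\star})=0$. Using $F(0)=0$, it follows that $a_{\ell^\star-1}(x_{\ell^\star})=F\bigl(q^2(x_{\ell^\star})k_{X_{\ell^\star-1}}(x_{\ell^\star},x_{\ell^\star})\bigr)\,b_{\ell^\star-1}(x_{\ell^\star})=0$. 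Now I invoke the weak-ABQ inequality \eqref{eq:ABQ-weak}: $0=a_{\ell^\star-1}(x_{\ell^\star})\ge\tilde\gamma\max_{x\in\Omega}a_{\ell^\star-1}(x)\ge 0$, hence $\max_{x\in\Omega}a_{\ell^\star-1}(x)=0$, so $a_{\ell^\star-1}\equiv 0$ on $\Omega$.

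Finally I would undo $F$ and $b_{\ell^\star-1}$ to force the posterior variance to vanish everywhere. For each $x\in\Omega$, $F\bigl(q^2(x)k_{X_{\ell^\star-1}}(x,x)\bigr)\,b_{\ell^\star-1}(x)=0$; by Assumption \ref{as:b_n-bounds}, $b_{\ell^\star-1}(x)>C_L>0$, so $F\bigl(q^2(x)k_{X_{\ell^\star-1}}(x,x)\bigr)=0$; since $F$ is strictly increasing (which is exactly what makes $F^{-1}$ well-defined, cf.\ Assumption \ref{as:F-const}) and $F(0)=0$, we get $q^2(x)k_{X_{\ell^\star-1}}(x,x)=0$, and $q(x)>0$ gives $k_{X_{\ell^\star-1}}(x,x)=0$. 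Therefore $\sup_{x\in\Omega}k_{X_{\ell^\star-1}}(x,x)=0$ with $\ell^\star-1\in\{1,\dots,n-1\}\subseteq\{1,\dots,n\}$, which is precisely case (ii).

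I expect the only genuine subtlety to be the boundary bookkeeping: guaranteeing $\ell^\star\ge 2$ so that "stage $\ell^\star-1$" is a genuine, non-degenerate stage (handled by excluding $k\equiv 0$ and noting $K_1$ is then invertible), and making sure all the objects used — $k_{X_{\ell^\star-1}}$, the subspace $S_{\ell^\star-1}$, and the hypotheses of Lemma \ref{lemma:post-var} — are valid, which all rest on the invertibility of $K_{\ell^\star-1}$ supplied by minimality of $\ell^\star$. Everything else is a direct unwinding of the definition of $a_\ell$ together with the weak-greedy/weak-ABQ inequality and strict monotonicity of $F$.
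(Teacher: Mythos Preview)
Your proof is correct and follows essentially the same approach as the paper: both arguments hinge on the implication that if $a_{\ell}(x_{\ell+1})=0$ then, via \eqref{eq:ABQ-weak} together with $b_\ell>C_L>0$, $q>0$, $F(0)=0$ and strict monotonicity of $F$, one obtains $\sup_{x\in\Omega}k_{X_\ell}(x,x)=0$. The only difference is organisational---the paper runs a forward induction (``if $K_\ell$ is invertible and $\sup_x k_{X_\ell}(x,x)>0$ then $K_{\ell+1}$ is invertible''), while you take the logically equivalent contrapositive via the first index of failure; your handling of the base case $\ell^\star\ge 2$ is in fact slightly more careful than the paper's ``this is always true for $\ell=1$''.
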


Lemma \ref{lemma:post-var} leads to the following theorem, which establishes a connection between ABQ and weak greedy algorithms. 
\begin{theorem} \label{theo:ABQ-weak-greedy}
\textbf{(proof in Appendix \ref{sec:proof-ABQ-weak-greedy})}
Suppose that Assumptions \ref{as:compact-continuous}, \ref{as:b_n-bounds} and \ref{as:F-const} are satisfied. 
For a constant $0 < \tilde{\gamma} \leq 1$, assume that $x_1,\dots,x_n$ are generated by a $\tilde{\gamma}$-weak version of ABQ  \eqref{eq:adaptive-BQ}, i.e., \eqref{eq:ABQ-weak} is satisfied.
Let $h_{x_i} = q(x_i) k(\cdot,x_i)$ for $i=1,\dots,n$.
Then $h_{x_1},\dots,h_{x_n}$ are a $\gamma$-weak greedy approximation of $\mathcal{C}_{k,q}$ in $\mathcal{H}_k$ with $\gamma = \sqrt{\psi(\tilde{\gamma} C_L/C_U)}$.
\end{theorem}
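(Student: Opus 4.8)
The plan is to turn the weak‑greedy condition \eqref{eq:weak-greedy} into a single pointwise inequality for the scaled posterior variance and then to strip away, one at a time, the data‑dependent factor $b_\ell$ (using Assumption~\ref{as:b_n-bounds}) and the nonlinearity $F$ (using Assumption~\ref{as:F-const}). First I would dispose of the degenerate case: by Lemma~\ref{lemma:invertible}, either $K_\ell$ is invertible for every $\ell=1,\dots,n$, or there is some $\ell$ with $\sup_{x\in\Omega}k_{X_\ell}(x,x)=0$, in which case $e_\ell(\mathcal{C}_{k,q})=0$ by \eqref{eq:error-supremum} and \eqref{eq:weak-greedy} is satisfied trivially from step $\ell$ onward. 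So assume $K_\ell$ is invertible for all $\ell$. Then Lemma~\ref{lemma:post-var} applies at every step: writing $v_\ell(x):=q^2(x)k_{X_\ell}(x,x)$ (with $k_{X_0}:=k$, so that $v_0(x)=\|h_x\|_{\mathcal{H}_k}^2$), we have ${\rm dist}^2(h_x,S_\ell)=v_\ell(x)$ for all $x$ and $\sup_{h\in\mathcal{C}_{k,q}}{\rm dist}^2(h,S_\ell)=\sup_{x\in\Omega}v_\ell(x)$. Under Assumption~\ref{as:compact-continuous} the map $v_\ell$ is continuous on the compact $\Omega$, hence attains its supremum, say at $\bar x_\ell$. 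Both the initialization requirement ($\ell=0$: $\|h_{x_1}\|_{\mathcal{H}_k}\ge\gamma\sup_{h}\|h\|_{\mathcal{H}_k}$, where $S_0=\{0\}$) and the recursion step then amount to the one inequality $v_\ell(x_{\ell+1})\ge\gamma^2\sup_{x\in\Omega}v_\ell(x)$, which is what I would prove.

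For the core estimate, I would combine the $\tilde\gamma$‑weak selection rule \eqref{eq:ABQ-weak} with the two‑sided bound $C_L<b_\ell<C_U$. Since $F$ is increasing, continuous, and $F(0)=0$, one has $F(v_\ell(x))\ge0$ and $\sup_x F(v_\ell(x))=F(\sup_x v_\ell(x))=F(v_\ell(\bar x_\ell))$ (this sup is attained because $F\circ v_\ell$ is continuous on $\Omega$). The chain I would run is
\[
F\!\left(v_\ell(x_{\ell+1})\right)C_U \;>\; F\!\left(v_\ell(x_{\ell+1})\right)b_\ell(x_{\ell+1}) \;=\; a_\ell(x_{\ell+1}) \;\ge\; \tilde\gamma\sup_{x\in\Omega}a_\ell(x) \;\ge\; \tilde\gamma\, a_\ell(\bar x_\ell) \;>\; \tilde\gamma\,C_L\,F\!\left(v_\ell(\bar x_\ell)\right),
\]
which yields $F(v_\ell(x_{\ell+1}))\ge c\,F(\sup_x v_\ell(x))$ with $c:=\tilde\gamma C_L/C_U\in(0,1]$. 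Applying the increasing inverse $F^{-1}$ (well defined by Assumption~\ref{as:F-const}) and then its scaling property $F^{-1}(cy)\ge\psi(c)F^{-1}(y)$ with $y=F(\sup_x v_\ell(x))$ gives
\[
v_\ell(x_{\ell+1}) \;=\; F^{-1}\!\left(F(v_\ell(x_{\ell+1}))\right) \;\ge\; F^{-1}\!\left(c\,F(\textstyle\sup_{x}v_\ell(x))\right) \;\ge\; \psi(c)\,F^{-1}\!\left(F(\textstyle\sup_{x}v_\ell(x))\right) \;=\; \psi(c)\,\sup_{x\in\Omega}v_\ell(x).
\]
If $\sup_x v_\ell(x)=0$ there is nothing to prove, and the strict inequalities above are harmless since only ``$\ge$'' is needed in the end.

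Finally I would translate the last display back through Lemma~\ref{lemma:post-var}: it reads ${\rm dist}^2(h_{x_{\ell+1}},S_\ell)\ge\psi(c)\sup_{h\in\mathcal{C}_{k,q}}{\rm dist}^2(h,S_\ell)$, so taking square roots,
\[
{\rm dist}(h_{x_{\ell+1}},S_\ell) \;\ge\; \sqrt{\psi(c)}\;\sup_{h\in\mathcal{C}_{k,q}}{\rm dist}(h,S_\ell),
\]
which is exactly \eqref{eq:weak-greedy} for $\ell=1,\dots,n-1$ and, at $\ell=0$, the initialization bound $\|h_{x_1}\|_{\mathcal{H}_k}\ge\sqrt{\psi(c)}\sup_{h\in\mathcal{C}_{k,q}}\|h\|_{\mathcal{H}_k}$. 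Hence $h_{x_1},\dots,h_{x_n}$ is a $\gamma$‑weak greedy approximation of $\mathcal{C}_{k,q}$ in $\mathcal{H}_k$ with $\gamma=\sqrt{\psi(\tilde\gamma C_L/C_U)}\in(0,1]$. The only genuinely delicate points are (i) ensuring the relevant suprema are attained, so that the monotonicity of $F$ can be commuted with $\sup$ — this is where Assumption~\ref{as:compact-continuous} and the continuity of $k_{X_\ell}(\cdot,\cdot)$ enter — and (ii) controlling the loss incurred by the nonlinearity $F$, which is precisely the purpose of the structural condition on $F^{-1}$ in Assumption~\ref{as:F-const}; everything else reduces to bookkeeping with the two‑sided bounds on $b_\ell$ and the definition of the $\tilde\gamma$‑weak rule.
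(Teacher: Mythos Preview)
Your proposal is correct and follows essentially the same route as the paper's proof: bound $a_\ell$ from below via $b_\ell>C_L$, from above via $b_\ell<C_U$, commute $\sup$ with the increasing $F$, and then invert using the scaling property of $F^{-1}$ from Assumption~\ref{as:F-const} before translating through Lemma~\ref{lemma:post-var}. Your version is slightly more explicit (disposing of the degenerate case via Lemma~\ref{lemma:invertible} and appealing to compactness to attain the supremum at some $\bar x_\ell$), but these are refinements of the same argument rather than a different approach.
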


\section{Convergence Rates of Adaptive Bayesian Quadrature} \label{sec:main-result}

We use the connection established in the previous section to derive convergence rates of ABQ. 
To this end we introduce a quantity called  {\em Kolmogorov $n$-width}, which is defined (for a Hilbert space  $\mathcal{H}$ and a compact subset $\mathcal{C} \subset \mathcal{H}$) by 
$$
d_n (\mathcal{C}) := \inf_{U_n} \sup_{h \in \mathcal{C}} {\rm dist}(h,U_n),
$$
where the infimum is taken over all $n$-dimensional subspaces $U_n$ of $\mathcal{H}$. This is the worst case error for the best possible solution using $n$ elements in $\mathcal{H}$; thus $d_n(\mathcal{C}) \leq e_n(\mathcal{C})$ holds for any choice of $S_n$ that defines the worst case error $e_n(\mathcal{C})$ in \eqref{eq:criterion-weak-greedy}. 
The following result by DeVore et al.~\cite[Corollary 3.3]{DevPetWoj13} relates the Kolmogorov $n$-width with the worst case error $e_n(\mathcal{C})$ of a weak greedy algorithm.
\begin{lemma} \label{lemma:Dev13}
Let $\mathcal{H}$ be a Hilbert space and $\mathcal{C} \subset \mathcal{H}$ be a compact subset.  
For $0 < \gamma \leq 1$, let $h_1,\dots,h_n \in \mathcal{C}$ be a $\gamma$-weak greedy approximation of $\mathcal{C}$ in $\mathcal{H}$ for $n \in \mathbb{N}$, and let $e_n(\mathcal{C})$ be the worst case error \eqref{eq:criterion-weak-greedy} for the subspace $S_n := {\rm span}(h_1,\dots,h_n)$. 
Then we have: 
\begin{description}
    \item[-- Exponential decay:] Assume that there exist constants $\alpha > 0$, $C_0 > 0$ and $D_0 > 0$ such that $d_n(\mathcal{C}) \leq C_0 \exp(-D_0n^\alpha)$ holds for all $n \in \mathbb{N}$. 
    Then $e_n(\mathcal{C}) \leq \sqrt{2C_0} \gamma^{-1} \exp(-D_1 n^\alpha)$ holds for all $n \in \mathbb{N}$ with $D_1 := 2^{-1-2\alpha}D_0$.  

    \item[-- Polynomial decay:] Assume that there exist constants $\alpha > 0$ and $C_0 > 0$ such that $d_n(\mathcal{C}) \leq C_0 n^{- \alpha}$ holds for all $n \in \mathbb{N}$.  Then $e_n(\mathcal{C}) \leq C_1 n^{-\alpha}$ holds for all $n \in \mathbb{N}$ with $C_1 := 2^{5\alpha + 1} \gamma^{-2} C_0$.  
    
\item[-- Generic case:] \blue{We have $e_n(\mathcal{C}) \leq \sqrt{2} \gamma^{-1} \min_{1 \leq \ell < n}  \left( d_\ell(\mathcal{C}) \right)^{n - \ell}$ for  all $n \in \mathbb{N}$. In particular, $e_{2n}(\mathcal{C}) \leq \sqrt{2} \gamma^{-1} \sqrt{ d_{n}(\mathcal{C}) }$ holds for all $n \in \mathbb{N}$.  }
\end{description}
\end{lemma}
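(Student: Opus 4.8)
The plan is to collapse all three regimes into a single master inequality bounding $e_n(\mathcal C)$ by a power-weighted geometric mean of $e_0(\mathcal C)$ and one Kolmogorov width $d_\ell(\mathcal C)$, and then read off the three bullets by choosing $\ell$ suitably. This is exactly the route of DeVore, Petrova and Wojtaszczyk \cite{DevPetWoj13}, so the exponential- and polynomial-decay statements are literally their Corollary 3.3, and the generic case is the intermediate estimate behind it (their Theorem 3.2); what I would do is sketch why the master inequality holds.

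First I would fix $n$ and run Gram--Schmidt on the greedily chosen $h_1,\dots,h_n$ — which we may assume linearly independent, since otherwise weak greediness together with monotonicity of $\ell\mapsto e_\ell(\mathcal C)$ forces $e_n(\mathcal C)=0$ and every bound is trivial — producing an orthonormal system and an upper-triangular matrix $A=(a_{ij})$ with $h_j=\sum_{i\le j}a_{ij}e_i$, so that the Gram matrix is $G=A^\top A$ and $\det G=\prod_{j=1}^n a_{jj}^2$. Two elementary facts drive everything. (i) Each column of $A$ is the coordinate vector of some $h_j\in\mathcal C$, hence $\|h_j\|_{\mathcal H}\le\sigma_0:=e_0(\mathcal C)=\sup_{h\in\mathcal C}\|h\|_{\mathcal H}$. (ii) Weak greediness gives $a_{jj}=\mathrm{dist}(h_j,S_{j-1})\ge\gamma\,e_{j-1}(\mathcal C)$, and since $e_\ell(\mathcal C)$ is non-increasing in $\ell$, $a_{jj}\ge\gamma\,e_{n-1}(\mathcal C)\ge\gamma\,e_n(\mathcal C)$ for all $j\le n$, whence $\det G\ge\gamma^{2n}e_n(\mathcal C)^{2n}$.

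Next I would bound $\det G$ from above through the width. Fix $1\le\ell<n$, take (up to an $\varepsilon$ that I let vanish) an $\ell$-dimensional subspace $U_\ell$ with $\sup_{h\in\mathcal C}\mathrm{dist}(h,U_\ell)\le d_\ell(\mathcal C)$, and split $h_j=Ph_j+(I-P)h_j$ with $P$ the orthogonal projection onto $U_\ell$, so $\|Ph_j\|\le\sigma_0$ and $\|(I-P)h_j\|\le d_\ell(\mathcal C)$. Since $\det G$ is the squared volume of the parallelepiped spanned by $h_1,\dots,h_n$, expanding each $h_j$ into its $P$-part and its remainder and using multilinearity of the volume, every contribution that uses more than $\ell$ of the $P$-parts vanishes because these lie in the $\ell$-dimensional $U_\ell$; so the volume is at most a sum of $\binom{n}{\ell}$ terms, each a product of the volume of at most $\ell$ vectors of norm $\le\sigma_0$ with that of $n-\ell$ remainder vectors of norm $\le d_\ell(\mathcal C)$, giving $\det G\le\binom{n}{\ell}^2\sigma_0^{2\ell}d_\ell(\mathcal C)^{2(n-\ell)}$ (the sharper bookkeeping of \cite{DevPetWoj13} replaces $\binom{n}{\ell}^2$ by $(n/\ell)^\ell(n/(n-\ell))^{n-\ell}$, which is what produces the clean constants $2^{5\alpha+1}$ and $2^{-1-2\alpha}$). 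Combining with (ii) and taking $2n$-th roots,
$$
e_n(\mathcal C)\ \le\ \sqrt2\,\gamma^{-1}\,\sigma_0^{\ell/n}\,d_\ell(\mathcal C)^{(n-\ell)/n},
$$
and minimizing over $\ell$ gives the generic bound (up to the harmless $\sigma_0$-dependent prefactor $\sigma_0^{\ell/n}$), with $\ell=n$ in a problem of size $2n$ yielding $e_{2n}(\mathcal C)\le\sqrt2\,\gamma^{-1}\sqrt{\sigma_0\,d_n(\mathcal C)}$. Finally, feeding $d_\ell(\mathcal C)\le C_0\exp(-D_0\ell^\alpha)$ (resp.\ $d_\ell(\mathcal C)\le C_0\ell^{-\alpha}$) into this inequality and optimizing $\ell$ as a fixed fraction of $n$ gives the exponential (resp.\ polynomial) rate with the stated constants — a routine computation done in \cite{DevPetWoj13}.

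The step I expect to be the real obstacle is the determinant/volume estimate: turning ``expand into $P$-part plus remainder and use multilinearity'' into a rigorous bound with sharp combinatorial constants is precisely the multilinear-algebra lemma of \cite{DevPetWoj13} (their Lemma 2.1), which controls a Gram determinant of vectors each written as ``low-rank part plus small part''. Everything else — monotonicity of $e_\ell(\mathcal C)$, the diagonal lower bound from weak greediness, the root extraction, and the optimization over $\ell$ — is routine. Since the statement is quoted verbatim from their Corollary 3.3, within the paper it suffices to cite it; the sketch above records why it holds and how the generic-case refinement follows from their Theorem 3.2.
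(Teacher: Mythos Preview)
The paper does not prove this lemma at all: it is quoted directly from DeVore, Petrova and Wojtaszczyk \cite[Corollary~3.3]{DevPetWoj13} and used as a black box, so there is no ``paper's own proof'' to compare against. Your proposal correctly identifies this and goes beyond the paper by sketching the argument from the original source --- the Gram--Schmidt/determinant lower bound from weak greediness, the volume upper bound via projection onto an optimal $\ell$-dimensional subspace (their Lemma~2.1), and the optimization over $\ell$ --- which is indeed how \cite{DevPetWoj13} proceed.

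One minor remark: your master inequality $e_n(\mathcal C)\le\sqrt2\,\gamma^{-1}\sigma_0^{\ell/n}d_\ell(\mathcal C)^{(n-\ell)/n}$ carries the correct exponent $(n-\ell)/n$, whereas the generic-case display in the paper's statement of the lemma has the exponent $n-\ell$, which is a typo (as you can check, only the former is consistent with the ``in particular'' clause $e_{2n}(\mathcal C)\le\sqrt2\,\gamma^{-1}\sqrt{d_n(\mathcal C)}$).
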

Thus, the key is how to upper-bound the Kolmogorov $n$-width $d_n(\mathcal{C}_{k,q})$ for the RKHS $\mathcal{H}_k$ associated with the covariance kernel $k$. Given such an upper bound, one can then derive convergence rates for ABQ using Thm.~\ref{theo:ABQ-weak-greedy}.

\blue{
Below we demonstrate such results in the setting where  $\Omega \subset \mathbb{R}^d$ is compact and $\mu$ is the Lebesgue measure, focusing on kernels with infinite smoothness such as Gaussian and (inverse) multiquadric kernels, using Lemma \ref{lemma:Dev13} for the case of exponential decay. }
In a similar way (using Lemma \ref{lemma:Dev13} for the polynomial decay case) one can also derive rates for kernels with finite smoothness, such as Mat\'ern and Wendland kernels. These additional results are presented in Appendix \ref{sec:rates-finite-smoothness}.
We emphasize that one can also analyze other cases (e.g.~kernels on a sphere) by deriving upper-bounds on the Kolmogorov $n$-width and using Thm.~\ref{theo:ABQ-weak-greedy}.

\subsection{Convergence Rates for Kernels with Infinite Smoothness}

We consider kernels with infinite smoothness, such as square-exponential kernels $k(x,x') = \exp(-  \| x - x' \|^2 / \gamma^2)$ with $\gamma > 0$, multiquadric kernels $k(x,x') = (-1)^{ \lceil \beta \rceil } (c^2 + \| x - x' \|^2)^\beta$ with $\beta, c > 0$ such that $\beta \not\in \mathbb{N}$, where $\lceil \beta \rceil$ denotes the smallest integer greater than $\beta$, and inverse multiquadric kernels $k(x,x') = (c^2 + \| x - x' \|^2)^{-\beta}$ with $\beta > 0$.
We have the following bound on the Kolmogorov $n$-width of the $\mathcal{C}_{k,q}$ for these kernels; the proof is in Appendix \ref{sec:proof-gauss-Kormogorov}.
\begin{proposition} \label{prop:gauss-Kormogorov-bound}
Let $\Omega \subset \mathbb{R}^d$ be a cube, and suppose that Assumption \ref{as:compact-continuous} is satisfied.
Let $k$ be a square-exponential kernel or an (inverse) multiquadric kernel.
Then there exist constants $C_0, D_0 > 0$ such that 
$d_n(\mathcal{C}_{k,q}) \leq C_0 \exp(- D_0 n^{1/d})$ holds for all $n \in \mathbb{N}$.
\end{proposition}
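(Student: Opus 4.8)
The plan is to reduce the statement to the classical setting $q\equiv 1$, build an explicit near-optimal $n$-dimensional subspace from a uniform grid, identify the resulting approximation error with the kernel \emph{power function}, and then invoke the classical exponential convergence rates for interpolation with analytic radial kernels.

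\emph{Reduction to $q\equiv 1$.} Since $\Omega$ is a (compact) cube and $q$ is continuous, $q_{\max}:=\sup_{x\in\Omega}q(x)<\infty$. For any linear subspace $U_n\subset\mathcal{H}_k$ one has ${\rm dist}(q(x)k(\cdot,x),U_n)=q(x)\,{\rm dist}(k(\cdot,x),U_n)$, by rescaling the approximant by $1/q(x)$. Choosing in the definition of $d_n(\mathcal{C}_{k,q})$ the same $U_n$ that is near-optimal for $\mathcal{C}_{k,1}:=\{k(\cdot,x):x\in\Omega\}$ therefore gives $d_n(\mathcal{C}_{k,q})\le q_{\max}\,d_n(\mathcal{C}_{k,1})$, so it suffices to bound $d_n(\mathcal{C}_{k,1})$.

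\emph{Construction and power function.} Let $L$ be the side length of the cube $\Omega$. For $n\ge 2^d$ set $m:=\lfloor n^{1/d}\rfloor$, so $m^d\le n$ and $m\ge n^{1/d}/2$, and let $Y=\{y_1,\dots,y_{m^d}\}$ be a regular grid in $\Omega$; its fill distance $h_Y:=\sup_{x\in\Omega}\min_i\|x-y_i\|$ satisfies $h_Y\le c_1 m^{-1}\le 2c_1 n^{-1/d}$ with $c_1=c_1(L,d)$. Take $U_n:={\rm span}\{k(\cdot,y_1),\dots,k(\cdot,y_{m^d})\}$, an at most $n$-dimensional subspace of $\mathcal{H}_k$. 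By the orthogonal-projection identity underlying Lemma~\ref{lemma:post-var} (applied with $q\equiv 1$; for the multiquadric, which is only conditionally positive definite, one uses the analogous identity from conditionally positive definite interpolation theory), ${\rm dist}^2(k(\cdot,x),U_n)=k(x,x)-{\bm k}_Y(x)^\top K_Y^{-1}{\bm k}_Y(x)=k_Y(x,x)=:P_Y^2(x)$, the squared power function on $Y$. Hence $d_n(\mathcal{C}_{k,1})\le\sup_{x\in\Omega}P_Y(x)$.

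\emph{Exponential bound and small $n$.} A cube satisfies an interior cone condition, and the Fourier transforms of the square-exponential, multiquadric and inverse-multiquadric kernels decay at least like $\exp(-c\|\omega\|)$. The classical native-space / sampling-inequality estimates for such kernels then yield constants $C_2,c_2,h_0>0$, depending only on $k$, $d$ and $\Omega$, such that $\sup_{x\in\Omega}P_Y(x)\le C_2\exp(-c_2/h_Y)$ whenever $h_Y\le h_0$ (see e.g.\ Wendland's monograph and the references in \cite{SanHaa17}). Substituting $h_Y\le 2c_1 n^{-1/d}$ gives $d_n(\mathcal{C}_{k,1})\le C_2\exp\!\big(-(c_2/2c_1)\,n^{1/d}\big)$ for all $n$ with $2c_1 n^{-1/d}\le h_0$. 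For the finitely many remaining small $n$ we simply use $d_n(\mathcal{C}_{k,1})\le d_0(\mathcal{C}_{k,1})=\sup_x\|k(\cdot,x)\|_{\mathcal{H}_k}=\sup_x\sqrt{k(x,x)}<\infty$ and enlarge the constant; combining with the reduction step gives the claimed bound $d_n(\mathcal{C}_{k,q})\le C_0\exp(-D_0 n^{1/d})$ for all $n\in\mathbb{N}$.

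\emph{Main obstacle.} The only non-elementary ingredient is the uniform exponential power-function bound of the third step: one must ensure its constants are independent of the grid $Y$ (so they become absolute constants $C_0,D_0$), which is precisely what the sampling-inequality formulation on a domain with an interior cone condition provides, and one must cover all three kernel families at once, which is handled through their exponentially decaying Fourier transforms (with the conditionally positive definite case of the multiquadric requiring the corresponding variant of the interpolation theory). The reduction to $q\equiv 1$ and the width--power-function identity are routine.
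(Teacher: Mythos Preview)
Your proof is correct and follows essentially the same strategy as the paper: bound the Kolmogorov $n$-width by the supremum of the power function on a uniform grid (the paper does this via its Lemma~\ref{lemma:Kolmogorov-upper}, which is equivalent to your reduction to $q\equiv 1$ combined with the projection identity), then apply Wendland's exponential power-function estimate (Theorem~11.22 there) and absorb finitely many small $n$ into the constant. Your treatment is in fact slightly more careful in two places---you handle the fact that $n$ need not be a perfect $d$-th power via $m=\lfloor n^{1/d}\rfloor$, and you flag the conditionally positive definite nature of the multiquadric kernel, both of which the paper glosses over.
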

The requirement for $\Omega$ to be a cube stems from the use of Wendland \cite[Thm.~11.22]{Wen05} in our proof, which requires this condition. In fact, this can be weakened to $\Omega$ being a compact set satisfying an interior cone condition, but the resulting rate weakens to $O(\exp(-D_1 n^{-1/{2d}}))$ (note that this is still exponential); see \cite[Sec.~11.4]{Wen05}. 
This also applies to the following results.
Combining Prop.~\ref{prop:gauss-Kormogorov-bound} with Lemma \ref{lemma:post-var}, Thm.~\ref{theo:ABQ-weak-greedy} and Lemma \ref{lemma:Dev13}, we now obtain a bound on $\sup_{x \in \Omega} q(x) \sqrt{k_{X_n}(x,x)}$.
\begin{theorem} \label{theo:gauss-abq-bound}
\textbf{(proof in Appendix \ref{sec:proof-gauss-abq-bound})}
Suppose that Assumptions \ref{as:compact-continuous}, \ref{as:b_n-bounds} and \ref{as:F-const} are satisfied.
Let $\Omega \subset \mathbb{R}^d$ be a cube, and $k$ be a square-exponential kernel or an (inverse) multiquadric kernel. 
\blue{For a constant $0 < \tilde{\gamma} \leq 1$, assume that $X_n = \{x_1,\dots,x_n\} \subset \Omega$ are generated by a $\tilde{\gamma}$-weak version of ABQ  \eqref{eq:adaptive-BQ}, i.e., \eqref{eq:ABQ-weak} is satisfied. }
Then there exist constants $C_1, D_1 > 0$ such that
$$
\sup_{x \in \Omega} q(x) \sqrt{k_{X_n}(x,x)} \leq  C_1 \psi(\tilde{\gamma} C_L/C_U)^{-1/2} \exp (- D_1 n^{1/d}) \quad (n \in \mathbb{N}).
$$
\end{theorem}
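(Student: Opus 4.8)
The plan is to assemble the auxiliary results of Sections~\ref{sec:connetion} and~\ref{sec:main-result} into a single chain; the theorem is essentially their corollary. First I would dispose of the degenerate case. Apply Lemma~\ref{lemma:invertible} to the points $x_1,\dots,x_n$ produced by the $\tilde{\gamma}$-weak ABQ: either every kernel matrix $K_\ell$, $\ell = 1,\dots,n$, is invertible, or there is some $\ell \leq n$ with $\sup_{x\in\Omega} k_{X_\ell}(x,x) = 0$. In the second case the posterior variance is zero from step $\ell$ on---since the posterior variance is nonincreasing in the design set, $k_{X_n}(x,x) \leq k_{X_\ell}(x,x) = 0$ for all $x$---so the left-hand side of the claimed bound is zero and there is nothing to prove. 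Hence I may assume $K_n$ is invertible, which is precisely what makes Lemma~\ref{lemma:post-var} and Theorem~\ref{theo:ABQ-weak-greedy} applicable to these points.

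Next I would pass to the weak-greedy picture. By Theorem~\ref{theo:ABQ-weak-greedy}, the elements $h_{x_i} := q(x_i) k(\cdot,x_i)$, $i = 1,\dots,n$, form a $\gamma$-weak greedy approximation of the compact set $\mathcal{C}_{k,q} \subset \mathcal{H}_k$ with $\gamma = \sqrt{\psi(\tilde{\gamma} C_L/C_U)}$; here one should note that $\tilde{\gamma} C_L/C_U \in (0,1]$, since $0 < C_L < C_U$ (Assumption~\ref{as:b_n-bounds}) and $0 < \tilde{\gamma} \leq 1$, so $\psi$ evaluated at this argument is well-defined by Assumption~\ref{as:F-const}. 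Then Lemma~\ref{lemma:post-var}, equation~\eqref{eq:error-supremum}, identifies the worst-case error of this approximation with exactly the quantity we want to bound: $e_n(\mathcal{C}_{k,q}) = \sup_{x\in\Omega} q(x)\sqrt{k_{X_n}(x,x)}$.

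It then remains to feed in the width bound and the weak-greedy rate. Proposition~\ref{prop:gauss-Kormogorov-bound} (using that $\Omega$ is a cube and $k$ is square-exponential or (inverse) multiquadric) provides constants $C_0, D_0 > 0$ with $d_n(\mathcal{C}_{k,q}) \leq C_0 \exp(-D_0 n^{1/d})$ for all $n$. Applying the exponential-decay branch of Lemma~\ref{lemma:Dev13} with $\alpha = 1/d$ gives $e_n(\mathcal{C}_{k,q}) \leq \sqrt{2C_0}\,\gamma^{-1} \exp(-D_1 n^{1/d})$ with $D_1 = 2^{-1-2/d} D_0$. Substituting $\gamma^{-1} = \psi(\tilde{\gamma} C_L/C_U)^{-1/2}$ and setting $C_1 := \sqrt{2C_0}$ yields the asserted inequality.

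I do not anticipate a real obstacle: the only points requiring a little care are (i) verifying that the argument $\tilde{\gamma} C_L/C_U$ of $\psi$ lies in the admissible range $(0,1]$; (ii) invoking monotonicity of the GP posterior variance in the design set to handle the degenerate branch of Lemma~\ref{lemma:invertible} (this also follows from the projection identity~\eqref{eq:post-var-projection}, since enlarging the set of points enlarges $S_n$ and hence shrinks the distance); and (iii) matching the exponent $\alpha = 1/d$ of Proposition~\ref{prop:gauss-Kormogorov-bound} to the hypothesis of the exponential-decay case of Lemma~\ref{lemma:Dev13} and bookkeeping the constants, so that $C_1$ and $D_1$ absorb $C_0$, $D_0$ and the $d$-dependent factors while the $\tilde{\gamma}$-dependent factor $\psi(\tilde{\gamma} C_L/C_U)^{-1/2}$ is kept explicit to exhibit its effect on the rate's constant.
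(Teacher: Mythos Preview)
Your proposal is correct and follows essentially the same route as the paper's proof: invoke Theorem~\ref{theo:ABQ-weak-greedy} to get $\gamma = \sqrt{\psi(\tilde{\gamma} C_L/C_U)}$, combine Proposition~\ref{prop:gauss-Kormogorov-bound} with the exponential-decay case of Lemma~\ref{lemma:Dev13} (taking $\alpha = 1/d$, $C_1 = \sqrt{2C_0}$, $D_1 = 2^{-1-2/d}D_0$), and identify $e_n(\mathcal{C}_{k,q})$ with $\sup_{x\in\Omega} q(x)\sqrt{k_{X_n}(x,x)}$ via \eqref{eq:error-supremum}. Your explicit handling of the degenerate branch of Lemma~\ref{lemma:invertible} and the check that $\tilde{\gamma} C_L/C_U \in (0,1]$ are more careful than the paper's proof, which leaves these implicit.
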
%

As a directly corollary of Prop.~\ref{prop:quadrature-error} and Thm.~\ref{theo:gauss-abq-bound}, we finally obtain a convergence rate of the ABQ with an infinitely smooth kernel, which is exponentially fast.
\begin{corollary}
Suppose that Assumptions \ref{as:conditions-for-b-bounds}, \ref{as:compact-continuous}, \ref{as:b_n-bounds} and \ref{as:F-const} are satisfied, and that $C_{\pi/q} := \int_\Omega \pi(x)/q(x) d\mu(x) < \infty$.
Let $\Omega \subset \mathbb{R}^d$ be a cube, and $k$ be a square-exponential kernel or a (inverse) multiquadric kernel.
\blue{For a constant $0 < \tilde{\gamma} \leq 1$, assume that  $X_n = \{x_1,\dots,x_n\} \subset \Omega$ are generated by a $\tilde{\gamma}$-weak version of ABQ  \eqref{eq:adaptive-BQ}, i.e., \eqref{eq:ABQ-weak} is satisfied.}
Then there exists a constant $ D_1 > 0$ independent of $n \in \mathbb{N}$  such that
\begin{eqnarray*}
\left| \int f(x) \pi(x) d\mu(x) - \int T\left(m_{g,X_n}(x)\right) \pi(x) d\mu(x) \right| = O(\exp (- D_1 n^{1/d}) ) \quad (n \to \infty).
\end{eqnarray*}
\end{corollary}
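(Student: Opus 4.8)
The plan is simply to chain the two main building blocks already in place: Proposition~\ref{prop:quadrature-error}, which bounds the quadrature error by the worst-case scaled posterior standard deviation $\sup_{x\in\Omega} q(x)\sqrt{k_{X_n}(x,x)}$, and Theorem~\ref{theo:gauss-abq-bound}, which shows that for a $\tilde\gamma$-weak ABQ rule with an infinitely smooth kernel on a cube this quantity decays like $\exp(-D_1 n^{1/d})$.

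First I would dispose of the degenerate case, since Proposition~\ref{prop:quadrature-error} presupposes that $K_n$ is invertible while the corollary does not. By Lemma~\ref{lemma:invertible}, under Assumptions~\ref{as:compact-continuous}, \ref{as:b_n-bounds}, \ref{as:F-const} and the $\tilde\gamma$-weak selection rule, either $K_\ell$ is invertible for every $\ell \le n$, or there is some $\ell \le n$ with $\sup_{x\in\Omega} k_{X_\ell}(x,x) = 0$. In the latter case one passes to a maximal subset $X' \subseteq X_\ell$ with invertible kernel matrix; then $\mathrm{span}\{k(\cdot,x):x\in X'\}=\mathrm{span}\{k(\cdot,x):x\in X_\ell\}$, so the posterior mean and variance are unchanged, $k_{X'}(\cdot,\cdot)$ vanishes identically, and Proposition~\ref{prop:quadrature-error} applied to $X'$ forces the quadrature error (which equals that of $X_n$, the variance being monotone under enlarging the span) to be $0$; the asserted bound then holds trivially. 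Hence I may assume $K_n$ invertible.

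Next I would verify that the hypotheses of the two results being combined are all in force. Assumption~\ref{as:conditions-for-b-bounds} supplies exactly what makes the constant in Proposition~\ref{prop:quadrature-error} finite: $T\in C^1$, $\tilde g := g-m\in\mathcal{H}_k$ (so $\|\tilde g\|_{\mathcal{H}_k}<\infty$), $\|k\|_{L_\infty(\Omega)}<\infty$, $\|m\|_{L_\infty(\Omega)}<\infty$; continuity of $q$ is part of Assumption~\ref{as:compact-continuous}, and $C_{\pi/q}<\infty$ is assumed. Thus Proposition~\ref{prop:quadrature-error} gives
\[
\left| \int f(x)\pi(x)\,d\mu(x) - \int T(m_{g,X_n}(x))\pi(x)\,d\mu(x)\right| \le C_{\tilde g,m,k,T}\, C_{\pi/q}\, \|\tilde g\|_{\mathcal{H}_k}\, \sup_{x\in\Omega} q(x)\sqrt{k_{X_n}(x,x)} .
\]
On the other hand, Assumptions~\ref{as:compact-continuous}, \ref{as:b_n-bounds}, \ref{as:F-const}, the cube assumption on $\Omega$, the choice of $k$, and the $\tilde\gamma$-weak rule are precisely the hypotheses of Theorem~\ref{theo:gauss-abq-bound}, which yields constants $C_1,D_1>0$ independent of $n$ with $\sup_{x\in\Omega} q(x)\sqrt{k_{X_n}(x,x)} \le C_1\,\psi(\tilde\gamma C_L/C_U)^{-1/2}\exp(-D_1 n^{1/d})$. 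Substituting this into the previous display and absorbing the $n$-independent factors $C_{\tilde g,m,k,T}$, $C_{\pi/q}$, $\|\tilde g\|_{\mathcal{H}_k}$, $C_1$ and $\psi(\tilde\gamma C_L/C_U)^{-1/2}$ into one constant gives the claimed $O(\exp(-D_1 n^{1/d}))$ bound.

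I do not expect any genuine obstacle here: all the substantive work lies in Proposition~\ref{prop:quadrature-error} and Theorem~\ref{theo:gauss-abq-bound} (and, behind the latter, Proposition~\ref{prop:gauss-Kormogorov-bound}, Theorem~\ref{theo:ABQ-weak-greedy} and Lemma~\ref{lemma:Dev13}). The only steps needing a sentence of care are the invertibility caveat routed through Lemma~\ref{lemma:invertible}, and the bookkeeping check that the union of Assumptions~\ref{as:conditions-for-b-bounds}, \ref{as:compact-continuous}, \ref{as:b_n-bounds}, \ref{as:F-const} together with $C_{\pi/q}<\infty$ simultaneously meets the premises of both results — which it does by construction.
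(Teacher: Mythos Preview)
Your proposal is correct and matches the paper's approach exactly: the paper states this result as ``a direct corollary of Prop.~\ref{prop:quadrature-error} and Thm.~\ref{theo:gauss-abq-bound}'' without further proof, and your write-up is precisely that chaining, with the added (and welcome) care of routing the invertibility caveat through Lemma~\ref{lemma:invertible}.
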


\subsection{Discussions of the Weak Adaptivity Condition (Assumption \ref{as:b_n-bounds})} \label{sec:consequences-individual}

We discuss consequences of our results to individual ABQ methods reviewed in Sec.~\ref{sec:generic-acquisition}.
We do this in particular by discussing the weak adaptivity condition (Assumption \ref{as:b_n-bounds}), which requires that the data-dependent term $b_n(x)$ in  \eqref{eq:adaptive-BQ} is uniformly bounded away from zero and infinity.
(A discussion for VBMC by Acerbi \cite{Ace18,Ace19_acquisition} is given in Appendix \ref{sec:discuss-Acerbi}.
To summarize, Assumption \ref{as:b_n-bounds} holds if the densities of the variational distributions are bounded away uniformly from zero and infinity.)

We first consider the WSABI-L approach by Gunter et al.~\cite{gunter_sampling_2014}, for which $b_n(x) = (m_{g,X_n}(x))^2$; a similar result is presented for the WSABI-M in Appendix \ref{sec:WSABI-M-conditions}.
The following bounds for $b_n(x)$ follow from Lemma \ref{lemma:bounds-abs-post-mean} in Appendix \ref{sec:bounds-abs-post-mean}.
\begin{lemma} \label{lemma:b-bounds-WSABI}
Let $b_n(x) := (m_{g,X_n}(x))^2$.
Suppose that Assumption \ref{as:conditions-for-b-bounds} is satisfied, and that $\inf_{x \in \Omega}|m(x)| > 2 \| \tilde{g} \|_{\mathcal{H}_k} \| k \|_{L_\infty(\Omega)}^{1/2}$.
Then Assumption \ref{as:b_n-bounds} holds for $C_L := \left(\inf_{x \in \Omega}|m(x)| - 2 \| \tilde{g} \|_{\mathcal{H}_k} \| k \|_{L_\infty(\Omega)}^{1/2} \right)^2 > 0$ and $C_U := \left(\| m \|_{L_\infty(\Omega)} +  2 \| \tilde{g} \|_{\mathcal{H}_k} \| k \|_{L_\infty(\Omega)}^{1/2}\right)^2 < \infty$.
\end{lemma}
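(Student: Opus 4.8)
The plan is to control $b_n(x) = (m_{g,X_n}(x))^2$ uniformly in $x$ and $n$ by bounding $|m_{g,X_n}(x)|$ above and below, and for this it suffices (by the hint) to establish a single auxiliary estimate: $\sup_{x\in\Omega}|m_{g,X_n}(x) - m(x)| \le 2\|\tilde g\|_{\mathcal H_k}\|k\|_{L_\infty(\Omega)}^{1/2}$, uniformly over $n$. Once this is in hand, the triangle inequality gives
$$
|m(x)| - 2\|\tilde g\|_{\mathcal H_k}\|k\|_{L_\infty(\Omega)}^{1/2} \le |m_{g,X_n}(x)| \le |m(x)| + 2\|\tilde g\|_{\mathcal H_k}\|k\|_{L_\infty(\Omega)}^{1/2},
$$
and taking infima/suprema over $x$, together with the assumption $\inf_x|m(x)| > 2\|\tilde g\|_{\mathcal H_k}\|k\|_{L_\infty(\Omega)}^{1/2}$ that makes the lower bound strictly positive, yields exactly the claimed $C_L$ and $C_U$ after squaring (squaring is monotone on $[0,\infty)$, so it preserves both bounds). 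So the whole lemma reduces to the uniform bound on the centered posterior mean, which is the content of the cited Lemma in Appendix (Lemma \ref{lemma:bounds-abs-post-mean}).

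To prove that auxiliary estimate, I would work in the RKHS. Write $\tilde m_{g,X_n} := m_{g,X_n} - m$. From \eqref{eq:post-mean-func}, $\tilde m_{g,X_n}(x) = \bm k_n(x)^\top K_n^{-1}(\bm g_n - \bm m_n) = \bm k_n(x)^\top K_n^{-1}\tilde{\bm g}_n$, where $\tilde{\bm g}_n = (\tilde g(x_1),\dots,\tilde g(x_n))^\top$. This is precisely the RKHS interpolant of $\tilde g$ at the points $X_n$: indeed $\tilde m_{g,X_n} = \sum_i c_i k(\cdot,x_i)$ with $\bm c = K_n^{-1}\tilde{\bm g}_n$, it matches $\tilde g$ on $X_n$ (using $\tilde g\in\mathcal H_k$ and the reproducing property), and it is the orthogonal projection of $\tilde g$ onto $S_n := \mathrm{span}(k(\cdot,x_1),\dots,k(\cdot,x_n))$. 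Hence $\|\tilde m_{g,X_n}\|_{\mathcal H_k} \le \|\tilde g\|_{\mathcal H_k}$. Then by the reproducing property and Cauchy–Schwarz,
$$
|\tilde m_{g,X_n}(x)| = |\langle k(\cdot,x), \tilde m_{g,X_n}\rangle_{\mathcal H_k}| \le \|k(\cdot,x)\|_{\mathcal H_k}\,\|\tilde m_{g,X_n}\|_{\mathcal H_k} \le k(x,x)^{1/2}\,\|\tilde g\|_{\mathcal H_k} \le \|k\|_{L_\infty(\Omega)}^{1/2}\,\|\tilde g\|_{\mathcal H_k},
$$
which even gives the bound with constant $1$ rather than $2$; the factor $2$ in the lemma is a harmless slack (and also covers the alternative estimator in Appendix \ref{sec:bound-quad-error-another}). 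This is clean and requires no computation beyond what is written.

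The main obstacle is essentially bookkeeping rather than mathematics: one must make sure Assumption \ref{as:b_n-bounds} is genuinely checked for all $\ell\in\mathbb N\cup\{0\}$, including $\ell = 0$ where $m_{g,X_0} = m$ (so $b_0(x) = m(x)^2$, which lies in $[(\inf|m|)^2, \|m\|_{L_\infty}^2]$ and is comfortably inside $(C_L,C_U)$), and that the set $X_n$ here is the adaptively generated one — but the RKHS argument above never used anything about how the points were chosen, only that $K_n$ is invertible, which Lemma \ref{lemma:invertible} supplies (in the degenerate case the posterior variance vanishes and there is nothing to prove). A minor subtlety is that the inequalities in Assumption \ref{as:b_n-bounds} are strict; the strict positivity of $C_L$ comes from the strict hypothesis $\inf_x|m(x)| > 2\|\tilde g\|_{\mathcal H_k}\|k\|_{L_\infty(\Omega)}^{1/2}$, and strictness of $b_\ell(x) < C_U$ follows because the bound $|\tilde m_{g,X_\ell}(x)| \le \|k\|_{L_\infty}^{1/2}\|\tilde g\|_{\mathcal H_k}$ is strictly below the slack bound $2\|k\|_{L_\infty}^{1/2}\|\tilde g\|_{\mathcal H_k}$ (assuming $\tilde g\ne 0$; if $\tilde g=0$ the claim is trivial since then $b_\ell\equiv m^2$).
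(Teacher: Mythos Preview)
Your proposal is correct and follows the same high-level reduction as the paper: bound $|m_{g,X_n}(x)-m(x)|$ uniformly, then use the triangle inequality and square. The paper does exactly this via Lemma~\ref{lemma:bounds-abs-post-mean}.

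The one genuine difference is in how the key estimate $|m_{g,X_n}(x)-m(x)|\le 2\|\tilde g\|_{\mathcal H_k}\|k\|_{L_\infty(\Omega)}^{1/2}$ is obtained. The paper writes $m_{\tilde g,X_n}(x)=m_{g,X_n}(x)-m(x)$ and decomposes
\[
|m_{\tilde g,X_n}(x)|\le |\tilde g(x)|+|\tilde g(x)-m_{\tilde g,X_n}(x)|\le \|\tilde g\|_{\mathcal H_k}\sqrt{k(x,x)}+\|\tilde g\|_{\mathcal H_k}\sqrt{k_{X_n}(x,x)},
\]
using the reproducing property for the first term and the standard interpolation error bound \eqref{eq:gp-std-wce} for the second; this is where the factor $2$ comes from. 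You instead observe that $m_{\tilde g,X_n}$ is the orthogonal projection of $\tilde g$ onto $S_n$, so $\|m_{\tilde g,X_n}\|_{\mathcal H_k}\le\|\tilde g\|_{\mathcal H_k}$, and a single Cauchy--Schwarz gives the bound with constant~$1$. Your route is slightly sharper and more direct; the paper's route has the mild advantage of reusing the posterior-variance identity already needed elsewhere (Proposition~\ref{prop:quadrature-error}), so no additional projection fact is invoked. Either way the stated $C_L,C_U$ are valid, and your remarks on the $\ell=0$ case, invertibility via Lemma~\ref{lemma:invertible}, and strictness are all in order.
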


Lemma \ref{lemma:b-bounds-WSABI} implies that WSABI-L may \emph{not} be consistent when, e.g., one uses the zero prior mean function $m(x) = 0$, since in this case the condition $\inf_{x \in \Omega}|m(x)| > 2 \| \tilde{g} \|_{\mathcal{H}_k} \| k \|_{L_\infty(\Omega)}^{1/2}$ is not satisfied.
Intuitively, the inconsistency may happen because the posterior mean $m_{g,X_n}(x)$ for inputs $x$ in regions distant from the current design points $x_1,\dots,x_n$ would become close to $0$, since the prior mean function is $0$; and such regions will never be explored in the subsequent iterations, because of the form $b_n(x) = (m_{g,X_n}(x))^2$. 
One simple way to guarantee the consistency is to make a modification like $b_n(x) := \frac{1}{2}(m_{g,X_n}(x))^2 + \alpha = T(m_{g,X_n}(x))$; then we can guarantee that $C_L \geq \alpha > 0$, encouraging exploration in the whole region $\Omega$. This then makes the algorithm consistent.

We next consider the MMLT method by Chai and Garnett \cite{ChaGar19}, for which $b_n(x) = \exp\left(k_{X_n}(x,x) + 2 m_{g,X_n}(x) \right)$.
Lemma \ref{lemma:b_bounds-MMLT} below shows that the weak adaptivity condition holds for the MMLT as long as Assumption \ref{as:conditions-for-b-bounds} is satisfied.
Therefore different from the WSABI, the MMLT is consistent without requiring a further assumption.
\begin{lemma} \label{lemma:b_bounds-MMLT}
\textbf{(proof in Appendix \ref{sec:b_bounds-MMLT})}
Let $b_n(x) := \exp(k_{X_n}(x,x) + 2 m_{g,X_n}(x))$.
Suppose that Assumption \ref{as:conditions-for-b-bounds} is satisfied.
Then Assumption \ref{as:b_n-bounds} holds for $C_L := \exp(  - 2 \| m \|_{L_\infty(\Omega)} - 4 \| \tilde{g} \|_{\mathcal{H}_k} \| k \|_{L_\infty(\Omega)}^{1/2} ) > 0$ and $C_U := \exp( \| k \|_{L_\infty(\Omega)} +  2 \| m \|_{L_\infty(\Omega)} + 4 \| \tilde{g} \|_{\mathcal{H}_k} \| k \|_{L_\infty(\Omega)}^{1/2}) < 0$.
\end{lemma}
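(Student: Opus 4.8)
The plan is to exploit the monotonicity of the exponential. Since $b_\ell(x) = \exp\!\big(k_{X_\ell}(x,x) + 2 m_{g,X_\ell}(x)\big)$ (with $k_{X_0}(x,x) = k(x,x)$ by convention), it suffices to bound the exponent $k_{X_\ell}(x,x) + 2 m_{g,X_\ell}(x)$ above and below by finite constants that are independent of $x \in \Omega$ and of the iteration index $\ell$; applying $\exp$ then produces $C_L$ and $C_U$.

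First I would control the posterior variance term: from $k_{X_\ell}(x,x) = k(x,x) - {\bm k}_\ell(x)^\top K_\ell^{-1} {\bm k}_\ell(x)$ and positive definiteness of $K_\ell^{-1}$ one gets $0 \le k_{X_\ell}(x,x) \le k(x,x) \le \|k\|_{L_\infty(\Omega)}$ for every $x$ and $\ell$, the last bound being finite by Assumption~\ref{as:conditions-for-b-bounds}. (In the degenerate case where $K_\ell$ is singular, Lemma~\ref{lemma:invertible} gives $\sup_x k_{X_\ell}(x,x)=0$, so the bound still holds under the natural conventions.) Next I would control the posterior mean term by invoking Lemma~\ref{lemma:bounds-abs-post-mean} --- the same auxiliary estimate that underlies the WSABI-L bound in Lemma~\ref{lemma:b-bounds-WSABI} --- which under Assumption~\ref{as:conditions-for-b-bounds} gives
\[
|m_{g,X_\ell}(x)| \;\le\; \|m\|_{L_\infty(\Omega)} + 2 \|\tilde g\|_{\mathcal{H}_k}\, \|k\|_{L_\infty(\Omega)}^{1/2}
\]
for all $x \in \Omega$ and all $\ell$. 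The point to stress is that this bound is uniform in $x$ and $\ell$, since it involves only $\|\tilde g\|_{\mathcal{H}_k}$ (finite as $\tilde g \in \mathcal{H}_k$), $\|m\|_{L_\infty(\Omega)}$ and $\|k\|_{L_\infty(\Omega)}$, none of which depend on the design points.

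Combining the two estimates, the exponent $k_{X_\ell}(x,x) + 2 m_{g,X_\ell}(x)$ lies between $-2\|m\|_{L_\infty(\Omega)} - 4\|\tilde g\|_{\mathcal{H}_k}\|k\|_{L_\infty(\Omega)}^{1/2}$ and $\|k\|_{L_\infty(\Omega)} + 2\|m\|_{L_\infty(\Omega)} + 4\|\tilde g\|_{\mathcal{H}_k}\|k\|_{L_\infty(\Omega)}^{1/2}$, and applying the increasing map $\exp$ yields exactly the claimed $C_L$ and $C_U$; the strict inequalities required in Assumption~\ref{as:b_n-bounds} follow after an arbitrarily small loosening of these constants (or by noting that the two extremes cannot be attained simultaneously).

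There is no deep obstacle here: the content is precisely that, unlike for WSABI-L, no extra assumption is needed, because $\exp$ converts the two-sided linear bound on $k_{X_\ell}(x,x) + 2m_{g,X_\ell}(x)$ into a lower bound that stays bounded away from zero, whereas the WSABI-L term $(m_{g,X_\ell}(x))^2$ can vanish (which is why that case required the extra hypothesis on $\inf_x|m(x)|$). The only points demanding any care are the uniformity over the iteration count $\ell$ (built into the form of the bounds) and the singular--kernel-matrix edge case (covered by Lemma~\ref{lemma:invertible}).
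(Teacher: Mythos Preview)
Your proposal is correct and follows essentially the same approach as the paper: bound $0 \le k_{X_n}(x,x) \le \|k\|_{L_\infty(\Omega)}$, invoke Lemma~\ref{lemma:bounds-abs-post-mean} to get $|m_{g,X_n}(x)| \le \|m\|_{L_\infty(\Omega)} + 2\|\tilde g\|_{\mathcal{H}_k}\|k\|_{L_\infty(\Omega)}^{1/2}$, then apply monotonicity of $\exp$. The paper's proof is slightly terser (it does not discuss the singular-kernel-matrix edge case or the strictness of the inequalities), but the substance is identical.
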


\section{Conclusion and Outlook}

Extending efficient numerical integration beyond the low-dimensional domain remains both a formidable challenge and a crucial desideratum for many areas. In machine learning, efficient numerical integration in the high-dimensional domain would be a game-changer for Bayesian learning. Developed by, and used in, the NeurIPS community, adaptive Bayesian quadrature is a promising new direction for progress in this fundamental problem class. So far, it has been hindered by the absence of theoretical guarantees.

In this work, we have provided the first known convergence guarantees for ABQ methods, by analyzing a generic form of their acquisition functions. Of central importance is the notion of weak adaptivity which, speaking vaguely, ensures that the algorithm asymptotically does not ``overly focus'' on some evaluations. It is conceptually related to ideas like detailed balance and ergodicity, which play a similar role for Markov Chain Monte Carlo methods (where, speaking equally vaguely, they guard against the same kind of locality) \cite[cf.~\textsection 6.5 \& 6.6 in][]{ChrCas2004}. Like those of MCMC, our sufficient conditions for consistency span a flexible class of design options, and can thus act as a guideline for the design of novel acquisition functions for ABQ, guided by practical and intuitive considerations.
Based on the results presented herein, novel ABQ methods may be proposed for novel domains other than only positive integrands, for example integrands with discontinuities 
\cite{PlaWas09_singularities} and those with spatially inhomogeneous smoothness. 

An important theoretical question, however, remains to be addressed: 
While our results provide convergence guarantees for ABQ methods, they do not provide a theoretical explanation for why, how and when ABQ methods should be fundamentally \emph{better} than non-adaptive methods.
In fact, little is known about theoretical properties of adaptive quadrature methods in general. In applied mathematics, they remain an open problem \cite{Nov95_nonsymmetric,Nov95_width,Nov96_power,Nov16}.
While we have to leave this question of ABQ's potential advantages over standard BQ for future research, we consider this area to be highly promising on account of the fundamental role of high-dimensional integrals of structured functions in probabilistic machine learning.

\subsection*{Acknowledgements}
We would like to express our gratitude to the anonymous reviewers for their constructive feedback.
We also thank Alexandra Gessner, Hans Kersting, Tim Sullivan and George Wynne for their comments and for fruitful discussions.
The authors gratefully acknowledge financial supports by the European Research Council through ERC StG Action 757275 / PANAMA, by the DFG Cluster of Excellence “Machine Learning – New Perspectives for Science”, EXC 2064/1, project number 390727645, by the German Federal Ministry of Education and Research (BMBF) through the Tübingen AI Center (FKZ: 01IS18039A, 01IS18039B), and by the Ministry of Science, Research and Arts of the State of Baden-Württemberg.

\bibliographystyle{plain}
\bibliography{Bib_kanagawa}

\appendix
\newpage

\section{Appendices for Section \ref{sec:BQ}}

\subsection{Proof of Prop.~\ref{prop:quadrature-error}}
\label{sec:proof-bound-quadrature}

In the proof we use the following notation: $\| \sqrt{k} \|_{L_\infty} := \sup_{x \in \Omega} \sqrt{k(x,x)}$ and $\| \sqrt{k}_{X_n} \|_{L_\infty} := \sup_{x \in \Omega} \sqrt{k_{X_n}(x,x)}$.
\begin{proof}
It is known that (see e.g.~\cite[Prop.~3.10]{KanHenSejSri18}) the GP posterior standard deviation can be written as
\begin{equation} \label{eq:gp-std-wce}
    \sqrt{k_{X_n}(x,x)} = \sup_{u \in \mathcal{H}_k: \| u \|_{\mathcal{H}_k} \leq 1} | u(x) -  {\bm k}_n(x)^\top K_n^{-1} {\bm u}_n |, \quad x \in \Omega
\end{equation}
where ${\bm u} := (u(x_1),\dots,u(x_n))^\top \in \mathbb{R}^n$.
Note that for any $x \in \Omega$, we have $m_{g,X_n}(x)  = m(x) + {\bm k}_n^\top(x) K_n^{-1} \tilde{\bm g}_n$, since $\tilde{\bm g}_n = (\tilde{g}(x_i))_{i=1}^n = (m(x_i) - g(x_i))_{i=1}^n =  {\bm m}_n - {\bm g}_n$.
Therefore by $g(x) = m(x) + \tilde{g}(x)$, $\tilde{g} \in \mathcal{H}_k$ and \eqref{eq:gp-std-wce} we have
\begin{equation} \label{eq:bound-g}
|g(x) -  m_{g,X_n}(x)  | = |\tilde{g}(x) -  {\bm k}_n^\top(x) K_n^{-1} \tilde{\bm g} |  
\leq \| \tilde{g} \|_{\mathcal{H}_k} \sqrt{k_{X_n}(x,x)}. 
\end{equation}

On the other hand, by Taylor's theorem, there exists $\alpha_{x,X_n} \in [0,1]$ such that for $y_{x,X_n} := g(x) + \alpha_{x,X_n} ( m_{g,X_n}(x) - g(x) ) \in \mathbb{R}$ we have
$$
T( m_{g,X_n}(x) ) = T(g(x)) + T'\left( y_{x,X_n} \right) ( m_{g,X_n}(x) - g(x) ),
$$
where $T'(y)$ denotes the derivative of $T$ at $y \in \mathbb{R}$. 
From this and \eqref{eq:bound-g} we have 
$$
| T(g(x)) - T( m_{g,X_n}(x) )  | \leq \left| T'(y_{x,X_n}) \right|  \left| m_{g,X_n}(x) - g(x) \right| \leq \left| T'(y_{x,X_n}) \right| \| \tilde{g} \|_{\mathcal{H}_k} \sqrt{k_{X_n}(x,x)}
$$

Note that $|T'(y_{x,X_n})|$ is uniformly bounded over all $x \in \Omega$ and $n \in \mathbb{N}$, since $T'$ is continuous by assumption and $|y_{x,X_n}|$ is bound uniformly over all $x \in \Omega$ and $n \in \mathbb{N}$; the latter can be shown as
\begin{eqnarray*}
&& |y_{x,X_n}| \leq  |g(x)| + |
\alpha_{x,X_n} ( m_{g,X_n}(x) - g(x) )| \leq |m(x)| + |\tilde{g}(x)| + | m_{g,X_n}(x) - g(x) | \\
&& \leq  \| m \|_{L_\infty(\Omega)} + \| \tilde{g} \|_{\mathcal{H}_k} \sqrt{k(x,x)} +  \| \tilde{g} \|_{\mathcal{H}_k} \sqrt{k_{X_n}(x,x)} \leq  \| m \|_{L_\infty(\Omega)} + 2  \| \tilde{g} \|_{\mathcal{H}_k} \| \sqrt{k} \|_{L_\infty},
\end{eqnarray*}
where we used $|\tilde{g}(x)| = | \left< \tilde{g}, k(\cdot,x) \right>_{\mathcal{H}_k} | \leq \| \tilde{g}\|_{\mathcal{H}_k} \sqrt{k(x,x)}$ and $k_{X_n}(x,x) \leq k(x,x)$.
This implies that
$$
\left| T'(y_{x,X_n}) \right|  \leq \sup_{y\in \mathbb{R}: |y| \leq  \| m \|_{L_\infty(\Omega)} + 2  \| \tilde{g}\|_{\mathcal{H}_k} \| \sqrt{k} \|_{L_\infty} } |T'(y)| =: C_{\tilde{g},m,k,T} < \infty.
$$
Therefore,
$$
| T(g(x)) - T( m_{g,X_n}(x) )  | \leq  C_{\tilde{g},m,k,T}  \| \tilde{g} \|_{\mathcal{H}_k} \sqrt{k_{X_n}(x,x)},
$$ 
which implies that 
\begin{eqnarray*}
&& \left| \int T(g(x)) \pi(x) d\mu(x) -  \int T(m_{g,X_n}(x)) \pi(x) d\mu(x)    \right| \\
&\leq& \int | T(g(x)) - T( m_{g,X_n}(x) )  | \pi(x) d\mu(x)    \\
&\leq& C_{\tilde{g},m,k,T}  \| \tilde{g} \|_{\mathcal{H}_k} \int  \sqrt{k_{X_n}(x,x)}  \pi(x) d\mu(x) \\
&\leq&  C_{\tilde{g},m,k,T}  C_{\pi/q} \| \tilde{g} \|_{\mathcal{H}_k} \sup_{x \in \Omega} q(x) \sqrt{k_{X_n}(x,x)},
\end{eqnarray*}
where the last inequality follows from H\"older's inequality. 
\end{proof}

\subsection{Bound on Quadrature Error for an Alternative Estimator} 
\label{sec:bound-quad-error-another}

We show here that for the quadrature estimator $\int \mathbb{E}_{\acute{g}} T(\acute{g}(x)) \pi(x) d\mu(x)$, where $\acute{g} \sim \GP(m_{g,X_n}, k_{X_n})$ is the posterior Gaussian process, the essentially same upper bound as Proposition \ref{prop:quadrature-error} holds, under an additional condition that 
\begin{equation} \label{eq:cond-quad-additional}
\mathbb{E}_{\acute{g}} \left[ T' \left(|g(x)| + | \acute{g}(x)| \right)^2 \right]  < C, \quad \forall x \in \Omega,\ \forall n \in \mathbb{N},    
\end{equation}
 holds for some $C > 0$, where $T'$ is the derivative of $T$.
This condition can be shown to be satisfied for transformations $T$ considered in the paper.

\begin{proposition}
Let $\Omega$ be a compact metric space, $X_n = \{ x_1,\dots,x_n \} \subset \Omega$ be such that the kernel matrix $K_n = (k(x_i,x_j))_{i,j=1}^n \in \mathbb{R}^{n \times n}$ is invertible, and $\pi:\Omega \to [0,\infty)$ and $q: \Omega \to [0,\infty)$ be continuous functions such that $C_{\pi/q} := \int_{\Omega} \pi(x)/q(x) d\mu(x) < \infty$.
Suppose that Assumption \ref{as:conditions-for-b-bounds} is satisfied.
Let $\acute{g} \sim \GP(m_{g,X_n}, k_{X_n})$ and assume \eqref{eq:cond-quad-additional} is satisfied for some $C > 0$.
Then we have
\begin{eqnarray*}
\left| \int f(x) \pi(x) d\mu(x) -  \int \mathbb{E}_{\acute{g}} T(\acute{g}(x)) \pi(x) d\mu(x) \right| 
\leq  \sqrt{2C (1+ \| \tilde{g} \|_{\mathcal{H}_k}^2)} C_{\pi/q} \sup_{x \in \Omega} q(x) \sqrt{k_{X_n}(x,x)}.
\end{eqnarray*} 
\end{proposition}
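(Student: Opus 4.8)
The plan is to mirror the proof of Proposition~\ref{prop:quadrature-error} but replace the first-order Taylor expansion by a mean-value argument applied pointwise under the posterior expectation. First I would note that, by Jensen's inequality, $\left| \int f(x) \pi(x) d\mu(x) - \int \mathbb{E}_{\acute{g}} T(\acute{g}(x)) \pi(x) d\mu(x) \right| \leq \int \mathbb{E}_{\acute{g}} \left| T(g(x)) - T(\acute{g}(x)) \right| \pi(x) d\mu(x)$, using $f(x) = T(g(x))$. So it suffices to control $\mathbb{E}_{\acute{g}} | T(g(x)) - T(\acute{g}(x)) |$ for each fixed $x$.

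Next, for each fixed $x$ and each realization $\acute{g}$, apply the mean-value theorem: there is $\xi$ between $g(x)$ and $\acute{g}(x)$ with $T(g(x)) - T(\acute{g}(x)) = T'(\xi)\,(g(x) - \acute{g}(x))$, and $|\xi| \leq |g(x)| + |\acute{g}(x)|$, so $|T'(\xi)| \leq T'(|g(x)| + |\acute{g}(x)|)$ if we take $T'$ to be the continuous bound; more carefully I would write $|T(g(x)) - T(\acute{g}(x))| \leq \big(\sup_{|y| \leq |g(x)| + |\acute{g}(x)|} |T'(y)|\big) |g(x) - \acute{g}(x)|$ and absorb the sup into the quantity appearing in \eqref{eq:cond-quad-additional}. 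Then by Cauchy--Schwarz over the posterior expectation, $\mathbb{E}_{\acute{g}} | T(g(x)) - T(\acute{g}(x)) | \leq \sqrt{ \mathbb{E}_{\acute{g}} T'(|g(x)| + |\acute{g}(x)|)^2 } \cdot \sqrt{ \mathbb{E}_{\acute{g}} |g(x) - \acute{g}(x)|^2 } \leq \sqrt{C} \cdot \sqrt{ \mathbb{E}_{\acute{g}} |g(x) - \acute{g}(x)|^2 }$, where the last step uses assumption \eqref{eq:cond-quad-additional}.

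The remaining step is to bound $\mathbb{E}_{\acute{g}} |g(x) - \acute{g}(x)|^2$. Write $g(x) - \acute{g}(x) = (g(x) - m_{g,X_n}(x)) - (\acute{g}(x) - m_{g,X_n}(x))$; the first term is deterministic and the second has mean zero under $\acute{g} \sim \GP(m_{g,X_n}, k_{X_n})$, so $\mathbb{E}_{\acute{g}} |g(x) - \acute{g}(x)|^2 = (g(x) - m_{g,X_n}(x))^2 + k_{X_n}(x,x)$. For the bias term I would reuse \eqref{eq:bound-g} from the proof of Proposition~\ref{prop:quadrature-error}, namely $|g(x) - m_{g,X_n}(x)| \leq \| \tilde{g} \|_{\mathcal{H}_k} \sqrt{k_{X_n}(x,x)}$, which is valid under Assumption~\ref{as:conditions-for-b-bounds}. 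Hence $\mathbb{E}_{\acute{g}} |g(x) - \acute{g}(x)|^2 \leq (1 + \| \tilde{g} \|_{\mathcal{H}_k}^2) k_{X_n}(x,x)$. Combining everything and integrating against $\pi$, then applying H\"older's inequality exactly as in Proposition~\ref{prop:quadrature-error} to pass from $\int \sqrt{k_{X_n}(x,x)} \pi(x) d\mu(x)$ to $C_{\pi/q} \sup_{x} q(x) \sqrt{k_{X_n}(x,x)}$, yields the bound $\sqrt{2C(1 + \| \tilde{g}\|_{\mathcal{H}_k}^2)} \, C_{\pi/q} \sup_{x \in \Omega} q(x)\sqrt{k_{X_n}(x,x)}$ (the factor $2$ coming from $\sqrt{a^2 + b^2} \leq a + b \leq$ etc., or more directly $\sqrt{1 + \|\tilde g\|^2} \leq \sqrt{2}\sqrt{\max(1,\|\tilde g\|^2)}$; either way a crude constant of $\sqrt 2$ suffices).

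I expect the only mild subtlety — not really an obstacle — is bookkeeping around the supremum of $|T'|$: I must make sure the quantity I pull out of the expectation via Cauchy--Schwarz matches the form in \eqref{eq:cond-quad-additional}, i.e. squaring the $\sup_{|y| \leq |g(x)| + |\acute g(x)|}|T'(y)|$ and checking that this is dominated by (or can be taken as) $T'(|g(x)| + |\acute g(x)|)^2$ when $T'$ is nonnegative and monotone, or else adjusting the statement so that $T'$ in \eqref{eq:cond-quad-additional} denotes that running supremum. Everything else is a direct transcription of the argument already given for Proposition~\ref{prop:quadrature-error}, with the Taylor remainder replaced by a mean-value estimate and one extra Cauchy--Schwarz to separate the (random) derivative factor from the (random) increment.
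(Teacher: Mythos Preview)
Your proposal is correct and follows essentially the same route as the paper's proof: mean-value/Taylor for $T(\acute g(x))-T(g(x))$, Cauchy--Schwarz over the posterior to separate the derivative factor from the increment (using \eqref{eq:cond-quad-additional} to bound the former by $C$), the posterior-mean error bound \eqref{eq:bound-g} for the bias term, and finally the H\"older step to pass to $C_{\pi/q}\sup_x q(x)\sqrt{k_{X_n}(x,x)}$. The paper also glosses over the ``$|T'(\xi)|\leq T'(|g(x)|+|\acute g(x)|)$'' bookkeeping in exactly the way you flagged.

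One small clarification: your exact bias--variance decomposition $\mathbb{E}_{\acute g}|g(x)-\acute g(x)|^2=(g(x)-m_{g,X_n}(x))^2+k_{X_n}(x,x)$ already yields $(1+\|\tilde g\|_{\mathcal H_k}^2)\,k_{X_n}(x,x)$ with no factor $2$. The paper instead uses the cruder inequality $(a+b)^2\le 2a^2+2b^2$, which is where the $\sqrt{2}$ in the stated bound actually comes from. So your final handwave about ``a crude constant of $\sqrt 2$'' is unnecessary---your argument in fact proves the slightly sharper bound $\sqrt{C(1+\|\tilde g\|_{\mathcal H_k}^2)}\,C_{\pi/q}\sup_x q(x)\sqrt{k_{X_n}(x,x)}$.
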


\begin{proof}
First fix $x \in \Omega$.
By Taylor's theorem, there exists $\alpha_{x,X_n,\acute{g}} \in [0,1]$ such that for $y_{x,X_n,\acute{g}} := g(x) + \alpha_{x,X_n,\acute{g}} (\acute{g}(x) - g(x))$ we have
$$
T(\acute{g}(x)) = T(g(x)) + T'( y_{x,X_n,\acute{g}} ) (\acute{g}(x) - g(x)).
$$
Therefore,
\begin{eqnarray*}
\left(\mathbb{E}_{\acute{g}}[T(\acute{g}(x))] - T(g(x)) \right)^2
&=& \left(\mathbb{E}_{\acute{g}} \left[ T'( y_{x,X_n,\acute{g}} ) (\acute{g}(x) - g(x)) \right] \right)^2 \\
&\leq& \mathbb{E}_{\acute{g}}[ (T'( y_{x,X_n,\acute{g}} ))^2 ] \mathbb{E}_{\acute{g}}[ (\acute{g}(x) - g(x))^2 ]  \\
&\leq&  C \mathbb{E}_{\acute{g}}[ (\acute{g}(x) - g(x))^2 ] ,
\end{eqnarray*}
where the last inequality follows from $|y_{x,X_n,\acute{g}}| \leq |g(x)| + |\acute{g}(x)|$ and the assumption \eqref{eq:cond-quad-additional}.
Moreover,
\begin{eqnarray*}
\mathbb{E}_{\acute{g}}[ (\acute{g}(x) - g(x))^2 ]
&\leq& 2 \mathbb{E}_{\acute{g}}[ (\acute{g}(x) - m_{g,X_n}(x))^2 ] + 2 (m_{g,X_n}(x) - g(x))^2 \\
&\leq& 2 k_{X_n}(x,x) + 2 \| \tilde{g} \|_{\mathcal{H}_k}^2 k_{X_n}(x,x),
\end{eqnarray*}
where the last inequality follows from \eqref{eq:bound-g}.
Thus,
$$
|T(g(x)) - \mathbb{E}_{\acute{g}}[T(\acute{g}(x))] | \leq \sqrt{2C (1+ \| \tilde{g} \|_{\mathcal{H}_k}^2)} \sqrt{k_{X_n}(x,x)}
$$
and it follows that
\begin{eqnarray*}
&& \left| \int T(g(x)) \pi(x) d\mu(x) -  \int  \mathbb{E}_{\acute{g}}[T(\acute{g}(x))] \pi(x) d\mu(x)    \right| \\
&\leq& \int | T(g(x)) -  \mathbb{E}_{\acute{g}}[T(\acute{g}(x))] | \pi(x) d\mu(x)    \\
&\leq& \sqrt{2C (1+ \| \tilde{g} \|_{\mathcal{H}_k}^2)}  \int  \sqrt{k_{X_n}(x,x)}  \pi(x) d\mu(x) \\
&\leq&  \sqrt{2C (1+ \| \tilde{g} \|_{\mathcal{H}_k}^2)} C_{\pi/q} \sup_{x \in \Omega} q(x) \sqrt{k_{X_n}(x,x)},
\end{eqnarray*}
where the last inequality follows from H\"older's inequality. 

\end{proof}

\section{Appendices for Section \ref{sec:connetion}}

\subsection{Proof of Lemma \ref{lemma:post-var}
\label{sec:lemma-projection-fig}}

\begin{proof}
It is easy to show by the reproducing property that the GP posterior variance $k_{X_n}(x,x)$ in \eqref{eq:post-cov-func} can be written as the squared RKHS distance between $k(\cdot,x)$ and its orthogonal projection onto ${\rm span}(k(\cdot,x_1),\dots,k(\cdot,x_n)) \subset \mathcal{H}_k$, provided that the kernel matrix $K_n = (k(x_i,x_j))_{i,j = 1}^n \in \mathbb{R}^{n \times n}$ is invertible:
$$
k_{X_n}(x,x) = {\rm dist}^2(k(\cdot,x), {\rm span}(k(\cdot,x_1),\dots,k(\cdot,x_n))) = \inf_{\alpha_1,\dots,\alpha_n \in \mathbb{R}} \| k(\cdot,x) - \sum_{i=1}^n \alpha_i k(\cdot,x_i) \|_{\mathcal{H}_k}^2.
$$
Therefore, 
\begin{eqnarray*}
q^2(x) k_{X_n}(x,x) 
&=&  \inf_{\alpha_1,\dots,\alpha_n \in \mathbb{R}} \| q(x) k(\cdot,x) - \sum_{i=1}^n \alpha_i q(x) k(\cdot,x_i) \|_{\mathcal{H}_k}^2 \\
&=&  \inf_{\beta_1,\dots,\beta_n \in \mathbb{R}} \| q(x) k(\cdot,x) - \sum_{i=1}^n \beta_i q(x_i) k(\cdot,x_i) \|_{\mathcal{H}_k}^2, \\
&=& \inf_{g \in S_n} \| h_x - g \|_{\mathcal{H}_k}^2 = {\rm dist}^2(h_x, S_n),
\end{eqnarray*}
where the second equality follows from $q(x) > 0$ and $q(x_i) > 0$ for all $i = 1,\dots,n$; this proves \eqref{eq:post-var-projection}.
Using this, \eqref{eq:criterion-weak-greedy} and the definition of $\mathcal{C}_{k,q}$, the identity \eqref{eq:error-supremum} can be shown as
$$
    e_n(\mathcal{C}_{k,q}) = \sup_{h \in \mathcal{C}_{k,q}} {\rm dist}(h,S_n) =  \sup_{x \in \Omega} {\rm dist}(h_x,S_n) = \sup_{x \in \Omega} q(x) \sqrt{k_{X_n}(x,x)}.
$$
\end{proof}

Fig.~\ref{fig:projection} provides a geometric interpretation of \eqref{eq:post-var-projection} in Lemma \ref{lemma:post-var} and its proof.
\begin{figure}[ht]
    \centering
    \includegraphics[width=0.95\linewidth]{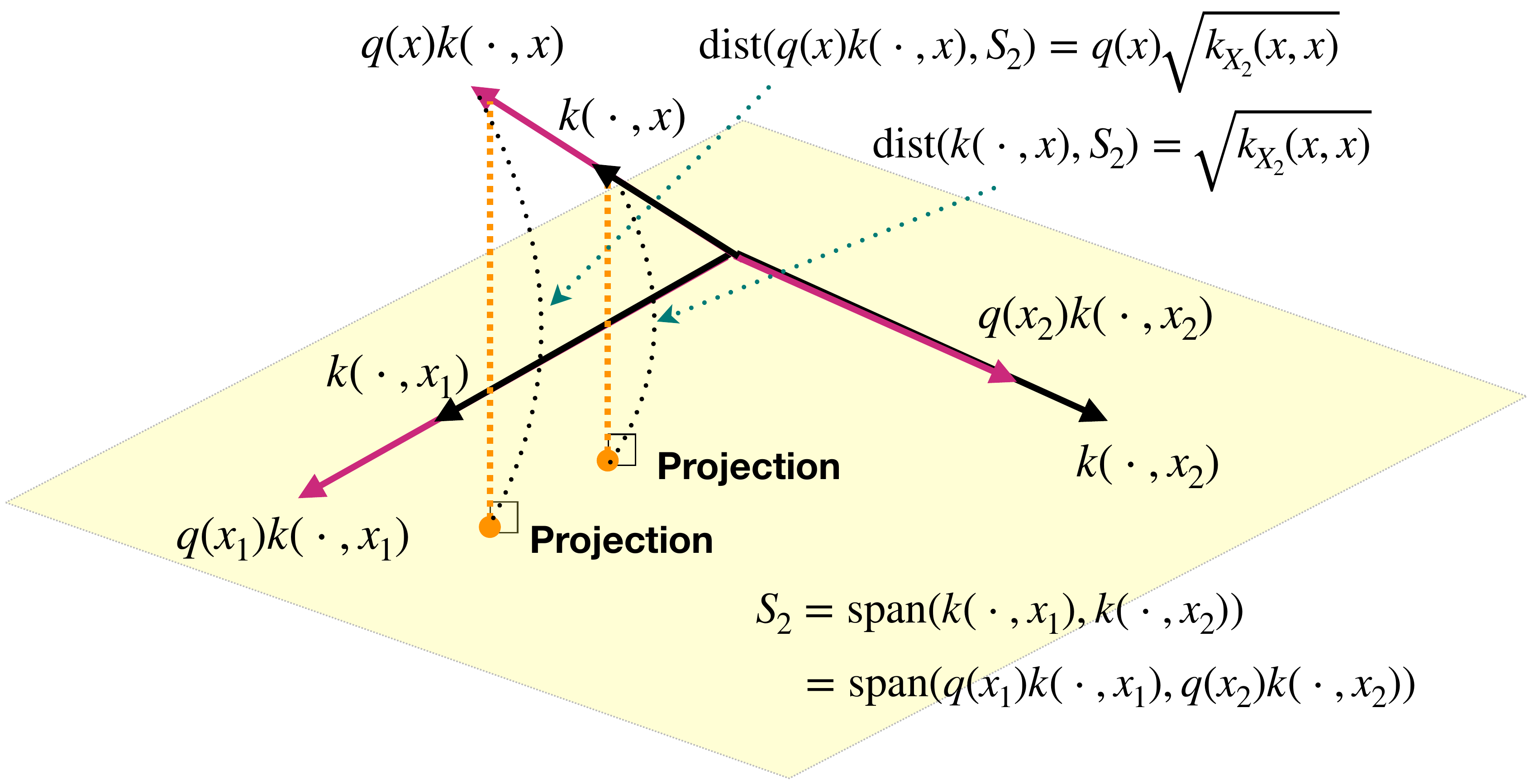}
    \caption{A geometric interpretation of \eqref{eq:post-var-projection} in Lemma \ref{lemma:post-var}, for a simple case where $n = 2$. The yellow plane represents the subspace $S_2 := {\rm span}(k(\cdot,x_1), k(\cdot,x_2)) = {\rm span}(q(x_1)k(\cdot,x_1), q(x_2)k(\cdot,x_2))$, where the identity follows from $q(x_1), q(x_2) > 0$.}
    \label{fig:projection}
\end{figure}

\subsection{An Example for Assumption \ref{as:F-const}}
\label{sec:const-log}

The following lemma gives the constant $\psi(c)$ in Assumption \ref{as:F-const} for the case $F(y) = \exp(y) - 1$, and thus $F^{-1}(y) = \log(1+y)$, of the MMLT \cite{ChaGar19}: $\psi(c) = c$.
The proof is elementary, but we include it for completeness. 

\begin{lemma}
For any $0 < c \leq 1$, we have $ \log(1 + cy) \geq c \log(1 + y)$ for all $y \geq 0$. 
\end{lemma}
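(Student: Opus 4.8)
The plan is to prove the inequality $\log(1+cy) \geq c\log(1+y)$ for all $y \geq 0$ and all $c \in (0,1]$ by fixing $c$ and studying the function $\varphi(y) := \log(1+cy) - c\log(1+y)$ on $[0,\infty)$. First I would note that $\varphi(0) = \log 1 - c\log 1 = 0$, so it suffices to show $\varphi$ is nondecreasing on $[0,\infty)$, or at least that $\varphi(y) \geq 0$ throughout.

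The key step is to differentiate: $\varphi'(y) = \dfrac{c}{1+cy} - \dfrac{c}{1+y} = c\left(\dfrac{1}{1+cy} - \dfrac{1}{1+y}\right)$. Putting the two fractions over a common denominator gives $\varphi'(y) = c \cdot \dfrac{(1+y) - (1+cy)}{(1+cy)(1+y)} = c \cdot \dfrac{(1-c)y}{(1+cy)(1+y)}$. For $y \geq 0$ and $0 < c \leq 1$ every factor in this expression is nonnegative (the numerator because $1-c \geq 0$ and $y \geq 0$; the denominator because $1+cy > 0$ and $1+y > 0$), so $\varphi'(y) \geq 0$ for all $y \geq 0$. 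Hence $\varphi$ is nondecreasing on $[0,\infty)$, and combined with $\varphi(0) = 0$ this yields $\varphi(y) \geq 0$, i.e. $\log(1+cy) \geq c\log(1+y)$, as claimed.

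There is no real obstacle here — the statement is elementary and the derivative computation is the whole argument; the only thing to be careful about is handling the boundary case $c = 1$ (where $\varphi \equiv 0$) and the point $y = 0$, both of which are covered since the inequalities above are non-strict. One could alternatively phrase it via concavity of $\log$ or via the substitution $t = 1+y$, but the monotonicity argument is the cleanest to write down. An equivalent route, if one prefers to avoid calculus, is to raise both sides to exponentials: the claim is $1 + cy \geq (1+y)^c$, which follows from Bernoulli's inequality $(1+y)^c \leq 1 + cy$ valid for $c \in [0,1]$ and $y \geq -1$; but since the paper says "the proof is elementary," the direct derivative computation above is the natural choice to present.
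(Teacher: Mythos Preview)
Your proof is correct and follows essentially the same strategy as the paper: both reduce to showing a difference function vanishes at $y=0$ and has nonnegative derivative on $[0,\infty)$. The only cosmetic difference is that the paper first exponentiates to the equivalent inequality $1+cy \geq (1+y)^c$ and compares $f'(y)=c$ with $g'(y)=c(1+y)^{c-1}$, which is exactly the alternative route you mention at the end.
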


\begin{proof}
The assertion is equivalent to that $1+cy \geq (1+y)^c$ holds for all $y \geq 0$, which we show below.
Let $f(y) := 1 +cy$ and $g(y) := (1+y)^c$ for $y \geq 0$. 
Their derivatives are $f'(y) = c$ and $g'(y) = c(1+y)^{c-1}$, for which we have $f'(y) \geq g'(y)$ for all $y \geq 0$, since $c-1 \leq 0$.
We also have $f(0) = g(0) = 1$.
Therefore, by the fundamental theorem of calculus, we conclude that $f(y) = f(0) + \int_0^y f'(\tilde{y})d\tilde{y} \geq g(0) + \int_0^y g'(\tilde{y})d\tilde{y} = g(y)$ for all $y \geq 0$.
\end{proof}

\subsection{Proof of Lemma \ref{lemma:invertible}}
\label{sec:proof-invertible}
\begin{proof}
Let $\ell = 1,\dots,n-1$, and assume that $x_1,\dots,x_\ell \in \Omega$ are such that  the kernel matrix $K_\ell = (k(x_i,x_j))_{i,j=1}^\ell \in \mathbb{R}^{\ell \times \ell}$ is invertible; this is always true for $\ell = 1$.
For $x_{\ell+1} \in \Omega$ such that  $a_\ell(x_{\ell + 1}) \geq \tilde{\gamma}\max_{x \in \Omega} a_\ell(x) =   \tilde{\gamma} \max_{x \in \Omega} F\left( q^2(x) k_{X_\ell}(x,x) \right) b_\ell(x)$ with $0 < \tilde{\gamma} \leq 1$, 
we show that either of the following holds: i) $k(\cdot,x_{\ell+1})$ is linearly independent to $k(\cdot,x_1),\dots,k(\cdot,x_\ell)$ and thus $K_{\ell+1} = (k(x_i,x_j))_{i,j=1}^{\ell+1} \in \mathbb{R}^{(\ell + 1) \times (\ell + 1)}$ is invertible, or ii)  $\sup_{x \in \Omega} k_{X_\ell}(x,x) = 0$.

Assume that ii) does not hold.
Then there exists  $y \in \Omega$ such that $k_{X_\ell}(y,y) > 0$. 
For this $y$ we have $a_\ell (y) =  F\left( q^2(y) k_{X_\ell}(x,x) \right) b_\ell(x) > 0$, since $q(x), b_\ell(x) > 0$ for all $x \in \Omega$, $F(0) = 0$ and $F$ is increasing. 
Therefore $a_\ell(x_{\ell+1}) \geq \tilde{\gamma} a_\ell(y) > 0$, and thus $k_{X_\ell}(x_{\ell+1},x_{\ell+1}) > 0$.
Note that since the kernel matrix $K_\ell$ is invertible, we have
 $$  k_{X_\ell}(x_{\ell+1},x_{\ell + 1}) = \inf_{\alpha_1,\dots,\alpha_n \in \mathbb{R}} \| k(\cdot,x_{\ell+1}) - \sum_{i=1}^\ell \alpha_i k(\cdot,x_i) \|_{\mathcal{H}_k}^2
 $$
 This expression and $k_{X_\ell}(x_{\ell+1},x_{\ell + 1}) > 0$ imply that $k(\cdot,x_{\ell + 1})$ is linearly independent to $k(\cdot,x_1),\dots,k(\cdot,x_\ell)$, since otherwise $k(\cdot,x_{\ell+1})$ can be written as a linear combination of $k(\cdot,x_1),\dots,k(\cdot,x_\ell)$, and thus $k_{X_\ell}(x_{\ell+1},x_{\ell + 1})$ becomes $0$ from the above expression.
 Thus i) has been shown.
\end{proof}

\subsection{Proof of Theorem \ref{theo:ABQ-weak-greedy}}

\label{sec:proof-ABQ-weak-greedy}

\begin{proof}
For $\ell = 0,\dots,n-1$, by $a_\ell(x_{\ell+1}) \geq \tilde{\gamma} \sup_{x\in\Omega} a_\ell(x)$ and Assumption \ref{as:b_n-bounds}, we have
\begin{eqnarray*}
a_\ell(x_{\ell+1}) &\geq &  \tilde{\gamma} \sup_{x \in \Omega} F \left( q^2(x) k_{X_\ell}(x,x) \right) b_\ell(x)\\ 
&\geq& \tilde{\gamma} C_L \sup_{x \in \Omega} F\left( q^2(x) k_{X_\ell}(x,x) \right)  = \tilde{\gamma} C_L F\left(\sup_{x \in \Omega} q^2(x) k_{X_\ell}(x,x) \right),
\end{eqnarray*}
where the last equality follows from $F$ being an increasing function.
This implies by Assumption \ref{as:b_n-bounds} that
$$
F\left( q^2(x_{\ell+1}) k_{X_\ell}(x_{\ell+1},x_{\ell+1}) \right) \geq (\tilde{\gamma} C_L/C_U) F\left(  \sup_{x \in \Omega} q^2(x) k_{X_\ell}(x,x) \right)
$$
and therefore, again by $F$ being increasing and also by Assumption \ref{as:F-const},
\begin{eqnarray*}  
 q^2(x_{\ell+1}) k_{X_\ell}(x_{\ell+1},x_{\ell+1})
 &\geq& F^{-1} \left( (\tilde{\gamma}C_L/C_U) F\left( \sup_{x \in \Omega} q^2(x) k_{X_\ell}(x,x)  \right) \right) \\
 &\geq& \psi(\tilde{\gamma}C_L/C_U) \sup_{x \in \Omega} q^2(x) k_{X_\ell}(x,x).
\end{eqnarray*}
Note that $\| h_x \|_{\mathcal{H}_k}^2 = \| q(x)k(\cdot,x) \|_{\mathcal{H}_k}^2 = q^2(x) k(x,x)$ for all $x \in \Omega$.
Therefore for $\ell = 0$, in which case $k_{X_0}(x,x) = k(x,x)$, we have $\| h_{x_1} \|_{\mathcal{H}_k} \geq \sqrt{\psi(\tilde{\gamma}C_L/C_U)} \sup_{x \in \Omega} \| h_x \|_{\mathcal{H}_k} = \sqrt{\psi(\tilde{\gamma}C_L/C_U)} \sup_{h \in \mathcal{C}_{k,q}} \| h \|_{\mathcal{H}_k}$.
For $\ell = 1,\dots, n-1$ we have by Lemma \ref{lemma:post-var} (which is applicable from Assumption \ref{as:compact-continuous}),
\begin{eqnarray*}
{\rm dist}^2(h_{x_{\ell+1}}, S_\ell) &=&  q^2(x_{\ell+1}) k_{X_\ell}(x_{\ell+1},x_{\ell+1}) \\
&\geq& \psi(\tilde{\gamma}C_L/C_U) \sup_{x \in \Omega} {\rm dist}^2 (h_x,S_\ell) = \psi(\tilde{\gamma}C_L/C_U)  \sup_{h \in \mathcal{C}_{k,q}} {\rm dist}^2(h,S_\ell) .
\end{eqnarray*}
Thus \eqref{eq:weak-greedy} holds for $\gamma = \sqrt{\psi(\tilde{\gamma} C_L/C_U)}$, which completes the proof.
\end{proof}

\section{Appendices for Section \ref{sec:main-result}}

\subsection{A Bound on the Kolmogorov n-width}

\begin{lemma} \label{lemma:Kolmogorov-upper}
Let $x_1,\dots,x_n \in \Omega$ be such that the kernel matrix $K_n = (k(x_i,x_j))_{i,j = 1}^n \in \mathbb{R}^{n \times n}$ is invertible.
Assume that $q(x) > 0$ for all $x \in \Omega$.
Then we have $d_n(\mathcal{C}_{k,q}) \leq  \inf_{x_1,\dots,x_n \in \Omega} \sup_{x \in \Omega} q(x)\sqrt{k_{X_n}(x,x)}$.
\end{lemma}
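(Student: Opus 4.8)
The plan is to exploit the definition of the Kolmogorov $n$-width as an infimum over all $n$-dimensional subspaces $U_n$ of $\mathcal{H}_k$, and simply restrict attention to a specific family of such subspaces: those of the form $S_n = {\rm span}(h_{x_1},\dots,h_{x_n})$ where $h_{x_i} = q(x_i)k(\cdot,x_i)$ for points $x_1,\dots,x_n \in \Omega$ with $K_n$ invertible. Since $q(x_i) > 0$ for all $i$, the span of $\{h_{x_1},\dots,h_{x_n}\}$ coincides with ${\rm span}(k(\cdot,x_1),\dots,k(\cdot,x_n))$, which has dimension $n$ precisely because $K_n$ is invertible (the Gram matrix of $k(\cdot,x_1),\dots,k(\cdot,x_n)$ in $\mathcal{H}_k$ is exactly $K_n$). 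Hence each such $S_n$ is a legitimate $n$-dimensional competitor in the infimum defining $d_n(\mathcal{C}_{k,q})$.

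The key steps, in order, would be: (i) recall $d_n(\mathcal{C}_{k,q}) = \inf_{U_n} \sup_{h \in \mathcal{C}_{k,q}} {\rm dist}(h, U_n)$ over $n$-dimensional subspaces $U_n \subset \mathcal{H}_k$; (ii) for any fixed $x_1,\dots,x_n$ with $K_n$ invertible, observe ${\rm span}(h_{x_1},\dots,h_{x_n})$ is $n$-dimensional, so it is an admissible $U_n$, giving $d_n(\mathcal{C}_{k,q}) \leq \sup_{h \in \mathcal{C}_{k,q}} {\rm dist}(h, S_n)$; (iii) apply Lemma~\ref{lemma:post-var}, specifically \eqref{eq:error-supremum}, which identifies $\sup_{h \in \mathcal{C}_{k,q}} {\rm dist}(h, S_n) = e_n(\mathcal{C}_{k,q}) = \sup_{x \in \Omega} q(x)\sqrt{k_{X_n}(x,x)}$ — noting the hypotheses of Lemma~\ref{lemma:post-var} (namely $K_n$ invertible and $q(x) > 0$ on $\Omega$) are exactly those assumed here; (iv) since the inequality $d_n(\mathcal{C}_{k,q}) \leq \sup_{x \in \Omega} q(x)\sqrt{k_{X_n}(x,x)}$ holds for every admissible choice of $x_1,\dots,x_n$, take the infimum over all such point configurations to conclude $d_n(\mathcal{C}_{k,q}) \leq \inf_{x_1,\dots,x_n \in \Omega} \sup_{x \in \Omega} q(x)\sqrt{k_{X_n}(x,x)}$.

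I do not anticipate a serious obstacle here — this is essentially a bookkeeping argument once Lemma~\ref{lemma:post-var} is in hand. The only point requiring a little care is step (ii): one must verify that $S_n$ genuinely has dimension $n$ (not less), which is where the invertibility of $K_n$ is used, via the fact that the Gram matrix of $k(\cdot,x_1),\dots,k(\cdot,x_n)$ equals $K_n$; positivity of $q$ then ensures scaling by $q(x_i)$ preserves both the span and linear independence. A minor subtlety worth flagging is that the infimum on the right-hand side is implicitly over those $(x_1,\dots,x_n)$ for which $K_n$ is invertible (for other configurations $k_{X_n}$ is not defined by \eqref{eq:post-cov-func}), but this is the natural reading and matches how the bound is used downstream in Proposition~\ref{prop:gauss-Kormogorov-bound}.
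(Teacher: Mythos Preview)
Your proposal is correct and follows essentially the same approach as the paper's proof: restrict the infimum defining $d_n(\mathcal{C}_{k,q})$ to the particular subspaces $S_n = {\rm span}(h_{x_1},\dots,h_{x_n})$ and invoke Lemma~\ref{lemma:post-var} to identify $\sup_{h \in \mathcal{C}_{k,q}} {\rm dist}(h,S_n)$ with $\sup_{x \in \Omega} q(x)\sqrt{k_{X_n}(x,x)}$. You are in fact more careful than the paper in explicitly verifying that $S_n$ is $n$-dimensional via the invertibility of $K_n$, a point the paper leaves implicit.
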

\begin{proof}
Using Lemma \ref{lemma:post-var}, the Kolmogorov $n$-width can be upper-bounded as 
\begin{eqnarray*}
d_n(\mathcal{C}_{k,q}) &=& \inf_{U_n} \sup_{h \in \mathcal{C}_{k,q}} {\rm dist}(h, U_n) = \inf_{U_n} \sup_{x \in \Omega} {\rm dist}(h_x, U_n) \\
&\leq& \inf_{x_1,\dots,x_n \in \Omega} \sup_{x \in \Omega} {\rm dist}(h_x, S_n) = \inf_{x_1,\dots,x_n \in \Omega} \sup_{x \in \Omega} q(x)\sqrt{k_{X_n}(x,x)},
\end{eqnarray*}
where the infimum in the first line is taken over all $n$-dimensional subspaces $U_n$ of $\mathcal{H}_k$, and $S_n = {\rm span}(h_{x_1},\dots,h_{x_n})$ with $h_x = q(x)k(\cdot,x)$.
\end{proof}

Lemma \ref{lemma:Kolmogorov-upper} can be used for deriving upper-bounds on the Kolmogorov $n$-width $d_n(\mathcal{C}_{k,q})$ for concrete examples of the kernel $k$ on $\Omega \subset \mathbb{R}^d$.
To this end, the key quantity is the {\em fill distance} defined by 
$$
h_{X_n, \Omega} := \sup_{x \in \Omega} \min_{i=1,\dots,n} \| x - x_i \|,
$$
where $X_n := \{x_1,\dots,x_n\} \subset \Omega$.
This measures how densely the points $x_1,\dots,x_n$ fill the region $\Omega$.

\subsection{Proof of Prop.~\ref{prop:gauss-Kormogorov-bound} (Kolmogorov n-width for kernels with infinite smoothness)} 
\label{sec:proof-gauss-Kormogorov}

\begin{proof}
By \cite[Theorem 11.22]{Wen05}, where $k_{X_n}(x,x)$ is called the {\em power function}, there is a constant $c > 0$ such that $k_{X_n}(x,x) \leq \exp(- c_1  /  h_{X_n,\Omega} )$ holds for any set of design points $X_n = \{x_1,\dots,x_n\}$ with sufficiently small $h_{X_n,\Omega}$.
If we define $x_1,\dots,x_n$ as equally-spaced grid points in $\Omega$, then we have $h_{X_n,\Omega} = c_2 n^{-1/d}$ for some $c_2 > 0$ independent of $n$.
Therefore for large enough $n$, we have $k_{X_n}(x,x) \leq \exp(- c_1 c_2^{-1} n^{1/d})$.
\blue{
In other words, there exists $n_0 \in \mathbb{N}$ such that $k_{X_n} (x, x) \leq \exp (- (c_1 / c_2) n^{1 / d} )$ holds for all $n \geq n_0$.}
Note that there exists a constant $c_3 > 0$ such that $k_{X_n}(x,x') \leq c_3$ holds for all $x \in \Omega$ and for all $n$, since $\Omega$ is compact and $k_{X_n}(x,x)$ is continuous w.r.t.~$x$ for any fixed $n$ and non-increasing w.r.t.~$n$ for any fixed $x \in \Omega$.

\blue{
Now, define $c_4 > 0$ as a constant such that 
$c_4 \exp (- (c_1 / c_2) n_0^{1 / d} ) = c_3$,
and let $c_5 := \max(c_4, 1)$.
Then, for $n < n_0$ we have $c_5 \exp (- (c_1 / c_2) n^{1 / d} ) \geq c_4 \exp (- (c_1 / c_2) n^{1 / d} )  \geq c_3 \geq k_{X_n}(x,x)$.
For $n \geq n_0$, we have $c_5 \exp (- (c_1 / c_2) n^{1 / d} ) \geq  \exp (- (c_1 / c_2) n^{1 / d} ) \geq k_{X_n}(x,x)$.
Therefore we conclude that $k_{X_n}(x,x) \leq c_5 \exp (- (c_1 / c_2) n^{1 / d} )$ holds for all $n \in \mathbb{N}$ and $x \in \Omega$.
}


Note that $ \inf_{x_1,\dots,x_n \in \Omega} \sup_{x \in \Omega} q(x)\sqrt{k_{X_n}(x,x)} \leq  \sup_{x \in \Omega} q(x)\sqrt{k_{X_n}(x,x)}$ holds for any fixed choice of $x_1,\dots,x_n$ defining $k_{X_n}(x,x)$ in the upper-bound. 
If we chose $x_1,\dots,x_n$ as equally-spaced grid points in the upper-bound, we have that $d_n(\mathcal{C}_{k,q}) \leq  \inf_{x_1,\dots,x_n \in \Omega} \sup_{x \in \Omega} q(x)\sqrt{k_{X_n}(x,x)} \leq \sup_{x \in \Omega}q(x) \sqrt{\blue{c_5}}  \exp(- \frac{1}{2} c_1 c_2^{-1} n^{1/d})$  by Lemma \ref{lemma:Kolmogorov-upper} and the above argument.
Setting $C_0 := \sup_{x \in \Omega}q(x) \sqrt{\blue{c_5}}$ and $D_0 := \frac{1}{2} c_1 c_2^{-1}$ concludes the proof.
\end{proof}

\subsection{Proof of Theorem \ref{theo:gauss-abq-bound}}
\label{sec:proof-gauss-abq-bound}
\begin{proof}
By Thm.~\ref{theo:ABQ-weak-greedy}, $h_{x_1},\dots,h_{x_n}$ are a $\gamma$-weak approximation of $\mathcal{C}_{k,q}$ in $\mathcal{H}_k$ with $\gamma = \sqrt{\psi(\tilde{\gamma}C_L / C_U)}$.
From this, and by Lemma \ref{lemma:Dev13} (exponential) and Prop.~\ref{prop:gauss-Kormogorov-bound}, there exist $C_0, D_0 > 0$ such that for $C_1 := \sqrt{2C_0}$ and $D_1 := 2^{-1-2/d}D_0$, we have
$e_n(\mathcal{C}_{k,q}) \leq C_1 \psi(\tilde{\gamma}C_L/C_U)^{-1/2} \exp (- D_1 n^{-1/d})$ 
for all $n \in \mathbb{N}$.
Combining this and \eqref{eq:error-supremum} in Lemma \ref{lemma:post-var} concludes the proof.
\end{proof}

\subsection{Convergence Rates for ABQ using Kernels with Finite Smoothness}
\label{sec:rates-finite-smoothness}

We deal with here kernels with finite smoothness. 
In particular, we consider shift-invariant kernels of the form $k(x,x') = \Phi(x-x')$ with $\Phi \in L_1(\mathbb{R}^d)$ satisfying 
\begin{equation} \label{eq:kernel-fourier}
c_1 (1 + \| \omega \|^2)^{-r} \leq \hat{\Phi}(\omega) \leq c_2  (1 + \| \omega \|^2)^{-r},  \quad \omega \in \mathbb{R}^d
\end{equation}
for some $c_1, c_2 > 0$ and $r > d/2$, where $\hat{\Phi}$ denotes the Fourier transform of $\Phi$.
The RKHS of such a kernel is norm-equivalent to a Sobolev space of order $r$, which consists of functions whose weak derivative up to order $r$ exist and are square-integrable \cite[Corollary 10.48]{Wen05}; thus $r$ represents the smoothness of functions in the RKHS.

For instance, Mat\'ern kernels \cite[p.~84]{RasWil06} of the form 
$$
k(x,x') = \frac{2^{1-\nu}}{\Gamma(\nu)} \left( \frac{\sqrt{2\nu} \| x - x' \|}{\ell} \right)^\nu K_\nu \left( \frac{\sqrt{2\nu} \| x - x' \|}{\ell} \right), \quad (\nu, \ell > 0)
$$ 
where $\Gamma$ is the Gamma function and $K_\nu$ is the modified Bessel function of second kind, satisfy \eqref{eq:kernel-fourier} with $r = \nu + d/2$.
Another example is Wendland kernels \cite[Theorem 10.35]{Wen05}, which have compact supports and thus have computational advantages; see \cite{Wen95} and \cite[Chapter 9]{Wen05} for details.
In the following result, we use the notion of a Lipschitz boundary and an interior cone condition, the definitions of which can be found in, e.g., \cite[Section 3]{KanSriFuk17} and references therein.

\begin{assumption} \label{as:compact-Lipschitz}
 $\Omega \subset \mathbb{R}^n$ is a compact set having a Lipshitz boundary and satisfying an interior cone condition.
\end{assumption}

\subsubsection{Kolmogorov n-width for kernels with finite smoothness}
\label{sec:proof-sobolev-Kormogorov}

\begin{proposition} \label{prop:sobolev-Kormogorov-bound}
Suppose that Assumptions \ref{as:compact-continuous} and \ref{as:compact-Lipschitz} are satisfied. 
Let $k(x,x') = \Phi(x-x')$ be a kernel satisfying \eqref{eq:kernel-fourier} for $r > d/2$.
Then there exists a constant $C_0 > 0$ such that 
$$
d_n(\mathcal{C}_{k,q}) \leq C_0 n^{-r/d + 1/2}, \quad n \in \mathbb{N}.
$$
\end{proposition}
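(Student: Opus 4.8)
The plan is to mirror the structure of the proof of Proposition \ref{prop:gauss-Kormogorov-bound}, replacing the exponential power-function bound for infinitely-smooth kernels by the polynomial one available for Sobolev-type kernels. First I would invoke Lemma \ref{lemma:Kolmogorov-upper}, which tells us that
$$
d_n(\mathcal{C}_{k,q}) \leq \inf_{x_1,\dots,x_n \in \Omega} \sup_{x \in \Omega} q(x)\sqrt{k_{X_n}(x,x)},
$$
so it suffices to exhibit \emph{one} good choice of design points $X_n = \{x_1,\dots,x_n\}$ and bound $\sup_{x\in\Omega} q(x)\sqrt{k_{X_n}(x,x)}$ for that choice. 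Since $q$ is continuous on the compact set $\Omega$ (Assumption \ref{as:compact-continuous}), $\sup_{x\in\Omega} q(x) =: \|q\|_\infty < \infty$, so the task reduces to controlling the power function $\sqrt{k_{X_n}(x,x)}$ uniformly in $x$.

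The key analytic input is the standard sampling-inequality / power-function estimate for kernels whose RKHS is norm-equivalent to the Sobolev space $H^r(\Omega)$ with $r > d/2$: under Assumption \ref{as:compact-Lipschitz} (Lipschitz boundary, interior cone condition), there is a constant $c>0$ and a threshold $h_0>0$ such that for every point set $X_n\subset\Omega$ with fill distance $h_{X_n,\Omega}\le h_0$ one has $k_{X_n}(x,x) \le c\, h_{X_n,\Omega}^{2r-d}$ for all $x\in\Omega$ (see, e.g., \cite[Theorem 11.13]{Wen05} or \cite[Section 3]{KanSriFuk17}). Next I would choose $x_1,\dots,x_n$ to be (approximately) equally-spaced grid points in $\Omega$, for which the fill distance satisfies $h_{X_n,\Omega} \le c' n^{-1/d}$ for a constant $c'$ depending only on $\Omega$ and $d$; hence for $n$ large enough that $c'n^{-1/d}\le h_0$ we get $k_{X_n}(x,x)\le c\,(c')^{2r-d} n^{-(2r-d)/d}$, i.e. $\sqrt{k_{X_n}(x,x)} \le C' n^{-r/d+1/2}$. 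To cover all $n\in\mathbb{N}$ (not just large $n$), I would absorb the finitely many small-$n$ cases into the constant exactly as done in the proof of Proposition \ref{prop:gauss-Kormogorov-bound}: $k_{X_n}(x,x)$ is bounded by $\sup_{x}k(x,x)<\infty$ uniformly, so enlarging $C'$ by a finite factor makes the bound $k_{X_n}(x,x)\le C'' n^{-(2r-d)/d}$ valid for every $n$. Combining with Lemma \ref{lemma:Kolmogorov-upper} and setting $C_0 := \|q\|_\infty \sqrt{C''}$ yields $d_n(\mathcal{C}_{k,q}) \le C_0 n^{-r/d+1/2}$.

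The main obstacle is not conceptual but a matter of citing the right sampling inequality: one needs the power-function bound in the precise form $k_{X_n}(x,x)\lesssim h_{X_n,\Omega}^{2r-d}$ valid uniformly on $\Omega$, which requires the domain regularity of Assumption \ref{as:compact-Lipschitz} and the Fourier-decay condition \eqref{eq:kernel-fourier} (to identify $\mathcal{H}_k$ with $H^r(\Omega)$ via \cite[Corollary 10.48]{Wen05}); some care is also needed that $r>d/2$ guarantees the exponent $2r-d>0$ so that the bound actually decays, and that the grid construction genuinely achieves fill distance $O(n^{-1/d})$ on a domain satisfying an interior cone condition. Once those ingredients are in place, the small-$n$ bookkeeping and the final substitution into Lemma \ref{lemma:Kolmogorov-upper} are routine, just as in the infinitely-smooth case.
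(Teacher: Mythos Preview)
Your proposal is correct and follows essentially the same approach as the paper: choose equally-spaced grid points to control the fill distance, invoke a Wendland-type estimate to bound the power function by $h_{X_n,\Omega}^{r-d/2}$, absorb the small-$n$ cases into the constant, and conclude via Lemma \ref{lemma:Kolmogorov-upper} together with $\|q\|_\infty<\infty$. The only cosmetic difference is that the paper cites the interpolation-error bound \cite[Corollary 11.33]{Wen05} and then passes to the power function via the worst-case identity $\sqrt{k_{X_n}(x,x)}=\sup_{\|g\|_{\mathcal{H}_k}\le 1}|g(x)-m_{g,X_n}(x)|$, whereas you cite the power-function bound directly; the two routes are equivalent.
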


\begin{proof}
By \cite[Corollary 11.33]{Wen05} (where we set $m = 0$ and $q = \infty$), there exists a constant $c_1 > 0$ such that for all $g \in \mathcal{H}_k$ we have
$$
\| g - m_{g,X_n} \|_{L_\infty(\Omega)} \leq c_1 h_{X_n,\Omega}^{r-d/2} \| g \|_{\mathcal{H}_k},
$$
for $X_n = \{ x_1,\dots,x_n \} \subset \Omega$ with sufficiently small $h_{X_n,\Omega}$.
By setting $x_1,\dots,x_n$ as equally-spaced grid points in $\Omega$, there exists a constant $c_2 > 0$ such that $h_{X_n,\Omega} \leq c_2 n^{-1/d}$.
Therefore we have for some $c_3 > 0$
$$
\sup_{g \in \mathcal{H}_k: \| g \|_{\mathcal{H}_k} \leq 1} \| g - m_{g,X_n} \|_{L_\infty(\Omega)} \leq c_3 n^{-r/d+1/2}
$$
for sufficiently large $n$. 
Note that the GP posterior variance can be written as (see e.g.~\cite[Prop.~3.10]{KanHenSejSri18})
$$
\sqrt{k_{X_n}(x,x)} = \sup_{g \in \mathcal{H}_k: \| g \|_{\mathcal{H}_k} \leq 1} | g(x) - m_{g,X_n}(x) |, \quad x \in \Omega.
$$
This implies that 
$\sqrt{k_{X_n}(x,x)} \leq \sup_{\| g \|_{\mathcal{H}_k} \leq 1} \| g - m_{g,X_n} \|_{L_\infty(\Omega)}$ for all $x \in \Omega$, which further implies that $\sup_{x \in \Omega} \sqrt{k_{X_n}(x,x)} \leq \sup_{\| g \|_{\mathcal{H}_k} \leq 1} \| g - m_{g,X_n} \|_{L_\infty(\Omega)}$.
Therefore, for large enough $n$ we have
$\sup_{x \in \Omega} \sqrt{k_{X_n}(x,x)} \leq  c_3 n^{-r/d + 1/2}$
if $x_1,\dots,x_n$ are equally-spaced grid points in $\Omega$.
\blue{
In other words, there exists $n_0 \in \mathbb{N}$ such that 
$$
\sup_{x \in \Omega} \sqrt{k_{X_n}(x,x)} \leq  c_3 n^{-r/d + 1/2}, \quad \forall n \geq n_0.
$$
Note that there exists a constant $c_4 > 0$ such that  $\sqrt{k_{X_n}(x,x)} \leq c_4$ holds for all $x \in \Omega$ and for all $n \in \mathbb{N}$, since $\Omega$ is compact, $k_{X_n}(x,x)$ is continuous w.r.t.~$x$ for any fixed $n$ and $k_{X_n}(x,x)$ is non-increasing w.r.t.~$n$ for any fixed $x \in \Omega$.
Therefore $\sup_{x \in \Omega} \sqrt{k_{X_n}(x,x)} \leq c_4$ for all $n \in \mathbb{N}$.}

\blue{
Now, define $c_5 > 0$ as a constant such that 
$c_5 n_0^{-r/d + 1/2} = c_4$,
and let $c_6 := \max(c_5, c_3)$.
Then, for $n < n_0$ we have $c_6 n^{-r/d + 1/2}  \geq c_5 n^{-r/d + 1/2}   \geq c_4 \geq \sup_{x \in \Omega} \sqrt{k_{X_n}(x,x)}$.
For $n \geq n_0$, we have $c_6 n^{-r/d + 1/2} \geq c_3 n^{-r/d + 1/2} \geq \sup_{x \in \Omega} \sqrt{k_{X_n}(x,x)}$.
Therefore we conclude that, if $x_1,\dots,x_n$ are equally-spaced grid points in $\Omega$, we have
$$
\sup_{x \in \Omega} \sqrt{k_{X_n}(x,x)}  \leq c_6 n^{-r/d + 1/2}, \quad \forall n \in \mathbb{N}, 
$$
}

Finally, by Lemma \ref{lemma:Kolmogorov-upper} we have
$$
d_n(\mathcal{C}_{k,q})  \leq   \sup_{x \in \Omega} q(x)c_6  n^{-r/d+1/2} 
$$
and thus the assertion holds with $C_0 :=  \sup_{x \in \Omega} q(x)c_6 < \infty$, which is bounded since $q$ is continuous and $\Omega$ is compact.

\end{proof}

\subsubsection{Convergence Rates}

Combining Prop.~\ref{prop:sobolev-Kormogorov-bound} and Thm.~\ref{theo:ABQ-weak-greedy}, we have the following bound on $\sup_{x \in \Omega} q(x) \sqrt{k_{X_n}(x,x)}$, for $x_1,\dots,x_n$ are generated by a $\tilde{\gamma}$-weak version of ABQ \eqref{eq:ABQ-weak} with a constant $0 < \tilde{\gamma} \leq 1$.

\begin{theorem} \label{theo:sobolev-abq-bound}
Suppose that Assumptions \ref{as:compact-continuous}, \ref{as:b_n-bounds}, \ref{as:F-const} and \ref{as:compact-Lipschitz} are satisfied. 
Let $k(x,x') = \Phi(x-x')$ be a kernel satisfying \eqref{eq:kernel-fourier} for $r > d/2$.
\blue{For a constant $0 < \tilde{\gamma} \leq 1$, assume that $X_n = \{x_1,\dots,x_n\} \subset \Omega$ are generated by a $\tilde{\gamma}$-weak version of ABQ  \eqref{eq:adaptive-BQ}, i.e., \eqref{eq:ABQ-weak} is satisfied. }
Then there exists a constant $C_1 > 0$ such that
$$
\sup_{x \in \Omega} q(x) \sqrt{k_{X_n}(x,x)} \leq  C_1 \psi(\tilde{\gamma} C_L/C_U)^{-1}~n^{-r/d + 1/2}, \quad n \in \mathbb{N}.
$$
\end{theorem}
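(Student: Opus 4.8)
The plan is to mirror the proof of Theorem~\ref{theo:gauss-abq-bound}, simply replacing the exponential-decay branch of Lemma~\ref{lemma:Dev13} with its polynomial-decay branch and substituting Proposition~\ref{prop:sobolev-Kormogorov-bound} for Proposition~\ref{prop:gauss-Kormogorov-bound}. First, by Lemma~\ref{lemma:invertible} I may assume the kernel matrices $K_\ell$ are invertible for all $\ell \le n$; in the complementary case, where $\sup_{x \in \Omega} k_{X_\ell}(x,x) = 0$ for some $\ell \le n$, monotonicity of the posterior variance in $n$ makes the claimed bound trivially true. Under invertibility, the hypotheses of Theorem~\ref{theo:ABQ-weak-greedy} (Assumptions~\ref{as:compact-continuous}, \ref{as:b_n-bounds}, \ref{as:F-const}) are in force, so the elements $h_{x_i} = q(x_i) k(\cdot, x_i)$, $i = 1, \dots, n$, form a $\gamma$-weak greedy approximation of $\mathcal{C}_{k,q}$ in $\mathcal{H}_k$ with $\gamma = \sqrt{\psi(\tilde{\gamma} C_L / C_U)}$.

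Next, since $k(x,x') = \Phi(x-x')$ satisfies \eqref{eq:kernel-fourier} with $r > d/2$ and Assumptions~\ref{as:compact-continuous} and \ref{as:compact-Lipschitz} hold, Proposition~\ref{prop:sobolev-Kormogorov-bound} gives a constant $C_0 > 0$ with $d_n(\mathcal{C}_{k,q}) \le C_0 n^{-\alpha}$ for all $n \in \mathbb{N}$, where $\alpha := r/d - 1/2 > 0$. This is exactly the hypothesis of the polynomial-decay branch of Lemma~\ref{lemma:Dev13}, applied with the Hilbert space $\mathcal{H}_k$, the compact set $\mathcal{C}_{k,q}$, and the $\gamma$-weak greedy approximation just obtained. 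That branch yields $e_n(\mathcal{C}_{k,q}) \le 2^{5\alpha + 1} \gamma^{-2} C_0 \, n^{-\alpha}$ for all $n$. Using $\gamma^{-2} = \psi(\tilde{\gamma} C_L / C_U)^{-1}$ and setting $C_1 := 2^{5\alpha+1} C_0$, this reads $e_n(\mathcal{C}_{k,q}) \le C_1 \, \psi(\tilde{\gamma} C_L / C_U)^{-1} \, n^{-r/d + 1/2}$.

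Finally, I invoke the identity \eqref{eq:error-supremum} of Lemma~\ref{lemma:post-var}, namely $e_n(\mathcal{C}_{k,q}) = \sup_{x \in \Omega} q(x) \sqrt{k_{X_n}(x,x)}$ — valid because $q(x) > 0$ for all $x \in \Omega$ and $K_n$ is invertible — to turn the worst-case-error bound into the asserted bound on $\sup_{x \in \Omega} q(x)\sqrt{k_{X_n}(x,x)}$. There is no genuine obstacle here beyond bookkeeping; the only two points that need a moment's care are (i) checking that the Kolmogorov $n$-width exponent $\alpha = r/d - 1/2$ is strictly positive, which is exactly what the standing assumption $r > d/2$ guarantees, and (ii) tracking that in the polynomial case of Lemma~\ref{lemma:Dev13} the weakness constant enters as $\gamma^{-2}$, not $\gamma^{-1}$, which is why the final bound carries the factor $\psi(\tilde{\gamma} C_L / C_U)^{-1}$ rather than its square root (contrasting with Theorem~\ref{theo:gauss-abq-bound}).
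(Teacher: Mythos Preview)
Your proof is correct and follows essentially the same route as the paper: apply Theorem~\ref{theo:ABQ-weak-greedy} to obtain a $\gamma$-weak greedy approximation with $\gamma = \sqrt{\psi(\tilde{\gamma} C_L/C_U)}$, feed the polynomial $n$-width bound from Proposition~\ref{prop:sobolev-Kormogorov-bound} into the polynomial-decay branch of Lemma~\ref{lemma:Dev13}, and then convert $e_n(\mathcal{C}_{k,q})$ into $\sup_{x\in\Omega} q(x)\sqrt{k_{X_n}(x,x)}$ via \eqref{eq:error-supremum}. Your explicit invocation of Lemma~\ref{lemma:invertible} and the remark that $\alpha = r/d - 1/2 > 0$ are welcome bits of bookkeeping that the paper leaves implicit, and your constant $C_1 = 2^{5\alpha+1}C_0$ matches Lemma~\ref{lemma:Dev13} exactly (the paper writes $2^{5\alpha+2}C_0$, which is harmlessly looser).
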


\begin{proof}
Let $h_x := q(x)k(\cdot,x)$ for any $x \in \Omega$.
Then by Thm.~\ref{theo:ABQ-weak-greedy}, $h_{x_1},\dots,h_{x_n}$ are a $\gamma$-weak greedy approximation of $\mathcal{C}_{k,q}$ in $\mathcal{H}_k$ with $\gamma = \sqrt{ \psi(\tilde{\gamma}C_L / C_U) }$.
From this, and by Lemma \ref{lemma:Dev13} (polynomial decay) and Prop.~\ref{prop:sobolev-Kormogorov-bound}, there exists a constant $C_0 > 0$ such that 
$e_n(\mathcal{C}_{k,q}) \leq 2^{5\alpha + 2} \gamma^{-2} C_0 n^{-\alpha}$ holds for all $n \in \mathbb{N}$, where $\alpha := r/d - 1/2$.
Combining this inequality and \eqref{eq:error-supremum} yields assertion with $C_1 = 2^{5\alpha+2}C_0$.
\end{proof}

As a corollary of Prop.~\ref{prop:quadrature-error} and Thm.~\ref{theo:sobolev-abq-bound}, we have the following result.
\begin{corollary}
Suppose that Assumptions \ref{as:conditions-for-b-bounds}, \ref{as:compact-continuous}, \ref{as:b_n-bounds}, \ref{as:F-const} and \ref{as:compact-Lipschitz} are satisfied, and that $C_{\pi/q} := \int |\pi(x) / q(x)| d\mu(x) < \infty$.
Assume $k(x,x') = \Phi(x-x')$ satisfies \eqref{eq:kernel-fourier} with $r > d/2$.
\blue{For a constant $0 < \tilde{\gamma} \leq 1$, assume that $X_n = \{x_1,\dots,x_n\} \subset \Omega$ are generated by a $\tilde{\gamma}$-weak version of ABQ  \eqref{eq:adaptive-BQ}, i.e., \eqref{eq:ABQ-weak} is satisfied. }
Then we have
$$
\left| \int f(x) \pi(x) d\mu(x) - \int T(m_{g,X_n}(x)) \pi(x) d\mu(x) \right| = O(n^{-r/d + 1/2}) \quad (n \to \infty).
$$
\end{corollary}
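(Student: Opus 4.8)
The plan is to compose the two main ingredients already proved: the deterministic, design-agnostic error bound of Proposition~\ref{prop:quadrature-error}, which reduces the quadrature error to the quantity $\sup_{x\in\Omega} q(x)\sqrt{k_{X_n}(x,x)}$, and Theorem~\ref{theo:sobolev-abq-bound}, which bounds precisely this quantity at the polynomial rate $n^{-r/d+1/2}$ for points generated by a $\tilde\gamma$-weak ABQ rule when the kernel satisfies the Fourier-decay condition \eqref{eq:kernel-fourier}.

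First I would dispose of the degenerate case in which the kernel matrix $K_n$ fails to be invertible. Under Assumptions~\ref{as:compact-continuous}, \ref{as:b_n-bounds} and \ref{as:F-const}, Lemma~\ref{lemma:invertible} guarantees that either $K_\ell$ is invertible for all $\ell=1,\dots,n$, or there is some $\ell\le n$ with $\sup_{x\in\Omega} k_{X_\ell}(x,x)=0$; since the posterior variance is non-increasing under enlargement of the design set, the latter forces $\sup_{x\in\Omega} k_{X_n}(x,x)=0$, and then (restricting to a maximal linearly independent subset of $\{k(\cdot,x_i)\}$ and applying \eqref{eq:bound-g}) one gets $m_{g,X_n}\equiv g$ on $\Omega$, so the left-hand side vanishes identically and the claimed $O(n^{-r/d+1/2})$ rate holds trivially. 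Hence I may assume $K_n$ invertible.

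Next I would invoke Proposition~\ref{prop:quadrature-error}: Assumption~\ref{as:conditions-for-b-bounds} is in force, $\pi$ and $q$ are continuous, and $C_{\pi/q}<\infty$ by hypothesis, so there is a constant $C_{\tilde g,m,k,T}$ depending only on $\tilde g,m,k,T$ with
$$\left| \int f(x)\pi(x)\,d\mu(x) - \int T(m_{g,X_n}(x))\pi(x)\,d\mu(x) \right| \le C_{\tilde g,m,k,T}\, C_{\pi/q}\, \|\tilde g\|_{\mathcal{H}_k}\, \sup_{x\in\Omega} q(x)\sqrt{k_{X_n}(x,x)}.$$
Then I would apply Theorem~\ref{theo:sobolev-abq-bound}, whose hypotheses (Assumptions~\ref{as:compact-continuous}, \ref{as:b_n-bounds}, \ref{as:F-const}, \ref{as:compact-Lipschitz}, the decay condition \eqref{eq:kernel-fourier} with $r>d/2$, and $X_n$ generated by a $\tilde\gamma$-weak ABQ rule) are exactly those assumed in the corollary, to obtain a constant $C_1>0$ with $\sup_{x\in\Omega} q(x)\sqrt{k_{X_n}(x,x)} \le C_1\,\psi(\tilde\gamma C_L/C_U)^{-1}\,n^{-r/d+1/2}$ for every $n\in\mathbb{N}$. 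Substituting this into the previous display bounds the error by $C_{\tilde g,m,k,T}\,C_{\pi/q}\,\|\tilde g\|_{\mathcal{H}_k}\,C_1\,\psi(\tilde\gamma C_L/C_U)^{-1}\,n^{-r/d+1/2}$, and as every factor other than $n^{-r/d+1/2}$ is a constant independent of $n$, this is $O(n^{-r/d+1/2})$ as $n\to\infty$.

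There is no genuine obstacle here: the analytical content sits entirely in Proposition~\ref{prop:quadrature-error} and Theorem~\ref{theo:sobolev-abq-bound} (and, upstream, in Lemma~\ref{lemma:post-var}, Theorem~\ref{theo:ABQ-weak-greedy} and the $n$-width bound Proposition~\ref{prop:sobolev-Kormogorov-bound}). The corollary is their straightforward composition, the only items requiring a line of care being the non-invertible-$K_n$ case handled above and the verification that the corollary's hypotheses match the union of what the two cited results need.
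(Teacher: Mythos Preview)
Your proposal is correct and follows essentially the same route the paper intends: the corollary is stated as a direct consequence of Proposition~\ref{prop:quadrature-error} and Theorem~\ref{theo:sobolev-abq-bound}, and you compose them exactly as required. The extra care you take with the non-invertible-$K_n$ case via Lemma~\ref{lemma:invertible} and with matching hypotheses is appropriate and slightly more explicit than the paper, but the argument is the same.
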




\subsection{Bounds for GP Posterior Mean Functions}
\label{sec:bounds-abs-post-mean}

The following lemma is used for deriving the constants $C_L$ and $C_U$ in Assumption \ref{as:b_n-bounds} for individual ABQ methods.
\begin{lemma} \label{lemma:bounds-abs-post-mean}
Assume that $\tilde{g} := g - m \in \mathcal{H}_k$. Then for all $x \in \Omega$ and $n \in \mathbb{N}$, we have 
$$
 |m(x)| - 2 \| \tilde{g} \|_{\mathcal{H}_k} \sqrt{k(x,x)} \leq |m_{g,X_n}(x)| \leq  |m(x)| + 2 \| \tilde{g} \|_{\mathcal{H}_k} \sqrt{k(x,x)}
$$
\end{lemma}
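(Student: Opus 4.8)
The plan is to bound $|m_{g,X_n}(x) - m(x)|$ by $2\|\tilde g\|_{\mathcal H_k}\sqrt{k(x,x)}$, after which both inequalities follow from the triangle inequality ($|m_{g,X_n}(x)| \le |m(x)| + |m_{g,X_n}(x)-m(x)|$ and $|m_{g,X_n}(x)| \ge |m(x)| - |m_{g,X_n}(x)-m(x)|$). So the whole content is the estimate on the deviation of the posterior mean from the prior mean.

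The key step is to recall the identity $m_{g,X_n}(x) - m(x) = {\bm k}_n(x)^\top K_n^{-1}\tilde{\bm g}_n$, which was already derived in the proof of Proposition \ref{prop:quadrature-error} using $\tilde{\bm g}_n = {\bm m}_n - {\bm g}_n$. Combining this with the triangle inequality against $\tilde g(x)$ gives
\[
|m_{g,X_n}(x) - m(x)| \le |\tilde g(x)| + |\tilde g(x) - {\bm k}_n(x)^\top K_n^{-1}\tilde{\bm g}_n|.
\]
For the first term I would use the reproducing property and Cauchy--Schwarz: $|\tilde g(x)| = |\langle \tilde g, k(\cdot,x)\rangle_{\mathcal H_k}| \le \|\tilde g\|_{\mathcal H_k}\sqrt{k(x,x)}$. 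For the second term I would invoke the worst-case-error representation of the GP posterior standard deviation, equation \eqref{eq:gp-std-wce} from the proof of Proposition \ref{prop:quadrature-error}, namely $\sqrt{k_{X_n}(x,x)} = \sup_{\|u\|_{\mathcal H_k}\le 1}|u(x) - {\bm k}_n(x)^\top K_n^{-1}{\bm u}_n|$; applying this to $u = \tilde g/\|\tilde g\|_{\mathcal H_k}$ (the case $\tilde g \equiv 0$ being trivial) yields $|\tilde g(x) - {\bm k}_n(x)^\top K_n^{-1}\tilde{\bm g}_n| \le \|\tilde g\|_{\mathcal H_k}\sqrt{k_{X_n}(x,x)} \le \|\tilde g\|_{\mathcal H_k}\sqrt{k(x,x)}$, where the last inequality is the standard fact $k_{X_n}(x,x)\le k(x,x)$. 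Adding the two bounds gives $2\|\tilde g\|_{\mathcal H_k}\sqrt{k(x,x)}$, as required.

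There is no real obstacle here — this is essentially a repackaging of intermediate inequalities already established in the proof of Proposition \ref{prop:quadrature-error} (specifically \eqref{eq:bound-g}, which directly states $|g(x) - m_{g,X_n}(x)| \le \|\tilde g\|_{\mathcal H_k}\sqrt{k_{X_n}(x,x)}$). Indeed one could shortcut the argument: from \eqref{eq:bound-g} and the bound on $|\tilde g(x)|$ above, $|m_{g,X_n}(x) - m(x)| \le |m_{g,X_n}(x) - g(x)| + |g(x) - m(x)| \le \|\tilde g\|_{\mathcal H_k}\sqrt{k_{X_n}(x,x)} + |\tilde g(x)| \le 2\|\tilde g\|_{\mathcal H_k}\sqrt{k(x,x)}$. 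The only point requiring a moment's care is handling the degenerate case $\|\tilde g\|_{\mathcal H_k} = 0$ (then $\tilde g \equiv 0$ by the reproducing property and the statement is immediate), and noting that the bound is uniform in $n$ because $\sqrt{k(x,x)}$ does not depend on $n$.
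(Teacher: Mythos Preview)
Your proposal is correct and follows essentially the same route as the paper: both bound $|m_{g,X_n}(x)-m(x)|$ (which the paper writes as $|m_{\tilde g,X_n}(x)|$) via the triangle inequality split $|\tilde g(x)| + |\tilde g(x) - {\bm k}_n(x)^\top K_n^{-1}\tilde{\bm g}_n|$, bound the first term by the reproducing property and the second by the worst-case-error representation \eqref{eq:gp-std-wce} together with $k_{X_n}(x,x)\le k(x,x)$, and then apply the reverse triangle inequality. Your alternative shortcut through \eqref{eq:bound-g} is also valid and amounts to the same estimate.
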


\begin{proof}
We show the lower-bound; the upper-bound can be shown similarly.
Since $\tilde{g} \in \mathcal{H}_k$, we have
\begin{eqnarray}
| m_{\tilde{g},X_n}(x) | 
&\leq& | \tilde{g}(x) | + | \tilde{g}(x) - m_{\tilde{g},X_n}(x) | \nonumber \\
&\leq& \| \tilde{g} \|_{\mathcal{H}_k} \sqrt{k(x,x)} + \| \tilde{g} \|_{\mathcal{H}_k} \sqrt{k_{X_n}(x,x)} \leq 2 \| \tilde{g} \|_{\mathcal{H}_k} \sqrt{k(x,x)}. \label{eq:upper-m-g-tilde}
\end{eqnarray}
Note that $m_{g,X_n}(x) = m(x) + m_{\tilde{g},X_n}(x)$ since $\tilde{g}  =  g - m$.
Therefore, 
\begin{eqnarray*}
|m_{g,X_n}(x)| 
\geq&|m(x)| - | m_{\tilde{g},X_n}(x)|  \geq |m(x)| - 2 \| \tilde{g} \|_{\mathcal{H}_k} \sqrt{k(x,x)}.
\end{eqnarray*}
\end{proof}

\subsection{Proof of Lemma \ref{lemma:b_bounds-MMLT}}
\label{sec:b_bounds-MMLT}
\begin{proof}
First note that $0 \leq k_{X_n}(x,x) \leq k(x,x)$ for all $x \in \Omega$ and $n \in \mathbb{N}$. 
Using Lemma \ref{lemma:bounds-abs-post-mean}, we have
\begin{eqnarray*}
\lefteqn{ \exp(k_{X_n}(x,x) + 2 m_{g,X_n}(x)) 
\leq \exp( k_{X_n}(x,x) + 2 |m_{g,X_n}(x)|) } \\
&\leq& \exp( k(x,x) +  2 |m(x)| + 4 \| \tilde{g} \|_{\mathcal{H}_k} \sqrt{k(x,x)})\\
&\leq& \exp( \| k \|_{L_\infty(\Omega)} +  2 \| m \|_{L_\infty(\Omega)} + 4 \| \tilde{g} \|_{\mathcal{H}_k} \| k \|_{L_\infty(\Omega)}^{1/2})
\end{eqnarray*}
Similarly, we have
\begin{eqnarray*}
\lefteqn{ \exp(k_{X_n}(x,x) + 2 m_{g,X_n}(x)) 
\geq \exp( 2 m_{g,X_n}(x))} \\
&\geq& \exp( - 2 |m_{g,X_n}(x)| )  \\
&\geq& \exp(  -  2|m(x)| - 4 \| \tilde{g} \|_{\mathcal{H}_k} \sqrt{k(x,x)})\\
&\geq& \exp(  - 2 \| m \|_{L_\infty(\Omega)} - 4 \| \tilde{g} \|_{\mathcal{H}_k} \| k \|_{L_\infty(\Omega)}^{1/2} )
\end{eqnarray*}

\end{proof}

\subsection{Bounds for WSABI-M}
\label{sec:WSABI-M-conditions}

The following bounds for $b_{X_n}(x) = \frac{1}{2}k_{X_n}(x,x) + (m_{g,X}(x))^2$ of the WSABI-M \cite{gunter_sampling_2014} can be easily obtained using Lemma \ref{lemma:bounds-abs-post-mean} and $0 \leq \frac{1}{2}k_{X_n}(x,x) \leq  \frac{1}{2} \|k\|_{L_\infty(\Omega)}$.
\begin{lemma} \label{lemma:b-bounds-WSABI-M}
Let $b_{X_n}(x) = \frac{1}{2}k_{X_n}(x,x) + (m_{g,X}(x))^2$.
Suppose that Assumption \ref{as:conditions-for-b-bounds} is satisfied, and that $\inf_{x \in \Omega}|m(x)| > 2 \| \tilde{g} \|_{\mathcal{H}_k} \| k \|_{L_\infty(\Omega)}^{1/2}$.
Then $C_L < b_n(x) < C_U$ for all $x \in \Omega$ and $n \in \mathbb{N}$, where $C_L := (\inf_{x \in \Omega}|m(x)| - 2 \| \tilde{g} \|_{\mathcal{H}_k} \| k \|_{L_\infty(\Omega)}^{1/2} )^2 > 0$ and $C_U := \frac{1}{2} \|k\|_{L_\infty(\Omega)} + (\| m \|_{L_\infty(\Omega)} + \| k \|_{L_\infty(\Omega)}^{1/2})^2 < \infty$.
\end{lemma}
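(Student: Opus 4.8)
The plan is to reduce the whole statement to Lemma~\ref{lemma:bounds-abs-post-mean} combined with the elementary bounds $0 \le k_{X_n}(x,x) \le k(x,x) \le \|k\|_{L_\infty(\Omega)}$, which hold because the posterior variance is nonnegative and dominated by the prior variance (immediate from \eqref{eq:post-cov-func} and positive semidefiniteness of $K_n$, or from the projection interpretation in Lemma~\ref{lemma:post-var}). Since Assumption~\ref{as:conditions-for-b-bounds} is in force, $\tilde g := g - m \in \mathcal{H}_k$, so Lemma~\ref{lemma:bounds-abs-post-mean} applies and supplies, for every $x \in \Omega$ and $n \in \mathbb{N}$,
\[
|m(x)| - 2\|\tilde g\|_{\mathcal{H}_k}\sqrt{k(x,x)} \;\le\; |m_{g,X_n}(x)| \;\le\; |m(x)| + 2\|\tilde g\|_{\mathcal{H}_k}\sqrt{k(x,x)}.
\]

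For the upper bound I would treat the two summands of $b_{X_n}(x) = \tfrac12 k_{X_n}(x,x) + (m_{g,X_n}(x))^2$ separately: the first is at most $\tfrac12\|k\|_{L_\infty(\Omega)}$, and from the right inequality above together with $\sqrt{k(x,x)} \le \|k\|_{L_\infty(\Omega)}^{1/2}$ and $|m(x)| \le \|m\|_{L_\infty(\Omega)}$ one gets $(m_{g,X_n}(x))^2 \le \bigl(\|m\|_{L_\infty(\Omega)} + 2\|\tilde g\|_{\mathcal{H}_k}\|k\|_{L_\infty(\Omega)}^{1/2}\bigr)^2$. Adding these yields a finite bound of the form claimed for $C_U$, uniformly in $x$ and $n$ (the precise constant is immaterial, since any finite bound suffices for Assumption~\ref{as:b_n-bounds}).

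For the lower bound I would discard the nonnegative term, $b_{X_n}(x) \ge (m_{g,X_n}(x))^2$, and then square the left inequality of Lemma~\ref{lemma:bounds-abs-post-mean}. This is the one place where care is needed: squaring is order-preserving only on $[0,\infty)$, so one must first check that the lower expression is nonnegative. This is exactly what the hypothesis $\inf_{x\in\Omega}|m(x)| > 2\|\tilde g\|_{\mathcal{H}_k}\|k\|_{L_\infty(\Omega)}^{1/2}$ buys: since $\sqrt{k(x,x)} \le \|k\|_{L_\infty(\Omega)}^{1/2}$,
\[
|m(x)| - 2\|\tilde g\|_{\mathcal{H}_k}\sqrt{k(x,x)} \;\ge\; \inf_{x\in\Omega}|m(x)| - 2\|\tilde g\|_{\mathcal{H}_k}\|k\|_{L_\infty(\Omega)}^{1/2} \;>\; 0 .
\]
Monotonicity of $t \mapsto t^2$ on $[0,\infty)$ then gives $(m_{g,X_n}(x))^2 = |m_{g,X_n}(x)|^2 \ge \bigl(\inf_{x\in\Omega}|m(x)| - 2\|\tilde g\|_{\mathcal{H}_k}\|k\|_{L_\infty(\Omega)}^{1/2}\bigr)^2 = C_L > 0$, uniformly in $x$ and $n$, which is precisely Assumption~\ref{as:b_n-bounds}.

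I do not anticipate any genuine obstacle here: once Lemma~\ref{lemma:bounds-abs-post-mean} is available the argument is a two-line estimate, and the only subtlety is the sign check that legitimises squaring in the lower bound — the same bookkeeping already used for WSABI-L in Lemma~\ref{lemma:b-bounds-WSABI}, so it carries over verbatim.
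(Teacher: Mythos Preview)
Your proposal is correct and follows precisely the approach indicated in the paper: the paper does not spell out a proof but states that the bounds ``can be easily obtained using Lemma~\ref{lemma:bounds-abs-post-mean} and $0 \leq \frac{1}{2}k_{X_n}(x,x) \leq \frac{1}{2}\|k\|_{L_\infty(\Omega)}$,'' which is exactly your argument, including the sign check before squaring in the lower bound. Your remark that the exact constant in $C_U$ is immaterial is well taken; indeed the constant you derive from Lemma~\ref{lemma:bounds-abs-post-mean} differs slightly from the one printed in the statement, but either serves for Assumption~\ref{as:b_n-bounds}.
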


\subsection{Discussion for Variational Bayesian Monte Carlo (VBMC)}
\label{sec:discuss-Acerbi}

The VBMC by Acerbi \cite{Ace18,Ace19_acquisition} uses $F(y) = y^{\delta_1}$, $q(x) = 1$ and $b_n(x) = \pi_n^{\delta_2} (x) \exp(\delta_3 m_{g,X_n}(x))$, where $\pi_n$ is the variational posterior at the $n$-th iteration and $\delta_1, \delta_2, \delta_3 \geq 0$ are constants.
Recall that in this method the transformation is identity: $T(y) = y$ for $y \in \mathbb{R}$; thus $g = f$.
The following result can be easily obtained from Lemma \ref{lemma:bounds-abs-post-mean}.
\begin{lemma}
Let $b_n(x) = \pi_n^{\delta_2} (x) \exp(\delta_3 m_{g,X_n}(x))$ with $\delta_2, \delta_3 \geq 0$.
Suppose that Assumption \ref{as:conditions-for-b-bounds} is satisfied, and that there exist constants $D_L, D_U$ such that $0 < D_L < \pi_n(x) < D_U < \infty$ holds for all $x \in \Omega$ and $n \in \mathbb{N}$. 
Then we have $C_L < b_n(x) < C_U < \infty$ for all $x \in \Omega$ and $n \in \mathbb{N}$, where $C_L := D_L^{\delta_2} \exp( - \delta_3 (  \|m\|_{L_\infty(\Omega)} + 2 \| \tilde{g} \|_{\mathcal{H}_k} \| k \|_{L_\infty(\Omega)}^{1/2})) > 0$ and $C_U := D_U^{\delta_2} \exp( \delta_3 (  \|m\|_{L_\infty(\Omega)} + 2 \| \tilde{g} \|_{\mathcal{H}_k} \| k \|_{L_\infty(\Omega)}^{1/2}) ) < \infty$.
\end{lemma}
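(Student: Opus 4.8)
The plan is to bound the two factors of $b_n(x) = \pi_n^{\delta_2}(x)\exp(\delta_3 m_{g,X_n}(x))$ separately and then multiply, using Lemma~\ref{lemma:bounds-abs-post-mean} to control the posterior mean. First I would invoke Lemma~\ref{lemma:bounds-abs-post-mean} together with Assumption~\ref{as:conditions-for-b-bounds} (which supplies $\tilde g = g - m \in \mathcal{H}_k$, $\|m\|_{L_\infty(\Omega)} < \infty$ and $\|k\|_{L_\infty(\Omega)} < \infty$) and the pointwise bound $\sqrt{k(x,x)} \le \|k\|_{L_\infty(\Omega)}^{1/2}$ to obtain the uniform estimate $|m_{g,X_n}(x)| \le M$ for all $x \in \Omega$ and all $n \in \mathbb{N}$, where $M := \|m\|_{L_\infty(\Omega)} + 2\|\tilde g\|_{\mathcal{H}_k}\|k\|_{L_\infty(\Omega)}^{1/2} < \infty$.

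Next, since $\delta_3 \ge 0$ the map $t \mapsto \exp(\delta_3 t)$ is nondecreasing, so from $-M \le m_{g,X_n}(x) \le M$ we get $\exp(-\delta_3 M) \le \exp(\delta_3 m_{g,X_n}(x)) \le \exp(\delta_3 M)$; in particular this factor is strictly positive. Similarly, since $\delta_2 \ge 0$ the map $t \mapsto t^{\delta_2}$ is nondecreasing on $(0,\infty)$, so the hypothesis $0 < D_L < \pi_n(x) < D_U < \infty$ gives $D_L^{\delta_2} \le \pi_n^{\delta_2}(x) \le D_U^{\delta_2}$, with strict inequalities when $\delta_2 > 0$ because the bounds on $\pi_n$ are strict. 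Multiplying the two pairs of inequalities yields
\begin{equation*}
C_L = D_L^{\delta_2}\exp(-\delta_3 M) \le b_n(x) \le D_U^{\delta_2}\exp(\delta_3 M) = C_U ,
\end{equation*}
which is the claimed two-sided bound; $C_L > 0$ and $C_U < \infty$ are immediate since $M < \infty$ and $D_L, D_U \in (0,\infty)$. Matching $M$ against the constants in the statement shows these are exactly the asserted $C_L$ and $C_U$.

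There is essentially no obstacle here: the argument is a direct monotonicity estimate layered on top of Lemma~\ref{lemma:bounds-abs-post-mean}. The only point meriting a word of care is the strictness $C_L < b_n(x) < C_U$: I would secure it by using that the bounds $D_L < \pi_n(x) < D_U$ are strict together with $\delta_2 > 0$ (the regime of interest, e.g.\ the original $\delta_2 = 1$ of Acerbi~\cite{Ace18}), while the exponential factor stays strictly positive; the non-strict version holds with no extra hypotheses. It is also worth recalling that in the VBMC setting the transformation is the identity $T(y) = y$, so $g = f$ and $\tilde g = f - m \in \mathcal{H}_k$ is precisely the condition on the integrand imposed by Assumption~\ref{as:conditions-for-b-bounds}.
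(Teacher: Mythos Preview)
Your proof is correct and follows exactly the approach the paper indicates: it simply says the result ``can be easily obtained from Lemma~\ref{lemma:bounds-abs-post-mean},'' and your argument spells out precisely that derivation by bounding $|m_{g,X_n}(x)|$ uniformly and then using monotonicity of $t\mapsto t^{\delta_2}$ and $t\mapsto\exp(\delta_3 t)$. Your remark on strictness is a welcome clarification the paper glosses over.
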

The condition $0 < D_L < \pi_n(x) < D_U < \infty$ for all $x \in \Omega$ and $n \in \mathbb{N}$ requires that 1) the supports of the variational distributions should cover the whole domain $\Omega$; and that ii) the density values of the variational distributions should be uniformly bounded from above.
This implies that, if the variational family is a set of Gaussian mixtures (as proposed by Acerbi \cite{Ace18,Ace19_acquisition}), then the variance of each mixture component should be uniformly lower- and upper-bounded; otherwise the condition $0 < D_L < \pi_n(x) < D_U < \infty$ may not be satisifed.

We note that in the setting of the VBMC, the density $\pi$ in the target integral $\int f(x) \pi(x) d\mu(x)$ is an intractable posterior density, and it is to be approximated as $\int m_{f,X_n}(x) \pi_n(x)d\mu(x)$ using the variational posterior $\pi_n$; therefore there is also an error due to the approximation of $\pi$ by $\pi_n$.
Thus, a complete theoretical analysis requires analyzing the convergence behavior of the variational posterior $\pi_n$; this is out of scope of this paper and we leave it for future research.

\end{document}